\documentclass[preprint,12pt]{elsarticle}

\usepackage{amssymb}
\usepackage{amsmath}
\usepackage{amsthm} 
\usepackage{mathrsfs}
\usepackage{mathtools}
\usepackage{bm}

\usepackage{xcolor}

\usepackage{booktabs}
\usepackage{natbib} 
\usepackage{comment}

\usepackage{graphicx}    
\usepackage{subcaption} 

\usepackage{algorithm}
\usepackage{algorithmic}

\usepackage{threeparttable}

\usepackage{makecell}
\usepackage{dsfont}
\usepackage{wrapfig}
\usepackage{subcaption}
\usepackage{changepage}

\newtheorem{theorem}{Theorem}
\newtheorem{proposition}{Proposition}
\newtheorem{lemma}{Lemma}

\theoremstyle{remark}
\newtheorem{remark}{Remark}
\newtheorem{assumption}{Assumption}

\newcommand{\gradw}{\nabla_{\omega}}

\journal{Journal of Computational Physics}

\begin{document}

\begin{frontmatter}

\title{Alternating Levenberg--Marquardt Training of Physics-Informed Neural Networks with Fourier-Enhanced Features}

\author{Yulun Wu, Matthieu Barreau, Miguel Aguiar,  \&  Karl H.~Johansson} %

\affiliation{organization={Division of Decision and Control Systems,
            School of Electrical Engineering and Computer Science,
            KTH Royal Institute of Technology},
            city={Stockholm},
            postcode={100 44},
            country={Sweden}}

\begin{abstract}
Physics-informed neural networks (PINNs) often fail to accurately resolve partial differential equations (PDEs) with high-frequency or multi-scale solutions, as well as strongly nonlinear problems. Two factors underlie this difficulty: spectral bias, the tendency of neural networks to underfit high-frequency features; and representation--coefficient coupling, the entanglement of representation learning and coefficient fitting within a single nonconvex optimization objective. In this work, we propose the Fourier-enhanced alternating Levenberg--Marquardt PINN (FALM-PINN), an optimization framework that decouples representation learning from coefficient fitting. The upper-level problem learns a Fourier-enhanced basis that enriches the latent space with high-frequency components, while the lower-level problem resolves the coupling by fitting the projection coefficients on this basis, solving a nonlinear least-squares problem with the Levenberg--Marquardt algorithm. The framework applies to general nonlinear and coupled PDE systems, and reduces to a single-step convex optimization problem for linear PDEs. We prove global convergence of the alternating training scheme in both cases. Numerical examples on multiple challenging high-frequency and nonlinear PDEs show that FALM-PINN achieves relative $L^2$ errors up to two orders of magnitude lower than state-of-the-art baselines.
\end{abstract}

\begin{keyword}
Physics-informed neural network \sep 
Spectral bias \sep
Levenberg--Marquardt algorithm \sep
Alternating optimization

\end{keyword}

\end{frontmatter}

\section{Introduction}
Partial differential equations (PDEs) are fundamental tools for physical modeling, describing phenomena ranging from wave propagation and fluid flow to quantum dynamics. Since analytical solutions are rarely available, classical methods are widely used to provide accurate approximations. These include finite difference~\citep{leveque2007finite} and finite element methods~\citep{zienkiewicz1977finite,ciarlet2002finite,brenner2008mathematical}, spectral methods~\citep{boyd2001chebyshev,canuto2006spectral,shen2011spectral}, multi-scale methods~\citep{weinan2003heterognous,abdulle2012heterogeneous,efendiev2009multiscale}, and specialized approaches for highly oscillatory problems~\citep{iserles2005efficient}. Although accurate and well understood, these methods often rely on carefully constructed meshes or basis functions, which can be difficult to design for complex geometries, and their cost grows rapidly in high dimensions. These limitations have motivated learning-based solvers, including physics-informed approaches that incorporate the governing equations directly into neural network training.

Physics-Informed Neural Networks (PINNs) have become one of the most widely used frameworks for solving PDEs \citep{raissi2019physics,karniadakis2021physics}. A standard PINN represents the solution using a multi-layer perceptron (MLP) and obtains its derivatives through automatic differentiation. This enables the encoding of the governing equation, along with boundary and initial conditions, into a composite loss. This mesh-free formulation applies to both forward and inverse problems and extends naturally to challenging settings such as high-dimensional PDEs~\citep{hu2024tackling}, complex geometries~\citep{costabal2024delta}, flow-field reconstruction from sparse or noisy data~\citep{raissi2020hidden,jin2021nsfnets}, and inverse design in nano-optics and metamaterials~\citep{chen2020physics}. 
More broadly, physics-informed learning has been integrated with Bayesian formulations for uncertainty quantification under noisy data~\citep{yang2021b}, and with diffusion models that impose PDE constraints during generative sampling~\citep{bastek2025physics,wang2025source,shu2023physics}. It has also been extended to infinite-dimensional, operator-learning settings, such as Fourier Neural Operator~\citep{li2021fourier} and DeepONet~\citep{lu2021learning}.

Despite this progress, several well-known challenges still limit the effectiveness of PINNs. A prominent issue is \textit{spectral bias}, the tendency of neural networks to learn low-frequency components first, which impedes the recovery of high-frequency and multi-scale solutions~\citep{rahaman2019spectral,xu2025understanding}. Training is also difficult for stiff, nonlinear, and chaotic problems, where standard PINNs may converge slowly or produce inaccurate solutions despite attaining a small training loss~\citep{krishnapriyan2021characterizing,wang2024respecting}. To address these challenges, a broad range of strategies has been developed to improve PINN trainability.

Loss-weighting methods adaptively balance the terms of the non-convex composite objective. For example, Wang et al.~\citep{wang2022and} analyze the neural tangent kernel to equalize the convergence rates of the physics and data loss terms. Pointwise weighting methods further adapt the contribution of individual points, loss-attentional PINNs employ an auxiliary network that adaptively emphasizes high-error points~\citep{song2024loss}; Anagnostopoulos et al.~\citep{anagnostopoulos2024residual} propose a gradient-free, residual-based attention scheme that updates pointwise weights directly from residual magnitudes; Si and Yan~\citep{si2026convolution} extend pointwise weighting to local neighborhoods from a primal-dual optimization perspective; and Zhao et al.~\citep{zhao2026casual} impose temporal causality through non-negative, monotonically decreasing pointwise weights that are decoupled from the collocation-point arrangement. A complementary direction improves the sampling of collocation points. Wu et al.~\citep{wu2023comprehensive} systematically compare non-adaptive and residual-based adaptive sampling strategies; Gao and Wang~\citep{gao2023active} select informative points by active learning for high-dimensional nonlinear PDEs; and Lau et al.~\citep{lau2024pinnacle} jointly select collocation and experimental points.

Beyond reweighting and resampling, another line of work enriches the network representation itself through input embeddings, activation functions, and architectural modifications. Fourier feature embeddings enrich the input representation with high-frequency components~\citep{tancik2020fourier}, broadening the frequency spectrum accessible to the network therefore potentially alleviating spectral bias~\citep{wang2021eigenvector}. At the level of activation functions, Sitzmann et al.~\citep{sitzmann2020implicit} adopt sinusoidal activations to represent oscillatory structures; Si et al.~\citep{si2025complexphysicsinformedneuralnetwork} introduce learnable Cauchy activations to enhance the approximation of stiff and high-frequency solutions; and Zeng and Zhu~\citep{zeng2026nurbs} introduce a sine-enhanced adaptive activation for solving PDEs on irregular domains. At the architectural level, Zhao et al.~\citep{ZhaoEtAl24} introduce transformer backbones that capture temporal dependencies in evolution equations; Wang et al.~\citep{wang2025kolmogorov} replace conventional MLP layers with Kolmogorov--Arnold layers built on learnable spline functions; Jagtap and Karniadakis~\citep{jagtap2020extended} decompose the space-time domain and assign a separate subnetwork to each subdomain; and Wang et al.~\citep{wang2024piratenets} stabilize the training of deep PINNs through adaptive residual connections combined with Fourier feature embeddings.

However, accurately resolving highly oscillatory or strongly multi-scale solutions remains challenging. Fourier feature and activation-based methods substantially alleviate spectral bias, but their accuracy may still plateau when the target solution spans a very wide frequency band or contains sharp structures, where even carefully tuned PINNs can struggle to achieve high precision~\citep{mustajab2024physics,wang2024multi}. More fundamentally, these advanced PINN variants train all network parameters jointly by minimizing a single nonconvex objective, in which feature learning and coefficient fitting remain entangled. From an adaptive-basis viewpoint, the effective basis changes throughout training while the output layer coefficients are simultaneously adjusted~\citep{cyr2020robust}. This coupling can induce slow convergence, reduce robustness, and complicate theoretical analysis.

Several methods mitigate this coupling by separating basis construction from coefficient fitting. Extreme learning machines~\citep{huang2006extreme} and their physics-informed variants~\citep{dwivedi2020physics,dong2021local} fix a randomly sampled feature basis and recover the output weights by linear least squares, which makes the coefficient problem convex and efficient to solve. However, the fixed basis limits approximation accuracy on complex or high-frequency solutions. Closest to our work, the iterative PINN with Fourier-enhanced features (IFeF-PINN)~\citep{wu2025iterative} makes the basis adaptive by applying a Fourier feature mapping to the last hidden layer, and is trained iteratively by updating the basis while solving for the output coefficients. IFeF-PINN is developed for scalar, linear PDEs, for which the coefficient subproblem is convex and admits a closed-form solution; extending the decoupled framework to nonlinear PDE systems, where neither the closed-form update nor the convexity-based convergence analysis carries over, remains open.

To bridge this gap, we propose the Fourier-enhanced alternating Levenberg--Marquardt PINN (FALM-PINN), a training framework that decouples adaptive basis learning from coefficient fitting and applies to general nonlinear and coupled PDE systems. Training alternates between an upper-level problem that updates the hidden layer parameters generating the Fourier-enhanced basis, and a lower-level problem that fits the projection coefficients on this basis using the Levenberg--Marquardt (LM) method~\citep{levenberg1944method,marquardt1963algorithm}.
Our contributions are threefold. First, we formulate the lower-level problem as a nonlinear least-squares problem and solve it with the LM algorithm, which replaces the nonconvex problem with a sequence of strictly convex damped subproblems, each admitting a closed-form solution. The framework subsumes IFeF-PINN~\citep{wu2025iterative} as a special case for linear PDEs. Second, we show that the Fourier feature mapping applied to the learned latent representation induces an adaptive stationary kernel (Lemma~\ref{lem:kernel_convergence}), which interprets the adaptive basis learning as a form of deep kernel learning~\citep{wilson2016deep} and offers an explanation for its ability to mitigate spectral bias. Third, we establish global convergence of the alternating scheme without requiring convexity of the lower-level problem (Theorem~\ref{thm:classical}).

The organization of this paper is as follows.
Section~\ref{sec:background} provides a brief overview of PINNs and Fourier feature mapping.
Section~\ref{sec:method} introduces the FALM-PINN framework, including the upper- and lower-level problems, the global alternating algorithm, and its convergence guarantees. 
Section~\ref{sec:experiments} reports numerical experiments on linear and nonlinear benchmarks, and Section~\ref{sec:conclusion} summarizes the conclusions and discusses some promising future directions.

\section{Background}
\label{sec:background}
\subsection{Physics-informed neural networks}
PINNs incorporate the governing equations directly into the training objective, enabling data-driven learning under physical constraints~\citep{raissi2019physics,karniadakis2021physics}. Let $n > 0$ denote the dimension of the spatio-temporal domain, $m > 0$ the number of solution components, $M > 0$ the number of equations in the system, and $k > 0$ the highest order of differentiation in the system. We consider the following PDE system on a bounded domain
$\Omega \subset \mathbb{R}^n$:
\begin{equation}
\begin{aligned}
\label{equ:general_pde}
\mathscr{N}[u](x) &= f(x), \quad x \in \Omega, \\
 \mathscr{B}[u](x) &= g(x), \quad x \in \Gamma \subseteq \partial \Omega,
\end{aligned}
\end{equation}
where $\mathscr{N} \colon \mathcal{U} \to C(\Omega; \mathbb{R}^M)$ is a partial differential operator and $\mathscr{B} \colon \mathcal{U} \to C(\Gamma; \mathbb{R}^m)$ denotes the linear boundary operator, both defined on the classical solution space $\mathcal{U}\coloneqq C^k(\overline{\Omega}; \mathbb{R}^m)$. This formulation accommodates a wide range of boundary conditions, including Dirichlet, Neumann, Robin, and periodic types. The function $u \in \mathcal{U}$ is the solution, $f \in C(\Omega; \mathbb{R}^M)$ is the source term, and $g \in C(\Gamma; \mathbb{R}^m)$ specifies the boundary condition.

A standard PINN approximates $u$ by an MLP, denoted by $\hat{u}$, and is trained to minimize the composite loss:
\begin{equation}
\label{eq:pinn_sampled_loss}
\hat{\mathcal{L}}_\lambda(\hat{u}) = \frac{1}{N_b} \sum_{i=1}^{N_b} \|  \mathscr{B}[\hat{u}](x_b^i) - g(x_b^i)  \|^2 + \frac{\lambda}{N_f} \sum_{i=1}^{N_f} \|\mathscr{N}[\hat{u}](x_f^i)- f(x_f^i)\|^2,
\end{equation}
where $X_b = \{x_b^i\}_{i=1}^{N_b} \subset \Gamma$ and $X_f = \{x_f^i\}_{i=1}^{N_f} \subset \Omega$ are the boundary and collocation point sets, respectively. The physics weight $\lambda > 0$ balances the two loss terms and may be fixed or adapted during training.
To explicitly illustrate the network architecture, consider an $L$-hidden layer network defined as:
\begin{equation}
\label{equ:std_pinn}
\begin{aligned}z^{(0)} &= x, \\
    z^{(l)} &= \alpha \left( W^{(l)} z^{(l-1)} + c^{(l)} \right), \quad 1 \le l \le L, \\
    \hat{u}(x) &= W^{(L+1)} z^{(L)} + c^{(L+1)},
\end{aligned}
\end{equation}
where $\alpha$ is the activation function (e.g., $\tanh$), $W^{(l)} \in \mathbb{R}^{d_l \times d_{l-1}}$ and $c^{(l)} \in \mathbb{R}^{d_l}$ are the weight matrix and bias of layer $l$. All hidden layer parameters are collected into $\omega = \{W^{(l)}, c^{(l)}\}_{l=1}^{L}$, and the final hidden layer output is denoted $z_\omega(x) \triangleq z^{(L)} \in \mathbb{R}^p$, where $p = d_L$ 
is the width of the final hidden layer. This vector serves as a learned nonlinear latent representation of the input. 
Finally, $W^{(L+1)} \in \mathbb{R}^{m \times p}$ and 
$c^{(L+1)} \in \mathbb{R}^m$ are the output layer weight matrix and bias. In standard PINN training, all parameters are optimized jointly by 
minimizing $\hat{\mathcal{L}}_\lambda$ via gradient descent.
  
\subsection{Fourier feature mapping}
\label{sec:ff}
Random Fourier features, introduced by Rahimi and Recht~\citep{rahimi2007random},
approximate stationary kernels via an explicit feature map and have since been adopted across machine learning and scientific computing. The method rests on Bochner's theorem,
which characterizes any continuous, positive-definite stationary kernel as the Fourier transform
of a non-negative measure. Following Tancik et al.~\cite{tancik2020fourier},
the Fourier feature mapping $\gamma_{D}$ of an input coordinate $v\in\mathbb{R}^q$ is
\begin{equation}
  \gamma_{D}(v)=\frac{1}{\sqrt{D}}
  \begin{bmatrix}\cos(2\pi \mathbf{B}_Dv)\\ \sin(2\pi \mathbf{B}_Dv)\end{bmatrix}
  \in\mathbb{R}^{2D},
  \label{eq:ff-mapping}
\end{equation}
where the entries of $\mathbf{B}_D\in\mathbb{R}^{D\times q}$ are drawn
independently from a Gaussian distribution $\mathcal{N}(0,\sigma^{2})$, and the
bandwidth $\sigma>0$ controls the range of representable frequencies. 
Training an MLP on these embedded coordinates corresponds to kernel regression with a stationary kernel of tunable bandwidth~\citep{tancik2020fourier}, which
alleviates spectral bias and enables the network to learn high-frequency
components more efficiently. The same mapping has been applied to
the raw input coordinates in PINNs to
improve the approximation of multi-scale and high-frequency PDEs~\citep{wang2021eigenvector}.

\section{The FALM-PINN framework}
\label{sec:method}
Standard PINN training updates the hidden layer parameters $\omega$ and the linear output layer jointly within a single nonconvex objective, entangling feature learning and coefficient fitting. The latent features $z_\omega$ shift at every iteration while the output layer fits coefficients on a moving target. This coupling, aggravated by the spectral bias that prevents $z_\omega$ from representing high-frequency components, motivates training the two components separately.

A natural idea is to decouple training into two alternating subproblems: an upper-level problem that updates the hidden layer parameters $\omega$ for basis generation, and a lower-level problem that solves for the projection coefficients. This separation allows each problem to be addressed with a dedicated solver. To further mitigate spectral bias, we apply the Fourier feature mapping of Section~\ref{sec:ff} to $z_\omega$ rather than to the raw input coordinates, enriching the basis with high-frequency components on an adaptive feature space. On this enriched basis, the lower-level problem reduces to a regression that admits a closed-form solution for linear PDEs and is solved efficiently by linearization for nonlinear ones. We develop the two subproblems in Sections~\ref{sec:upper-level} and~\ref{sec:lower_level}, combine them into the global alternating algorithm in Section~\ref{sec:global_opt}, and collect the theoretical properties of the framework in Section~\ref{sec:theory}. A fully worked example on a family of scalar conservation laws is provided in \ref{app:example}. The integrated architecture is illustrated in Fig.~\ref{fig:FALM_structure}.

\begin{figure}[htbp]
    \centering
    \includegraphics[width=0.99\linewidth]{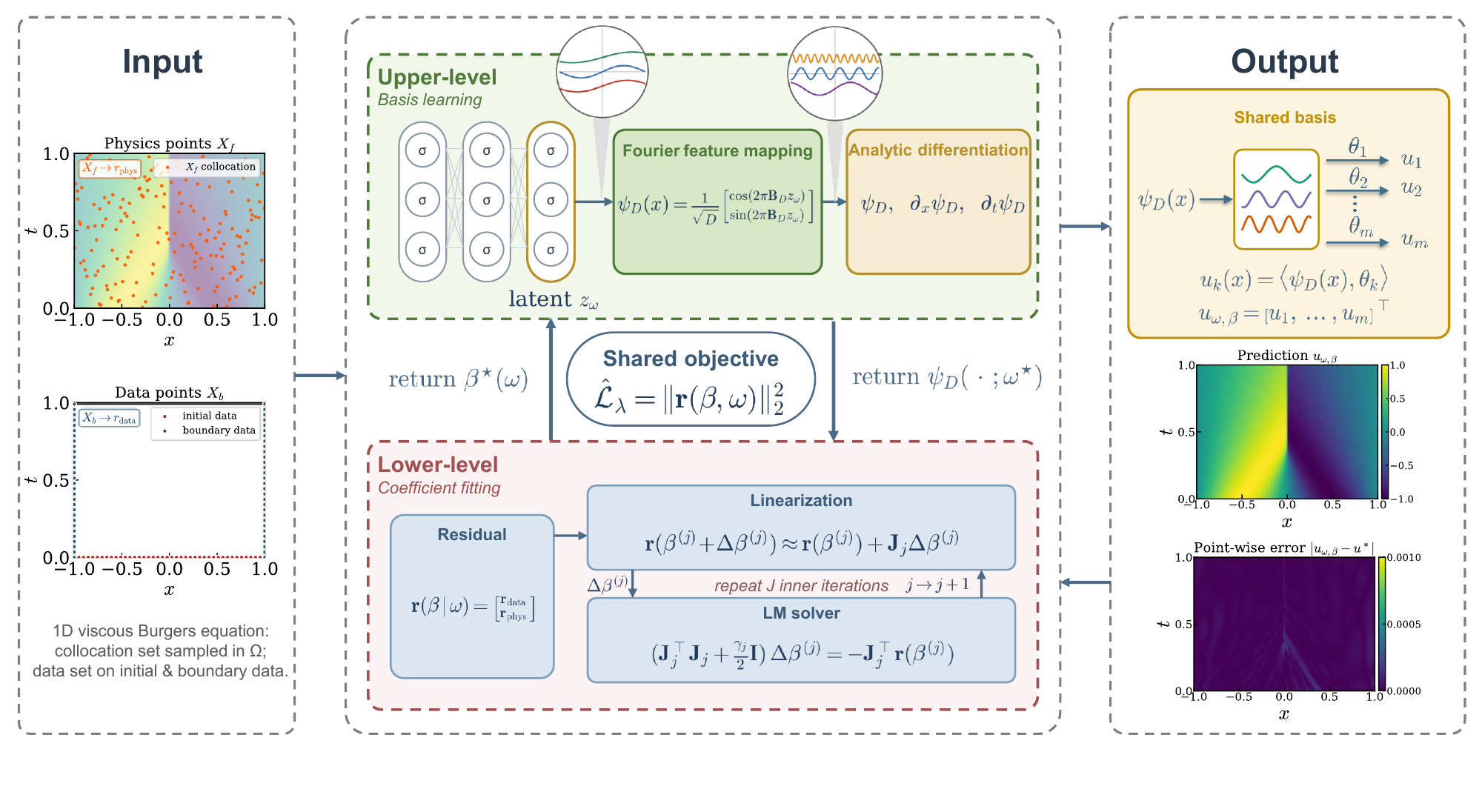}
    \caption{Architecture of FALM-PINN}
    \label{fig:FALM_structure}
\end{figure}

\subsection{The upper-level problem}
\label{sec:upper-level}
The linear output layer in a standard PINN induces a dot-product kernel 
$k(x,x') = z_\omega(x)^\top z_\omega(x')$ in the latent space, whose 
expressivity is determined entirely by the learned features $z_\omega$. 
To enrich this representation, we apply the Fourier feature mapping to
$z_\omega(x)$, replacing the dot-product kernel with a stationary kernel that can represent high-frequency components without increasing the network's depth or width.

Concretely, we apply the Fourier feature mapping $\gamma_D\colon \mathbb{R}^p \to \mathbb{R}^{2D}$ of Section~\ref{sec:ff} to the latent representation, and define the composite feature map $\psi_D\colon \mathbb{R}^n \to \mathbb{R}^{2D}$ as:
\begin{equation}
\label{eq:rff_mapping}
\psi_D(x) = \gamma_D\left(z_\omega(x)\right) 
= \frac{1}{\sqrt{D}} 
\begin{bmatrix} 
\cos\left(2\pi \mathbf{B}_D z_\omega(x)\right) \\ 
\sin\left(2\pi \mathbf{B}_D z_\omega(x)\right) 
\end{bmatrix},
\end{equation}
where $\mathbf{B}_D \in \mathbb{R}^{D \times p}$ is a fixed Fourier feature matrix with entries drawn independently from $\mathcal{N}(0,\sigma^2)$, and $\sigma > 0$ is a bandwidth hyperparameter that governs the frequency range of the induced kernel. We refer to $\psi_D(x)$ as the Fourier-enhanced features. As $D \to \infty$, the inner product $\langle \psi_D(x), \psi_D(x')\rangle$ converges almost surely to a Gaussian RBF kernel evaluated in the latent space (Lemma~\ref{lem:kernel_convergence}).

The approximated solution is $u_{\omega,\beta} = \left[ u_1, \dots, u_m\right]^{\top}$ with $m$ solution components. Projecting each component on the same Fourier-enhanced basis $\psi_D(x) \in \mathbb{R}^{2D}$, the final reconstructions will differ only in their projections' coefficients. The $k$-th component is therefore parameterized as $u_k(x) = \langle \psi_D(x), \theta_k \rangle$ with a trainable coefficient vector $\theta_k \in \mathbb{R}^{2D}$. Collecting the coefficients column-wise and stacking them into a single vector yields
\begin{equation}
\label{equ:u_beta_form}
\begin{aligned}
&\Theta = [\theta_1, \ldots, \theta_m] \in \mathbb{R}^{2D \times m}, \\
    &\beta = \operatorname{vec}(\Theta)=[\theta_1^\top, \theta_2^\top, \ldots, \theta_m^\top]^\top
\in \mathbb{R}^{2mD},\\
&u_{\omega,\beta}(x) =
\Theta^\top \psi_D(x),
\quad x\in\Omega .
\end{aligned}
\end{equation}

\begin{remark}[Connection to deep kernel learning]
The composite map $\psi_D = \gamma_D \circ z_\omega$ can be 
interpreted as a deep kernel learning architecture~\citep{wilson2016deep}. The hidden layers parameterized by $\omega$ learn a nonlinear embedding of the input space, while $\gamma_D$ induces a shift-invariant kernel in this learned latent space. The upper-level update of $\omega$ therefore adapts the kernel geometry to the structure of the PDE solution.
\end{remark}

Formally, with the coefficient vector $\beta$ held fixed, the upper-level problem minimizes the shared training objective over $\omega$ using a gradient-based optimizer (e.g., Adam~\cite{kingma2014adam}):
\begin{equation}
\label{equ:upper_level}
\omega^\star(\beta) = \arg\min_\omega
\hat{\mathcal{L}}_\lambda\bigl(u_{\omega,\beta}\bigr) \coloneqq \arg\min_\omega\mathcal{L}_{\mathrm{upper}}(\omega \mid \beta).
\end{equation}
The lower-level problem minimizes the same objective $\hat{\mathcal{L}}_\lambda$ with $\omega$ fixed, recast as an equivalent nonlinear least-squares problem over $\beta$ (Section~\ref{sec:lower_level}). The two updates are then alternated, as formalized in Section~\ref{sec:global_opt}.

\subsection{The lower-level problem}
\label{sec:lower_level}
With the Fourier-enhanced features $\psi_D$ fixed for a given $\omega$, the lower-level problem finds the optimal projection coefficients $\beta$ that satisfy the governing PDE and boundary conditions. This decoupling exposes a least-squares structure in the objective, which we exploit with the LM solver developed in Section~\ref{sec:lm_linearization}.

The PINN loss in Eq.~\eqref{eq:pinn_sampled_loss} has the form of a nonlinear least-squares problem. To set up the iterative solver of Section~\ref{sec:lm_linearization}, we define a composite residual vector. Recall that the PDE system in Eq.~\eqref{equ:general_pde} comprises $M$ equations and $m$ solution components. We construct the vectorized boundary data
$\bm{y}_b \in \mathbb{R}^{m N_b}$ and source term
$\bm{f} \in \mathbb{R}^{M N_f}$ by stacking over state variables and PDE equations, respectively:
\begin{equation}
\label{eq:yb_def}
\bm{y}_b = \begin{bmatrix} \bm{y}_b^{(1)} \\ \vdots \\ \bm{y}_b^{(m)} \end{bmatrix}, \quad
\bm{f} = \begin{bmatrix} \bm{f}^{(1)} \\ \vdots \\ \bm{f}^{(M)} \end{bmatrix},
\end{equation}
where each sub-vector is defined as $\bm{y}_b^{(k)} = \big[ g_k(x_b^1), \ldots, g_k(x_b^{N_b}) \big]^\top \in \mathbb{R}^{N_b}$ for $k = 1, \dots, m$ and $\bm{f}^{(j)} = \big[ f_j(x_f^1), \ldots, f_j(x_f^{N_f}) \big]^\top \in \mathbb{R}^{N_f}$ for $j = 1, \dots, M$.

Leveraging the linearity of the boundary operator $\mathscr{B}$, we construct a matrix representation of the boundary residual
$\bm{M}_b(\omega) \in \mathbb{R}^{mN_b \times 2mD}$ as:
\begin{equation}
\label{eq:Mb_def}
\bm{M}_b(\omega) = \begin{bmatrix} \bm{M}_b^{(1)}(\omega) \\ \vdots \\ \bm{M}_b^{(m)}(\omega) \end{bmatrix}, \quad
\bm{M}_b^{(k)}(\omega) = \begin{bmatrix}
\mathscr{B}_k[\psi_D](x_b^1)^\top \\
\vdots \\
\mathscr{B}_k[\psi_D](x_b^{N_b})^\top
\end{bmatrix} \in \mathbb{R}^{N_b \times 2mD},
\end{equation}
where $\mathscr{B}_k[\psi_D](x_b^i)$ represents the boundary operator applied to the $k$-th solution component, evaluated at point $x_b^i$, for $i = 1, \dots, N_b$.

For the physics residual, the potential nonlinearity of the PDE operator $\mathscr{N}$ makes the mapping from $\beta$ to the residual nonlinear. We collect the stacked physics terms into the nonlinear operator $\bm{M}_f(\beta \mid \omega) \in \mathbb{R}^{M N_f}$, written in block-vector form:
\begin{equation}
\label{eq:Mf_def}
\bm{M}_f(\beta \mid \omega) = \begin{bmatrix} \bm{M}_f^{(1)}(\beta \mid \omega) \\ \vdots \\ \bm{M}_f^{(M)}(\beta \mid \omega) \end{bmatrix}, \quad
\bm{M}_f^{(j)}(\beta \mid \omega) = \begin{bmatrix}
\mathscr{N}_j[u_{\omega,\beta}](x_f^1) \\
\vdots \\
\mathscr{N}_j[u_{\omega,\beta}](x_f^{N_f})
\end{bmatrix} \in \mathbb{R}^{N_f},
\end{equation}
where $\mathscr{N}_j[u_{\omega,\beta}](x_f^i)$ denotes the $j$-th equation of the PDE system evaluated at $x_f^i$, for $i = 1, \dots, N_f$. 
Substituting these formulations into the PINN objective yields the composite residual vector
\begin{equation}
\label{eq:residual_general}
\bm{r}(\beta \mid \omega)
= \begin{bmatrix}
\bm{r}_{\mathrm{data}}(\beta \mid \omega) \\
\bm{r}_{\mathrm{phys}}(\beta \mid \omega)
\end{bmatrix}
\in \mathbb{R}^{m N_b + M  N_f},
\end{equation}
where
\begin{equation}
\label{eq:residual_components}
\begin{aligned}
\bm{r}_{\mathrm{data}}(\beta \mid \omega)
&= \frac{1}{\sqrt{N_b}}
   \bigl(\bm{M}_b(\omega)\beta - \bm{y}_b\bigr)
   \in \mathbb{R}^{m N_b}, \\
\bm{r}_{\mathrm{phys}}(\beta \mid \omega)
&= \sqrt{\frac{\lambda}{N_f}}
   \bigl(\bm{M}_f(\beta \mid \omega) - \bm{f}\bigr)
   \in \mathbb{R}^{M N_f}.
\end{aligned}
\end{equation}
Under this setting, the lower-level objective is a nonlinear least-squares reformulation of $\hat{\mathcal{L}}_\lambda$ over $\beta$ for any fixed $\omega$:
\begin{equation}
\label{equ:NLS_pinnloss}
\mathcal{L}_{\mathrm{lower}}(\beta \mid \omega)
= \|\bm{r}(\beta\mid\omega)\|_2^2. 
\end{equation}
The optimal coefficient vector $\beta^\star(\omega)$ is then obtained by minimizing:
\begin{equation}
\label{equ:ll_obj}
    \beta^\star(\omega) = \arg\min_\beta 
\hat{\mathcal{L}}_\lambda\bigl(u_{\omega,\beta}\bigr) \coloneqq \arg\min_{\beta}\mathcal{L}_{\mathrm{lower}}(\beta \mid \omega).
\end{equation}
Since $\bm{r}_{\mathrm{phys}}$ is generally nonlinear in $\beta$ through $\bm{M}_f(\beta\mid\omega)$, the lower-level objective is nonconvex and admits no closed-form minimizer. We propose to solve it iteratively in Section~\ref{sec:lm_linearization}.

\subsubsection{Linearization via the Levenberg--Marquardt algorithm}
\label{sec:lm_linearization}
We solve the lower-level problem with the Levenberg--Marquardt (LM) algorithm~\cite{levenberg1944method,marquardt1963algorithm}, which iteratively linearizes the residual and solves a sequence of damped least-squares subproblems. Since the upper-level parameters~$\omega$ remain fixed throughout the lower-level solve, we write the residual vector and the stacked physics operator as $\bm{r}(\beta)\triangleq\bm{r}(\beta\mid\omega)$ and $\bm{M}_f(\beta )\triangleq \bm{M}_f(\beta\mid \omega)$.

\paragraph{First-order linearization}
Let $\beta^{(j)}$ denote the current coefficient estimate at the $j$-th LM iteration. We linearize the residual vector around $\beta^{(j)}$ via a first-order Taylor expansion:
\begin{equation}
\begin{aligned}
\label{equ:lm_taylor}
&\bm{r}(\beta^{(j)} + \Delta\beta^{(j)})
\approx \bm{r}(\beta^{(j)}) + \mathbf{J}_j\Delta\beta^{(j)},
\end{aligned}
\end{equation}
where $\Delta\beta^{(j)}\in\mathbb{R}^{2mD}$ is the update and
$\mathbf{J}_j = \nabla_\beta\bm{r}(\beta)\big|_{\beta=\beta^{(j)}} \in\mathbb{R}^{(mN_b+MN_f)\times 2mD}$ is the Jacobian of the residual vector with the block form $\mathbf{J}_j
=[\mathbf{J}_{\mathrm{data}}^\top, {{\mathbf{J}_\mathrm{phys}}^{(j)}}^\top]^\top$,
\begin{equation}
\begin{aligned}
\label{equ:jacobian_blocks}
\mathbf{J}_{\mathrm{data}}
&= \frac{1}{\sqrt{N_b}}\bm{M}_b(\omega)
  \in\mathbb{R}^{mN_b\times 2mD},\\
{\mathbf{J}_\mathrm{phys}}^{(j)}
&= \sqrt{\frac{\lambda}{N_f}}
  \nabla_\beta\bm{M}_f(\beta^{(j)})
  \in\mathbb{R}^{MN_f\times 2mD}.
  \end{aligned}
\end{equation}
Here, $\mathbf{J}_{\mathrm{data}}$ is a constant matrix
across iterations since $\bm{r}_{\mathrm{data}}$ is linear in $\beta$.

\paragraph{Damped least-squares subproblem}
To iteratively minimize the nonlinear
least-squares objective $\mathcal{L}_{\mathrm{lower}}(\beta \mid \omega)$ in Eq.~\eqref{equ:NLS_pinnloss}, we substitute the linearized residual~\eqref{equ:lm_taylor} into the objective and augment with a damping term, obtaining the local quadratic subproblem:
\begin{equation}
\label{equ:lm_subproblem}
\min_{\Delta\beta^{(j)}}
Q_j(\Delta\beta^{(j)})
= \bigl\|\bm{r}(\beta^{(j)})+\mathbf{J}_j\Delta\beta^{(j)}\bigr\|_2^2
  + \frac{\gamma_j}{2}\|\Delta\beta^{(j)}\|_2^2,
\end{equation}
where $\gamma_j>0$ is the damping parameter at iteration~$j$. Setting
$\nabla_{\Delta\beta}Q_j = 0$ gives the closed-form solution of the subproblem:
\begin{equation}
\label{equ:lm_normal_eq}
\bigl(\mathbf{J}_j^\top\mathbf{J}_j
      + \tfrac{\gamma_j}{2}\mathbf{I}\bigr)\Delta\beta^{(j)}
= -\mathbf{J}_j^\top\bm{r}(\beta^{(j)}).
\end{equation}
The dynamic damping parameter $\gamma_j>0$ ensures positive definiteness of the subproblem and balances the trade-off between the Gauss--Newton and steepest descent directions. For small $\gamma_j$, the
update approaches the Gauss--Newton step for fast convergence, where
$\Delta\beta^{(j)}\approx-(\mathbf{J}_j^\top\mathbf{J}_j)^{-1}
\mathbf{J}_j^\top\bm{r}(\beta^{(j)})$. In contrast, for large $\gamma_j$, the step is dominated by the gradient, reducing to the steepest descent direction $\Delta\beta^{(j)}\approx-(2/\gamma_j)\mathbf{J}_j^\top\bm{r}(\beta^{(j)})$, which guarantees a strict decrease in $\mathcal{L}_{\mathrm{lower}}$.

\paragraph{Summary of the lower-level solver}
Each lower-level update solves a sequence of $J$ damped subproblems in Eq.~\eqref{equ:lm_subproblem}. The optimal projection coefficients $\beta^\star(\omega)$, defined in Eq.~\eqref{equ:ll_obj}, are approximated by the LM output  $ \beta^{(J)}$ computed iteratively as:
\begin{equation}
\label{eq:lower-level}
\begin{aligned}
  \Delta\beta^{(j)}&=\arg\min_{\Delta\beta} Q_j(\Delta\beta),\\
  \beta^{(j+1)}&=\beta^{(j)}+\Delta\beta^{(j)},
  \quad j = 0, \dots, J-1,
\end{aligned}
\end{equation}
with the initial $\beta^{(0)}$ starting from the previous lower-level solve. Each subproblem is a strictly convex quadratic problem with a unique closed-form solution given in Eq.~\eqref{equ:lm_normal_eq}. Moreover, since the linearization error is quadratic in the residual (Lemma~\ref{thm:lm-linearization-error}), the model is accurate with good initialization. Empirically, one or two iterations ($J \in \{1, 2\}$) are sufficient to obtain an accurate estimate of~$\beta^\star(\omega)$. The per-iteration cost is dominated by computing the Gram matrix $\mathbf{J}_j^\top\mathbf{J}_j$, which
requires evaluating the physics Jacobian
$\nabla_\beta\bm{M}_f(\beta^{(j)})$. Since both the PDE operator
$\mathscr{N}$ and the Fourier-enhanced feature map $\psi_D$ are known in closed form, the Jacobian rows can be constructed analytically, as illustrated for the scalar conservation laws example in~\ref{app:example}.

When the PDE operator $\mathscr{N}$ is linear in $u$, the physics residual $\bm{r}_{\mathrm{phys}}$ becomes affine in $\beta$, and therefore the linearization~\eqref{equ:lm_taylor} is exact. As the Jacobian $\mathbf{J}$ is independent of $\beta^{(j)}$, the lower-level problem reduces to a strictly convex Kernel Ridge Regression (KRR) problem where $\mathcal{L}_{\mathrm{lower}}$ is a convex quadratic in $\beta$. Consequently, the LM iteration converges in a single step to its unique global minimizer~\cite{wu2025iterative}.

\subsection{Global alternating optimization}
\label{sec:global_opt}
Combining the upper- and lower-level problems introduced above, we formulate the overall training as the following alternating optimization:
\begin{equation}
\label{equ:joint_problem}
\begin{aligned}
\omega^\star(\beta) &= \arg\min_\omega 
\hat{\mathcal{L}}_\lambda\bigl(u_{\omega,\beta}\bigr) \coloneqq \arg\min_\omega\mathcal{L}_{\mathrm{upper}}(\omega \mid \beta), \\
  \beta^\star(\omega) &= \arg\min_\beta 
\hat{\mathcal{L}}_\lambda\bigl(u_{\omega,\beta}\bigr) \coloneqq \arg\min_{\beta}\mathcal{L}_{\mathrm{lower}}(\beta \mid \omega).
\end{aligned}
\end{equation}
A practical challenge is that the lower-level solver requires both an expressive feature basis~$\psi_D$ and an initial coefficient estimate close to $\beta^\star(\omega)$. As shown in Lemma~\ref{thm:lm-linearization-error}, the LM linearization error scales with the squared residual, so a poor initialization yields inaccurate steps. When $\omega$ and $\beta$ are both initialized randomly, the training may converge slowly or settle at a poor local minimum.
To address this, we precede the training with a
joint warm-up phase of $N_{\mathrm{warm}}$ iterations, during
which $\omega$ and $\beta$ are updated together on the objective $\hat{\mathcal{L}}_\lambda$:
\begin{equation}
\label{equ:warmup_update}
\omega_{k+1}   = \omega_k   
               - \eta_\omega
                 \gradw\hat{\mathcal{L}}_{\lambda}
                 (u_{\omega_k,\beta_k}), 
\qquad
\beta_{k+1}    = \beta_k    
               - \eta_\beta
                 \nabla_\beta\hat{\mathcal{L}}_{\lambda}
                 (u_{\omega_k,\beta_k}).
\end{equation}
Algorithm~\ref{alg:ifef} summarizes the full training procedure, which consists of two phases. Phase~1 performs the joint warm-up of $(\omega,\beta)$, providing an accurate initialization. Phase~2 then applies the alternating optimization in Eq.~\eqref{equ:joint_problem}.

\begin{algorithm}[htp]
\caption{FALM-PINN}
\label{alg:ifef}
\begin{algorithmic}
\STATE \textbf{Input:} collocation sets $X_b$, $X_f$; 
       hyperparameters $\lambda$, 
       $\sigma$, $D$, $J$, $N_{\mathrm{warm}}$, 
       $N_{\mathrm{alter}}$,
       $\eta_\omega$, $\eta_\beta$
\STATE \textbf{Initialize} parameters $\omega_0$,  $\beta_0$, 
       and fixed Fourier feature matrix $\mathbf{B}_D \sim \mathcal{N}(0,\sigma^2 I)$

\vspace{0.4em}
\STATE {\(\triangleright\) \textit{Phase 1: Joint warm-up}}
\FOR{$k = 0$ \TO $N_{\mathrm{warm}}-1$}
    \STATE Compute Fourier-enhanced features 
           $\psi_D(\cdot) = \gamma_D(z_{\omega_k}(\cdot))$ 
    \STATE Joint warm-up:
    \[
        \omega_{k+1} = \omega_k 
                     - \eta_\omega
                       \gradw\hat{\mathcal{L}}_\lambda
                       (u_{\omega_k,\beta_k}),\quad
        \beta_{k+1}  = \beta_k   
                     - \eta_\beta
                       \nabla_\beta\hat{\mathcal{L}}_\lambda
                       (u_{\omega_k,\beta_k})
    \]
\ENDFOR

\vspace{0.4em}
\STATE {\(\triangleright\) \textit{Phase 2: Alternating optimization}}
\FOR{$k = N_{\mathrm{warm}}$ \TO $N_{\mathrm{warm}}+N_{\mathrm{alter}}-1$}
    \STATE Compute Fourier-enhanced features
           $\psi_D(\cdot) = \gamma_D(z_{\omega_k}(\cdot))$
    \STATE \textbf{Lower-level update}:
    \STATE Set inner solver initialization: $\beta^{(0)} \leftarrow \beta_k$
    \FOR{$j = 0$ \TO $J-1$}
            \STATE Compute Jacobian $\mathbf{J}_j$ 
                   via Eq.~\eqref{equ:jacobian_blocks}
            \STATE Solve normal equation 
                   $\bigl(\mathbf{J}_j^\top\mathbf{J}_j 
                          + \tfrac{\gamma_j}{2}\mathbf{I}
                   \bigr)\Delta\beta^{(j)} 
                   = -\mathbf{J}_j^\top \bm{r}(\beta^{(j)})$
            \STATE $\beta^{(j+1)} = \beta^{(j)} + \Delta\beta^{(j)}$
    \ENDFOR
    \STATE Set $\beta_{k+1} \leftarrow \beta^{(J)}$
    \STATE \textbf{Upper-level update}:
    \[
        \omega_{k+1} = \omega_k 
                     - \eta_\omega
                       \nabla_\omega\hat{\mathcal{L}}_\lambda
                       (u_{\omega_k,\beta_{k+1}})
    \]
\ENDFOR
\RETURN $\omega_{N_{\mathrm{warm}}+N_{\mathrm{alter}}}$,
        $\beta_{N_{\mathrm{warm}}+N_{\mathrm{alter}}}$
\end{algorithmic}
\end{algorithm}

\begin{remark}[Relation to IFeF-PINN]
    FALM-PINN and IFeF-PINN~\citep{wu2025iterative} share the same decoupled architecture: an upper-level problem learns a Fourier-enhanced basis on the last hidden layer, and a lower-level problem fits the output coefficients on this basis. 
    FALM-PINN differs from IFeF-PINN in applicability, optimization formulation, and convergence guarantees.
    IFeF-PINN is developed for scalar linear PDEs, for which the lower-level problem is convex and admits a unique global minimizer. It is formulated as a bi-level optimization but solved by iterative training, and its convergence guarantee rests on this convexity.  
    FALM-PINN handles general nonlinear PDE systems, where the lower-level problem is a nonconvex nonlinear least-squares problem over the stacked coefficients of all components, solved by the LM algorithm. By formulating the alternating optimization on the shared objective $\hat{\mathcal{L}}_\lambda$, we establish convergence to critical points (Theorem~\ref{thm:classical}) without requiring the convexity of the lower-level problem.
\end{remark}

\subsection{Theoretical analysis}
\label{sec:theory}
This section collects the theoretical properties of FALM-PINN. We first formalize the kernel interpretation of the Fourier-enhanced basis, then bound the error of the LM linearization underlying the lower-level solver, and finally establish global convergence of the alternating
optimization.

\subsubsection{Kernel approximation in the latent space}
Recall the composite feature map $\psi_D = \gamma_D \circ z_\omega$ of Section~\ref{sec:upper-level}. The following lemma formalizes its connection to a Gaussian kernel in the latent space.

\begin{lemma}[Kernel approximation]
\label{lem:kernel_convergence}
Let $k_{\mathrm{RBF}}(z, z') = \exp\left(-2\pi^2\sigma^2\|z - z'\|_2^2\right)$ be a Gaussian RBF kernel defined on the latent space $\mathbb{R}^p$. For any spatio-temporal inputs $x, x' \in \mathbb{R}^n$, let $z = z_\omega(x)$ and $z' = z_\omega(x')$. Then the inner product of the explicit feature maps converges to the exact kernel almost surely:
\begin{equation}
\lim_{D \to \infty} \langle \psi_D(x), \psi_D(x') \rangle = k_{\mathrm{RBF}}(z, z').
\end{equation}
\end{lemma}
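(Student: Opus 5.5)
The plan is to expand the inner product using the structure of $\gamma_D$ in \eqref{eq:ff-mapping}, which turns it into an empirical average of i.i.d.\ random variables, and then apply the strong law of large numbers. First fix $\omega$, $x$, $x'$, so that $z=z_\omega(x)$ and $z'=z_\omega(x')$ are deterministic vectors in $\mathbb{R}^p$; the only randomness comes from $\mathbf{B}_D$. Write $b_i\in\mathbb{R}^p$ for the $i$-th row of $\mathbf{B}_D$, and take the rows to be the first $D$ elements of a single i.i.d.\ sequence $(b_i)_{i\ge1}$ with $b_i\sim\mathcal{N}(0,\sigma^2 I_p)$. This is the natural coupling under which "almost surely as $D\to\infty$" makes sense. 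From \eqref{eq:rff_mapping} and the identity $\cos a\cos b+\sin a\sin b=\cos(a-b)$,
\begin{equation*}
\langle \psi_D(x),\psi_D(x')\rangle=\frac{1}{D}\sum_{i=1}^D \cos\bigl(2\pi b_i^\top(z-z')\bigr).
\end{equation*}

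Next, set $Y_i=\cos(2\pi b_i^\top\delta)$ with $\delta=z-z'$. The $Y_i$ are i.i.d.\ and bounded by $1$, so they are integrable. Kolmogorov's strong law then gives almost sure convergence of the average to $\mathbb{E}[Y_1]$.

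It remains to compute $\mathbb{E}[Y_1]$. Since $b_1^\top\delta\sim\mathcal{N}(0,\sigma^2\|\delta\|_2^2)$, we have $\mathbb{E}[\cos(2\pi b_1^\top\delta)]=\mathrm{Re}\,\mathbb{E}[e^{i2\pi b_1^\top\delta}]$. By the Gaussian characteristic function $\mathbb{E}[e^{itG}]=e^{-t^2s^2/2}$ for $G\sim\mathcal{N}(0,s^2)$, this equals $\exp(-2\pi^2\sigma^2\|\delta\|_2^2)=k_{\mathrm{RBF}}(z,z')$. This step is just Bochner's theorem in its concrete Gaussian form.

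The main point needing care is not the computation but the meaning of the almost sure statement. The claim is pointwise: for each fixed pair $(x,x')$, convergence holds outside a null set that may depend on $(z,z')$. If one wants a single null set valid for all $x,x'$ simultaneously, one would first restrict to a countable dense set of $\delta$. One would then extend to all $\delta$ using the uniform Lipschitz bound $|\cos(2\pi b^\top\delta)-\cos(2\pi b^\top\delta')|\le 2\pi\|b\|\|\delta-\delta'\|$, together with $\frac1D\sum_i\|b_i\|\to\mathbb{E}\|b_1\|<\infty$ almost surely. I would state the pointwise version as the lemma's content and remark on this uniform extension.
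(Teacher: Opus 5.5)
Your proposal is correct and follows the paper's proof essentially step for step: the identity $\cos a\cos b+\sin a\sin b=\cos(a-b)$ reduces the inner product to $\frac1D\sum_j\cos(2\pi \mathbf{b}_j^\top(z-z'))$, the strong law gives almost sure convergence of this bounded i.i.d.\ average, and the Gaussian characteristic function evaluates the limit to $k_{\mathrm{RBF}}(z,z')$. Your remarks on coupling the rows of $\mathbf{B}_D$ across $D$, and on obtaining a single null set via a countable dense set plus the Lipschitz bound, are valid refinements that the paper leaves implicit, since it proves only the pointwise statement.
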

\begin{proof}
The inner product of the feature vectors is
\begin{equation}
\langle \psi_D(x), \psi_D(x') \rangle 
= \frac{1}{D} \sum_{j=1}^{D} \left[ 
  \cos(2\pi \mathbf{b}_j^\top z)\cos(2\pi \mathbf{b}_j^\top z') 
  + \sin(2\pi \mathbf{b}_j^\top z)\sin(2\pi \mathbf{b}_j^\top z') 
\right],
\end{equation}
where $\mathbf{b}_j^\top$ is the $j$-th row of $\mathbf{B}_D$. Applying 
the identity 
$\cos\alpha\cos\beta + \sin\alpha\sin\beta = \cos(\alpha - \beta)$ yields:
\begin{equation}
\label{eq:inner_product_cos}
\langle \psi_D(x), \psi_D(x') \rangle 
= \frac{1}{D} \sum_{j=1}^{D} 
  \cos\left(2\pi \mathbf{b}_j^\top (z - z')\right).
\end{equation}
Since $\mathbf{b}_j \sim \mathcal{N}(0, \sigma^2 I_p)$, the expectation 
of each summand is the characteristic function of the Gaussian distribution 
evaluated at the displacement $\delta = z - z'$:
\begin{equation}
\label{equ:rbf_lemma}
\begin{aligned}
\mathbb{E}\left[\cos(2\pi \mathbf{b}_j^\top \delta)\right] 
= \operatorname{Re}\left(\mathbb{E}\left[
  e^{i2\pi \mathbf{b}_j^\top \delta}\right]\right) = \exp\left(-2\pi^2\sigma^2 \|\delta\|_2^2\right).
\end{aligned}
\end{equation}
Since the $\mathbf{b}_j$ are i.i.d.\ and 
$|\cos(\cdot)| \leq 1$, the strong law of large numbers applied to the 
sample mean in~\eqref{eq:inner_product_cos} gives:
\begin{equation*}
\frac{1}{D}\sum_{j=1}^{D} 
\cos\left(2\pi \mathbf{b}_j^\top \delta\right) 
\xrightarrow{D \to \infty} 
\mathbb{E}\left[\cos\left(2\pi \mathbf{b}_j^\top \delta\right)\right] 
= k_{\mathrm{RBF}}(z, z') \quad \text{almost surely.}
\end{equation*}
\end{proof}

\subsubsection{Accuracy of the LM linearization}
We next quantify the accuracy of the first-order
linearization~\eqref{equ:lm_taylor}. The following bound shows that the
linearization error is quadratic in the residual, which justifies the small number of inner iterations used in practice and motivates the
warm-up phase of Section~\ref{sec:global_opt}.

\begin{lemma}[Linearization error]
\label{thm:lm-linearization-error}
Assume the following:
\begin{enumerate}
  \item $\mathscr{N}$ is twice continuously differentiable in $u$.
  \item The lower-level iterates lie in a compact set $K$ on which the residual
    Jacobian $\mathbf{J}_j$ has full column rank, with the smallest singular
    value bounded below by $\underline{\sigma}>0$.
  \item $M_{\mathscr{N}}<\infty$ is a uniform bound over $K$ on the curvature of
    the PDE operator, with
    $\|\nabla^2_\beta \mathscr{N}_j[u_{\omega,\beta}](x_f^i)\|_2 \le M_{\mathscr{N}}$.
\end{enumerate}
Then the LM step~\eqref{equ:lm_normal_eq} satisfies
$\|\Delta\beta^{(j)}\|_2\le\underline{\sigma}^{-1}\|\bm{r}(\beta^{(j)})\|_2$,
and the linearized residual~\eqref{equ:lm_taylor} incurs the error
\begin{equation}
\begin{aligned}
    \bigl\|\bm{r}(\beta^{(j)}+\Delta\beta^{(j)})
   -\bigl[\bm{r}(\beta^{(j)})+\mathbf{J}_j\Delta\beta^{(j)}\bigr]\bigr\|_2
&\le \tfrac12 M_{\mathscr{N}}\sqrt{M\lambda}\,\|\Delta\beta^{(j)}\|_2^2 \\
&\le \frac{M_{\mathscr{N}}\sqrt{M\lambda}}{2\underline{\sigma}^{2}}
        \|\bm{r}(\beta^{(j)})\|_2^2 .
\end{aligned}
\label{eq:lm-lin-err}
\end{equation}
\end{lemma}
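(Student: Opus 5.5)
The proof has two independent parts: a bound on the LM step from the normal equations, and a Taylor-remainder bound on the residual.

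\emph{Step bound.} We start from Eq.~\eqref{equ:lm_normal_eq}. Take the thin SVD $\mathbf{J}_j = U\Sigma V^\top$ with singular values $\sigma_i\ge\underline{\sigma}>0$ (assumption 2). Then
\[
\Delta\beta^{(j)} = -V\,\mathrm{diag}\!\left(\frac{\sigma_i}{\sigma_i^2+\gamma_j/2}\right)U^\top\bm{r}(\beta^{(j)}).
\]
For every $\gamma_j>0$ we have $\sigma_i/(\sigma_i^2+\gamma_j/2)\le 1/\sigma_i\le 1/\underline{\sigma}$. Since $\|U^\top \bm{r}\|_2\le\|\bm{r}\|_2$, this gives $\|\Delta\beta^{(j)}\|_2\le\underline{\sigma}^{-1}\|\bm{r}(\beta^{(j)})\|_2$. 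Full column rank is exactly what makes the bound independent of the damping parameter. Without it, a zero singular value would be harmless, but small nonzero singular values would blow the bound up.

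\emph{Linearization error.} The data residual $\bm{r}_{\mathrm{data}}$ is affine in $\beta$, so its linearization is exact and its block of the error vanishes. The only contribution comes from $\bm{r}_{\mathrm{phys}}=\sqrt{\lambda/N_f}\,(\bm{M}_f(\beta)-\bm{f})$.

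For each of the $MN_f$ scalar entries $\phi(\beta)=\mathscr{N}_j[u_{\omega,\beta}](x_f^i)$, we apply Taylor's theorem with integral remainder along the segment $\beta^{(j)}+t\Delta\beta^{(j)}$, $t\in[0,1]$. This is valid by assumption 1 together with the smoothness of $\beta\mapsto u_{\omega,\beta}$, which is linear in $\beta$ with a smooth feature map. It gives
\[
\bigl|\phi(\beta^{(j)}+\Delta\beta^{(j)})-\phi(\beta^{(j)})-\nabla\phi(\beta^{(j)})^\top\Delta\beta^{(j)}\bigr|\le\tfrac12 M_{\mathscr{N}}\|\Delta\beta^{(j)}\|_2^2,
\]
using the curvature bound of assumption 3. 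This needs the whole segment to lie in $K$. I would handle this by taking $K$ convex, or by interpreting assumption 3 as holding on the convex hull of the iterates; I would state that interpretation explicitly, since it is the one delicate point.

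Summing squares over the $MN_f$ entries, multiplying by $\lambda/N_f$, and taking square roots gives
\[
\sqrt{\frac{\lambda}{N_f}}\sqrt{MN_f}\cdot\tfrac12 M_{\mathscr{N}}\|\Delta\beta^{(j)}\|_2^2=\tfrac12 M_{\mathscr{N}}\sqrt{M\lambda}\,\|\Delta\beta^{(j)}\|_2^2 .
\]
This is the first inequality of \eqref{eq:lm-lin-err}. Substituting the step bound yields the second inequality.

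The main obstacle is the segment-containment issue. The algebra is routine, but Taylor's theorem needs the Hessian bound along the entire segment, not only at the endpoints. So the compact set $K$ must either be convex or contain all line segments between consecutive iterates, and I would make that requirement part of how assumption 2 is read.
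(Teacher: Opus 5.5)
Your proof is correct and follows the paper's argument essentially step for step. The step bound uses the same SVD estimate $\sigma_i/(\sigma_i^2+\gamma_j/2)\le 1/\sigma_i\le 1/\underline{\sigma}$, and the linearization error uses the same componentwise second-order Taylor bound on the $MN_f$ physics entries, summed in $\ell_2$ so that $\sqrt{MN_f}\sqrt{\lambda/N_f}=\sqrt{M\lambda}$; the paper uses the Lagrange form with an intermediate point $\xi_\ell$ where you use the integral remainder. Your segment-containment caveat is well taken and applies equally to the paper, which bounds $\nabla^2_\beta r_{\mathrm{phys},\ell}(\xi_\ell)$ by $M_{\mathscr{N}}$ without checking that $\xi_\ell\in K$, so reading assumption 3 on the convex hull of $K$ (or taking $K$ convex) is the right repair.
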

\begin{proof}
The data residual is affine in $\beta$, so its linearization is exact, and the
error is therefore confined to the $MN_f$ physics components. Each physics component has the form
$r_{\mathrm{phys},\ell}(\beta) = \sqrt{\lambda/N_f}\,(\mathscr{N}_j[u_{\omega,\beta}](x_f^i)
- f_j(x_f^i))$ for some equation--point pair $(j,i)$ and is $C^2$ in $\beta$.
By the Taylor--Lagrange formula~\cite{coleman2013calculus}, there exists $\xi_\ell$ such that
\begin{equation}
\label{eq:lm-lin-taylor}
\begin{aligned}
\bigl|r_{\mathrm{phys},\ell}(\beta + \Delta\beta) - r_{\mathrm{phys},\ell}(\beta)
   - \nabla_\beta r_{\mathrm{phys},\ell}(\beta)^\top \Delta\beta\bigr|
&= \tfrac12\bigl|\Delta\beta^\top \nabla^2_\beta r_{\mathrm{phys},\ell}(\xi_\ell)\,\Delta\beta\bigr| \\
&\le \tfrac12\sqrt{\tfrac{\lambda}{N_f}}\,M_{\mathscr{N}}\,\|\Delta\beta\|_2^2.
\end{aligned}
\end{equation}
Summing over the $M N_f$ physics components gives the first inequality in~\eqref{eq:lm-lin-err}. The factor $1/\sqrt{N_f}$ in each component cancels the $\sqrt{N_f}$ growth from the $M N_f$ components, leaving the bound independent of $N_f$.

For the step bound, each subproblem~\eqref{equ:lm_subproblem} is a strictly
convex quadratic and admits a closed-form solution
\begin{equation}
\label{eq:lm-step-closed}
\Delta\beta^{(j)} = -\bigl(\mathbf{J}_j^\top\mathbf{J}_j
   + \tfrac{\gamma_j}{2}\mathbf{I}\bigr)^{-1}\mathbf{J}_j^\top\bm{r}(\beta^{(j)}) .
\end{equation}
Writing the singular value decomposition (SVD) $\mathbf{J}_j = U\Sigma V^\top$ with singular values $\{\sigma_i\}$, the operator in~\eqref{eq:lm-step-closed} can be written as
\begin{equation}
\label{eq:lm-step-svd}
\bigl(\mathbf{J}_j^\top\mathbf{J}_j + \tfrac{\gamma_j}{2}\mathbf{I}\bigr)^{-1}\mathbf{J}_j^\top
= V\,\operatorname{diag}\!\Bigl(\tfrac{\sigma_i}{\sigma_i^2+\gamma_j/2}\Bigr)U^\top ,
\end{equation}
whose singular values satisfy
\begin{equation}
\label{eq:lm-step-svalbound}
\frac{\sigma_i}{\sigma_i^2+\gamma_j/2} \;\le\; \frac{1}{\sigma_i}
   \;\le\; \frac{1}{\underline{\sigma}} \qquad\text{for any } \gamma_j\ge 0 .
\end{equation}
Taking the operator norm in~\eqref{eq:lm-step-closed} therefore gives
\begin{equation}
\label{eq:lm-step-bound}
\|\Delta\beta^{(j)}\|_2 \;\le\; \underline{\sigma}^{-1}\,\|\bm{r}(\beta^{(j)})\|_2 .
\end{equation}
Substituting into the first inequality of~\eqref{eq:lm-lin-err}, evaluated at $\beta=\beta^{(j)}$ and
$\Delta\beta=\Delta\beta^{(j)}$ yields the second inequality. Finally, if $\mathscr{N}$ is linear then $\nabla^2_\beta \mathscr{N}_j \equiv \mathbf{0}$, hence $M_{\mathscr{N}}=0$ and the error vanishes.
\end{proof}

\subsubsection{Convergence of the alternating optimization}
We establish convergence guarantees for the alternating optimization in Phase~2 of Algorithm~\ref{alg:ifef}. We denote $z = (\omega, \beta)$ and $\Phi(z) := \hat{\mathcal{L}}_\lambda(u_{\omega,\beta})$ throughout this section, and analyze the sequence $\{z^k = (\omega^k, \beta^k)\}$ generated by Phase~2. We first state the key properties required for our convergence analysis.

\begin{assumption}[Bounded iterates and joint smoothness]
\label{asm:bounded}
The sequence $\{z^k\}_{k \ge 0}$ remains in a compact set: there exists $\mathcal{B}$ such that $z^k \in \mathcal{B}$ for all $k \ge 0$.
\end{assumption}
 Since $\Phi$ is a composition of smooth activation functions and a quadratic residual loss, $\Phi$ is continuously differentiable on $\mathcal{B}$ and $\nabla^2 \Phi$ is bounded there. Consequently, $\Phi$ is $L$-smooth on $\mathcal{B}$ with $L = \sup_{z \in \mathcal{B}} \|\nabla^2 \Phi(z)\|_2 < \infty$, i.e.,
\[
\|\nabla \Phi(z) - \nabla \Phi(z')\| \le L \|z - z'\|,
\qquad \forall z, z' \in \mathcal{B}.
\]

\begin{proposition}[Inexact LM descent condition]
\label{prop:lm-descent}
Every lower-level update at iteration $k$ consists of $J$ damped subproblems of the form Eq.~\eqref{equ:lm_subproblem}, with $\beta^{(0)} = \beta^k$ and $\beta^{(J)} = \beta^{k+1}$. There exist constants $c_\beta, b_\beta > 0$ such that the overall update satisfies:
\begin{align}
\Phi(\omega^k, \beta^{k+1})
&\le \Phi(\omega^k, \beta^k) - c_\beta\|\beta^{k+1} - \beta^k\|^2,
\label{eq:lm-suff-decrease} \\
\|\nabla_\beta \Phi(\omega^k, \beta^{k+1})\|
&\le b_\beta\|\beta^{k+1} - \beta^k\|.
\label{eq:lm-approx-stat}
\end{align}
\end{proposition}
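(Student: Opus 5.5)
Fix $\omega=\omega^k$ and write $\bm{r}(\beta)=\bm{r}(\beta\mid\omega^k)$, so that $\Phi(\omega^k,\beta)=\|\bm{r}(\beta)\|_2^2$. The plan is to prove both inequalities one inner LM step at a time and then add up the $J$ steps. The inputs are the standing hypotheses of Lemma~\ref{thm:lm-linearization-error} (iterates in a compact $K$, Jacobians of full column rank with $\sigma_{\min}\ge\underline{\sigma}$, bounded curvature $M_{\mathscr{N}}$), Assumption~\ref{asm:bounded}, and the $L$-smoothness of $\Phi$ on $\mathcal{B}$. We also need damping parameters confined to a band $0<\underline{\gamma}\le\gamma_j\le\bar\gamma$, with $\underline{\gamma}$ chosen large enough. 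This is the usual safeguard in LM implementations, and I would state it as an explicit standing assumption.

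\textbf{Sufficient decrease for one step.} Since $\Delta\beta^{(j)}$ minimizes $Q_j$, comparing with $\Delta\beta=0$ gives
\[
\|\bm{r}(\beta^{(j)})+\mathbf{J}_j\Delta\beta^{(j)}\|^2+\tfrac{\gamma_j}{2}\|\Delta\beta^{(j)}\|^2\le\|\bm{r}(\beta^{(j)})\|^2 .
\]
By Lemma~\ref{thm:lm-linearization-error}, the true residual differs from the linear model by $e_j$ with $\|e_j\|\le C_1\|\Delta\beta^{(j)}\|^2$, where $C_1=\tfrac12M_{\mathscr{N}}\sqrt{M\lambda}$. Expanding $\|\bm{r}+\mathbf{J}\Delta\beta+e\|^2$ then gives
\[
\|\bm{r}(\beta^{(j+1)})\|^2\le\|\bm{r}(\beta^{(j)})\|^2-\tfrac{\gamma_j}{2}\|\Delta\beta^{(j)}\|^2+2\|\bm{r}(\beta^{(j)})\|\,C_1\|\Delta\beta^{(j)}\|^2+C_1^2\|\Delta\beta^{(j)}\|^4 .
\]
On the compact set, $\|\bm{r}\|\le R$ and $\|\Delta\beta\|\le R/\underline{\sigma}$ by the step bound of Lemma~\ref{thm:lm-linearization-error}. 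Both error terms are therefore at most $C_2\|\Delta\beta^{(j)}\|^2$. Taking $\underline{\gamma}\ge 4C_2$ yields
\[
\Phi(\omega^k,\beta^{(j+1)})\le\Phi(\omega^k,\beta^{(j)})-\tfrac{\underline{\gamma}}{4}\|\Delta\beta^{(j)}\|^2 .
\]
For linear $\mathscr{N}$ we have $C_1=0$, so any $\gamma_j>0$ works.

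\textbf{Approximate stationarity for one step.} Here $\nabla_\beta\Phi(\omega^k,\beta)=2\mathbf{J}(\beta)^\top\bm{r}(\beta)$. The optimality condition~\eqref{equ:lm_normal_eq} states
\[
\mathbf{J}_j^\top\bm{r}(\beta^{(j)})=-\bigl(\mathbf{J}_j^\top\mathbf{J}_j+\tfrac{\gamma_j}{2}\mathbf{I}\bigr)\Delta\beta^{(j)},
\]
so $\|\nabla_\beta\Phi(\omega^k,\beta^{(j)})\|\le 2(\|\mathbf{J}_j\|^2+\bar\gamma/2)\|\Delta\beta^{(j)}\|$. The $L$-smoothness of $\Phi$ on $\mathcal{B}$ then gives
\[
\|\nabla_\beta\Phi(\omega^k,\beta^{(j+1)})\|\le\|\nabla_\beta\Phi(\omega^k,\beta^{(j)})\|+L\|\Delta\beta^{(j)}\|\le b\,\|\Delta\beta^{(j)}\| .
\]
Here $\|\mathbf{J}_j\|$ is bounded by continuity on the compact set.

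\textbf{Aggregation over the $J$ inner steps (main obstacle).} Summing the per-step decreases gives
\[
\Phi(\omega^k,\beta^{k+1})\le\Phi(\omega^k,\beta^k)-c\sum_{j}\|\Delta\beta^{(j)}\|^2,
\qquad c=\underline{\gamma}/4 .
\]
Cauchy--Schwarz gives $\|\beta^{k+1}-\beta^k\|^2\le J\sum_j\|\Delta\beta^{(j)}\|^2$, so~\eqref{eq:lm-suff-decrease} holds with $c_\beta=c/J$.

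The gradient bound is the hard part. The stationarity estimate controls $\nabla_\beta\Phi(\omega^k,\beta^{(J)})$ only by the last step $\|\Delta\beta^{(J-1)}\|$, and this need not be bounded by the net displacement $\|\beta^{(J)}-\beta^{(0)}\|$ if the inner steps partly cancel. For $J=1$ there is no issue. For $J\ge2$ I would first try to exclude cancellation using the descent property: successive LM directions are descent directions for a function with a Lipschitz gradient, so with $\underline{\gamma}$ large the step map $\beta\mapsto\beta+\Delta\beta(\beta)$ is a contraction-like perturbation of the identity. This should give $\|\Delta\beta^{(j+1)}\|\le\rho\|\Delta\beta^{(j)}\|$ with $\rho<1$ uniform on $K$. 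Telescoping then yields
\[
\|\beta^{(J)}-\beta^{(0)}\|\ge(1-\rho-\dots-\rho^{J-1})\|\Delta\beta^{(0)}\| ,
\]
which is positive if $\rho<1/2$, and this would complete~\eqref{eq:lm-approx-stat} with $b_\beta=b\rho^{J-1}/(1-\sum_{i\ge1}\rho^i)$. If the contraction estimate fails to be uniform, the fallback is to establish the per-step relations and note that the convergence analysis of Theorem~\ref{thm:classical} can consume $\sum_j\|\Delta\beta^{(j)}\|^2$ in place of $\|\beta^{k+1}-\beta^k\|^2$.
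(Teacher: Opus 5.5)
Everything up to the aggregation of the decrease is sound. Your one-step decrease compares $Q_j(\Delta\beta^{(j)})\le Q_j(0)$ and then bounds the model error by Lemma~\ref{thm:lm-linearization-error}. That is a valid alternative to the paper's argument, which instead inserts $\nabla_\beta\Phi(\beta^{(j)})=-(2\mathbf{J}_j^\top\mathbf{J}_j+\gamma\mathbf{I})\Delta\beta^{(j)}$ into the descent lemma and needs $\gamma>L/2$. Your version uses the LM model directly and works for every $\gamma_j>0$ when $\mathscr{N}$ is linear. It also does not need the full-rank step bound, since $Q_j(\Delta\beta^{(j)})\le Q_j(0)$ already gives $\|\Delta\beta^{(j)}\|^2\le 2\|\bm{r}(\beta^{(j)})\|^2/\gamma_j$. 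Your one-step stationarity bound and the Cauchy--Schwarz step coincide with the paper's.

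The gap is the case $J\ge 2$ of \eqref{eq:lm-approx-stat}, exactly where you located it, and the proposed repair fails. A uniform contraction $\|\Delta\beta^{(j+1)}\|\le\rho\|\Delta\beta^{(j)}\|$ with $\rho<1/2$ does not follow from the hypotheses, and large damping pushes the wrong way. For large $\gamma_j$ the LM step is $-\gamma_j^{-1}\nabla_\beta\Phi+O(\gamma_j^{-2})$, so $\Delta\beta^{(j+1)}\approx(\mathbf{I}-\gamma_j^{-1}\nabla_\beta^2\Phi)\Delta\beta^{(j)}$. The step ratio therefore tends to $1$ as the damping grows, and exceeds $1$ along negative-curvature directions. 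Already for $\bm{r}(\beta)=a\beta$ one gets $\Delta\beta^{(j+1)}=\tfrac{\gamma}{2a^2+\gamma}\Delta\beta^{(j)}$, so $\rho<1/2$ holds only when $\gamma<2a^2$. Strong contraction is a Gauss--Newton phenomenon near zero-residual solutions, not a consequence of your assumptions. Your fallback, feeding $\sum_j\|\Delta\beta^{(j)}\|^2$ into Theorem~\ref{thm:classical}, would still give that theorem, but it replaces \eqref{eq:lm-approx-stat} instead of proving it. The paper's own proof has the same hole: it simply asserts a constant $\kappa_J$ with $\|\beta^{(J)}-\beta^{(J-1)}\|\le\kappa_J\|\beta^{k+1}-\beta^k\|$.

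The missing step follows from tools you already have, with no contraction needed. Put $s^2=\sum_{j=0}^{J-1}\|\Delta\beta^{(j)}\|^2$ and $d=\|\beta^{k+1}-\beta^k\|$. Your summed decrease gives $\Phi(\omega^k,\beta^k)-\Phi(\omega^k,\beta^{k+1})\ge c\,s^2$. Apply the descent lemma at $\beta^{k+1}$ toward $\beta^k$, and use your last-step bound $\|\nabla_\beta\Phi(\omega^k,\beta^{k+1})\|\le b\|\Delta\beta^{(J-1)}\|\le b\,s$; this gives
\[
\Phi(\omega^k,\beta^k)-\Phi(\omega^k,\beta^{k+1})\le b\,s\,d+\tfrac{L}{2}\,d^2 .
\]
Hence $c\,s^2\le b\,s\,d+\tfrac{L}{2}d^2$, i.e.\ $s\le\kappa d$ with $\kappa=\bigl(b+\sqrt{b^2+2cL}\bigr)/(2c)$. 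Therefore $\|\nabla_\beta\Phi(\omega^k,\beta^{k+1})\|\le b\kappa\,\|\beta^{k+1}-\beta^k\|$, which is \eqref{eq:lm-approx-stat} with $b_\beta=b\kappa$, uniformly in $J$.

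Sufficient decrease is what rules out cancellation of the inner steps: a large total decrease cannot come from a small net displacement ending at a point of small gradient. The same argument, with the paper's constants, supplies the unproved $\kappa_J$ in the paper's proof. It only uses $L$-smoothness of $\Phi(\omega^k,\cdot)$ on a convex set containing $\beta^k$ and $\beta^{k+1}$, which is already implicit in both arguments.
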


\begin{proof}
For brevity, we fix $\omega^k$ and write $\Phi(\cdot) \equiv \Phi(\omega^k, \cdot)$. From the closed-form LM update~\eqref{equ:lm_subproblem}, the gradient identity holds:
\begin{equation*}
\label{eq:grad-lm}
\nabla_\beta \Phi(\beta^{(j)}) = 2\mathbf{J}_j^\top \bm{r}(\beta^{(j)}) = -(2 \mathbf{J}_j^\top \mathbf{J}_j + \gamma \mathbf{I})\Delta\beta^{(j)}.
\end{equation*}

Since $\Phi(\beta)$ is $L$-smooth from Assumption~\ref{asm:bounded}, 
\begin{equation*}
\label{eq:descent-lemma-beta}
\begin{aligned}
    \Phi(\beta^{(j+1)}) &\le \Phi(\beta^{(j)}) + \langle \nabla_\beta \Phi(\beta^{(j)}), \Delta\beta^{(j)}\rangle + \tfrac{L}{2}\|\Delta\beta^{(j)}\|^2,\\
    & \le \Phi(\beta^{(j)}) + \langle -(2 \mathbf{J}_j^\top \mathbf{J}_j + \gamma \mathbf{I})\Delta\beta^{(j)}, \Delta\beta^{(j)}\rangle + \tfrac{L}{2}\|\Delta\beta^{(j)}\|^2,\\
    & \le \Phi(\beta^{(j)}) -2\|\mathbf{J}_j \Delta\beta^{(j)}\|^2 - \big(\gamma - \tfrac{L}{2}\big)\|\Delta\beta^{(j)}\|^2,\\
    & \le \Phi(\beta^{(j)}) - \big(\gamma - \tfrac{L}{2}\big)\|\Delta\beta^{(j)}\|^2.
\end{aligned}
\end{equation*}
With damping satisfying $\gamma > \frac{L}{2}$, we have the sufficient decrease condition for the single subproblem with $\tilde{c}_\beta \coloneqq  \gamma - \frac{L}{2} > 0$. Summing over $j = 0, 1, \ldots, J-1$:
\begin{equation*}
\Phi(\beta^{(J)}) \le \Phi(\beta^{(0)}) - \tilde{c}_\beta\sum_{j=0}^{J-1} \|\beta^{(j+1)} - \beta^{(j)}\|^2.
\end{equation*}
By the Cauchy-Schwarz inequality applied to $\beta^{k+1} - \beta^k = \sum_{j=0}^{J-1} \Delta\beta^{(j)}$:
\begin{equation*}
\|\beta^{k+1} - \beta^k\|^2 = \big\|\sum_{j=0}^{J-1} \Delta\beta^{(j)}\big\|^2 \le J\sum_{j=0}^{J-1} \|\Delta\beta^{(j)}\|^2,
\end{equation*}
We have the sufficient decrease condition for the lower-level problem as:
\begin{equation*}
\Phi(\omega^k, \beta^{k+1}) \le \Phi(\omega^k, \beta^k) - \tfrac{\tilde{c}_\beta}{J}\|\beta^{k+1} - \beta^k\|^2,
\end{equation*}
Setting $c_\beta := \tilde{c}_\beta/J$ proves Eq.~\eqref{eq:lm-suff-decrease}.

Next, we evaluate $\nabla_\beta \Phi(\beta^{(J)})$ by decomposition:
\begin{equation*}
\begin{aligned}
\|\nabla_\beta \Phi(\beta^{(J)})\| 
&= \|\nabla_\beta \Phi(\beta^{(J)}) - \nabla_\beta \Phi(\beta^{(J-1)}) + \nabla_\beta \Phi(\beta^{(J-1)})\| \\
&\le \|\nabla_\beta \Phi(\beta^{(J)}) - \nabla_\beta \Phi(\beta^{(J-1)})\| + \|(2 \mathbf{J}_{J-1}^\top \mathbf{J}_{J-1} + \gamma \mathbf{I})\Delta\beta^{(J-1)}\| \\
&\le L\|\Delta\beta^{(J-1)}\| + \big(2\|\mathbf{J}_{J-1}^\top \mathbf{J}_{J-1}\|_2 + \gamma\big)\|\Delta\beta^{(J-1)}\|,\\
&\le \big(L + 2\|\mathbf{J}_{J-1}^\top \mathbf{J}_{J-1}\|_2 + \gamma\big)\|\Delta\beta^{(J-1)}\| 
\end{aligned}
\end{equation*}

Since the sequence $\{z^k\}_{k \ge 0}$ is bounded on a compact set, we have $\|\mathbf{J}_{J-1}^\top \mathbf{J}_{J-1}\|_2 \le M_J$ for some constant $M_J$. Defining $\tilde{b}_\beta \coloneqq L + 2 M_J + \gamma$, we have $\|\nabla_\beta \Phi(\beta^{(J)})\| \le \tilde{b}_\beta\|\beta^{(J)} - \beta^{(J-1)}\|$. For $J \ge 1$, there exists a finite constant $\kappa_J > 0$ such that $\|\beta^{(J)} - \beta^{(J-1)}\| \le \kappa_J\big\|\sum_{j=0}^{J-1} \Delta\beta^{(j)}\big\| = \kappa_J\|\beta^{k+1} - \beta^k\| $. Setting $b_\beta \coloneqq \tilde{b}_\beta \kappa_J > 0$ proves Eq.~\eqref{eq:lm-approx-stat}.
\end{proof}

\begin{lemma}[Joint descent and gradient bound]
\label{lem:H1}
With the same requirements as for Proposition~\ref{prop:lm-descent}, if the upper-level step size satisfies $\eta_\omega < 2/L$, then there exists $c = \min\left(c_\beta, \tfrac{1}{\eta_\omega} - \tfrac{L}{2}\right) > 0$ and $b = \max\left( \sqrt{\left(\tfrac{1}{\eta_\omega} + L\right)^2 + 2L^2},b_\beta \sqrt{2} \right)$ such that 
\begin{align}
\Phi(z^{k+1}) &\le \Phi(z^k) - c\|z^{k+1} - z^k\|^2, \label{eq:H1} \\
\|\nabla \Phi(z^{k+1})\| &\le b\|z^{k+1} - z^k\|. \label{eq:H2}
\end{align}

\end{lemma}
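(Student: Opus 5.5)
Fix a Phase~2 iteration $k$. By Algorithm~\ref{alg:ifef}, the step from $z^k=(\omega^k,\beta^k)$ to $z^{k+1}=(\omega^{k+1},\beta^{k+1})$ is made of two partial moves. The lower-level update goes $(\omega^k,\beta^k)\mapsto(\omega^k,\beta^{k+1})$. The upper-level update is the gradient step $\omega^{k+1}=\omega^k-\eta_\omega\nabla_\omega\Phi(\omega^k,\beta^{k+1})$. I will bound each move separately and then add the bounds, using $\|z^{k+1}-z^k\|^2=\|\omega^{k+1}-\omega^k\|^2+\|\beta^{k+1}-\beta^k\|^2$.

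\textbf{Descent \eqref{eq:H1}.} Proposition~\ref{prop:lm-descent}, via \eqref{eq:lm-suff-decrease}, gives $\Phi(\omega^k,\beta^{k+1})\le\Phi(z^k)-c_\beta\|\beta^{k+1}-\beta^k\|^2$. For the $\omega$-move, I apply the descent lemma to $\omega\mapsto\Phi(\omega,\beta^{k+1})$. This map is $L$-smooth on $\mathcal{B}$ because its Hessian is a principal block of $\nabla^2\Phi$. Writing $\Delta\omega=\omega^{k+1}-\omega^k$ and using $\nabla_\omega\Phi(\omega^k,\beta^{k+1})=-\Delta\omega/\eta_\omega$,
\[
\Phi(z^{k+1})\le\Phi(\omega^k,\beta^{k+1})-\Bigl(\tfrac1{\eta_\omega}-\tfrac L2\Bigr)\|\Delta\omega\|^2 .
\]
Adding the two inequalities and bounding each coefficient below by $c=\min(c_\beta,1/\eta_\omega-L/2)$ yields \eqref{eq:H1}. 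The condition $\eta_\omega<2/L$ is exactly what makes $c>0$.

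\textbf{Gradient bound \eqref{eq:H2}.} I bound the two blocks of $\nabla\Phi(z^{k+1})$ separately.

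For the $\omega$-block, I add and subtract $\nabla_\omega\Phi(\omega^k,\beta^{k+1})=-\Delta\omega/\eta_\omega$ and use $L$-smoothness:
\[
\|\nabla_\omega\Phi(z^{k+1})\|\le\Bigl(\tfrac1{\eta_\omega}+L\Bigr)\|\Delta\omega\|.
\]

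For the $\beta$-block, I add and subtract $\nabla_\beta\Phi(\omega^k,\beta^{k+1})$. This gives a Lipschitz term $L\|\Delta\omega\|$ plus, by \eqref{eq:lm-approx-stat}, a term $b_\beta\|\Delta\beta\|$. Hence
\[
\|\nabla_\beta\Phi(z^{k+1})\|^2\le 2L^2\|\Delta\omega\|^2+2b_\beta^2\|\Delta\beta\|^2 .
\]

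Summing the squares of the two blocks,
\[
\|\nabla\Phi(z^{k+1})\|^2\le\Bigl[\Bigl(\tfrac1{\eta_\omega}+L\Bigr)^2+2L^2\Bigr]\|\Delta\omega\|^2+2b_\beta^2\|\Delta\beta\|^2\le b^2\|z^{k+1}-z^k\|^2,
\]
with $b$ as in the statement. Taking square roots gives \eqref{eq:H2}.

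\textbf{Main obstacle.} The delicate point is the $\beta$-block of the gradient. Proposition~\ref{prop:lm-descent} controls $\nabla_\beta\Phi$ only at $\omega^k$, not at the updated $\omega^{k+1}$, so I must transfer the bound across the $\omega$-update. This needs the Lipschitz continuity of the full gradient $\nabla\Phi$ (not just of $\nabla_\beta\Phi$ in $\beta$), so that the cross term $L\|\Delta\omega\|$ appears. It also needs all intermediate points $(\omega^k,\beta^{k+1})$ and the segments between them to lie in $\mathcal{B}$; I will take this as part of Assumption~\ref{asm:bounded}, enlarging $\mathcal{B}$ to a compact convex hull if necessary.
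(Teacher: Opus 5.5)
Your proposal is correct and follows the paper's proof essentially step for step. It uses the same split into a lower-level move and an upper-level move, combines the descent lemma for the $\omega$ gradient step with \eqref{eq:lm-suff-decrease}, adds and subtracts $\nabla_\omega\Phi(\omega^k,\beta^{k+1})$ and $\nabla_\beta\Phi(\omega^k,\beta^{k+1})$, and arrives at the same constants $c$ and $b$.

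Your caveat is also a genuine point that the paper leaves implicit: the intermediate point $(\omega^k,\beta^{k+1})$ and the connecting segments must lie in the region where $L$-smoothness holds. Note, however, that the convex hull of $\mathcal{B}$ alone need not contain $(\omega^k,\beta^{k+1})$. A clean fix is to work on the convex hull of the product of the $\omega$- and $\beta$-projections of $\mathcal{B}$, which is compact and convex, and to define $L$ as the supremum of $\|\nabla^2\Phi\|_2$ there.
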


\begin{proof}
Each iteration $k$ consists of two sub-steps, with $\beta$ update $(\omega^k, \beta^k) \to (\omega^k, \beta^{k+1})$ through inner LM iterations, followed by $\omega$ update $(\omega^k, \beta^{k+1}) \to (\omega^{k+1}, \beta^{k+1})$ through gradient descent. For $\beta$ update, Eq.~\eqref{eq:lm-suff-decrease} directly gives
\begin{equation}
\label{eq:H1-beta}
\Phi(\omega^k, \beta^{k+1})
\le \Phi(\omega^k, \beta^k) - c_\beta\|\beta^{k+1} - \beta^k\|^2.
\end{equation}

For $\omega$ update, the descent lemma for $L$-smooth functions gives,
\begin{equation*}
\Phi(\omega^{k+1}, \beta^{k+1})
\le
\Phi(\omega^k, \beta^{k+1})
+ \langle \nabla_\omega \Phi(\omega^k, \beta^{k+1}), \omega^{k+1} - \omega^k\rangle
+ \tfrac{L}{2}\|\omega^{k+1} - \omega^k\|^2.
\end{equation*}
Substituting the upper-level update rule $\omega^{k+1} - \omega^k = -\eta_\omega\nabla_\omega \Phi(\omega^k, \beta^{k+1})$, the inner-product term becomes
\begin{equation*}
\langle \nabla_\omega \Phi(\omega^k, \beta^{k+1}), \omega^{k+1} - \omega^k\rangle
= -\tfrac{1}{\eta_\omega}\|\omega^{k+1} - \omega^k\|^2.
\end{equation*}
Hence we have
\begin{equation}
\label{eq:H1-omega}
\Phi(\omega^{k+1}, \beta^{k+1})
\le \Phi(\omega^k, \beta^{k+1})
- \big(\tfrac{1}{\eta_\omega} - \tfrac{L}{2}\big)\|\omega^{k+1} - \omega^k\|^2.
\end{equation}
Define $c_\omega := \tfrac{1}{\eta_\omega} - \tfrac{L}{2}$, for a small enough learning rate $\eta_\omega < 2/L$ which ensures $c_\omega > 0$. Combining Eq.~\eqref{eq:H1-beta} and~\eqref{eq:H1-omega},
\[
\Phi(z^{k+1})
\le \Phi(z^k)
- c_\beta\|\beta^{k+1} - \beta^k\|^2
- c_\omega\|\omega^{k+1} - \omega^k\|^2.
\]
Setting $c := \min(c_\beta, c_\omega) > 0$ proves Eq.~\eqref{eq:H1}.

For the gradient bound, the update of $\omega$ and joint $L$-smoothness of Assumption~\ref{asm:bounded} give
\begin{equation}
\begin{aligned}
\label{eq:H2-omega-shift}
&\nabla_\omega \Phi(\omega^k, \beta^{k+1}) = -\tfrac{1}{\eta_\omega}(\omega^{k+1} - \omega^k), \\
&\|\nabla_\omega \Phi(\omega^{k+1}, \beta^{k+1}) - \nabla_\omega \Phi(\omega^k, \beta^{k+1})\|
\le L\|\omega^{k+1} - \omega^k\|.
\end{aligned}
\end{equation}
Combining both equations with the triangle inequality,
\begin{equation}
\label{eq:H2-omega-bound}
\|\nabla_\omega \Phi(z^{k+1})\|
\le \big(\tfrac{1}{\eta_\omega} + L\big)\|\omega^{k+1} - \omega^k\|.
\end{equation}

For the $\beta$ update, we decompose the gradient as
\begin{equation}
\begin{aligned}
\label{equ:H2-beta}
\|\nabla_\beta \Phi(\omega^{k+1}, \beta^{k+1})\|&=\|\big[\nabla_\beta \Phi(\omega^{k+1}, \beta^{k+1}) - \nabla_\beta \Phi(\omega^k, \beta^{k+1})\big]+\nabla_\beta \Phi(\omega^k, \beta^{k+1})\|,\\
& \le \|\nabla_\beta \Phi(\omega^{k+1}, \beta^{k+1}) - \nabla_\beta \Phi(\omega^k, \beta^{k+1}) \|+\|\nabla_\beta \Phi(\omega^k, \beta^{k+1})\|,\\
& \le L\|\omega^{k+1} - \omega^k\| + b_\beta\|\beta^{k+1} - \beta^k\|.
\end{aligned}
\end{equation}
Combining Eq.~\eqref{eq:H2-omega-bound} and~\eqref{equ:H2-beta},
\begin{align*}
\|\nabla\Phi(z^{k+1})\|^2
&= \|\nabla_\omega \Phi(z^{k+1})\|^2 + \|\nabla_\beta \Phi(z^{k+1})\|^2 \\
&\le \big(\tfrac{1}{\eta_\omega} + L\big)^2 \|\omega^{k+1} - \omega^k\|^2 + \big( L\|\omega^{k+1} - \omega^k\| + \ b_\beta\|\beta^{k+1} - \beta^k\|\big)^2, \\
&\le \big(\tfrac{1}{\eta_\omega} + L\big)^2 \|\omega^{k+1} - \omega^k\|^2
+ 2L^2 \|\omega^{k+1} - \omega^k\|^2
+ 2 b_\beta^2 \|\beta^{k+1} - \beta^k\|^2 \\ 
&= \Big(\big(\tfrac{1}{\eta_\omega} + L\big)^2 + 2L^2 \Big)\big(\|\omega^{k+1} - \omega^k\|^2\big) + 2 b_\beta^2\big(\|\beta^{k+1} - \beta^k\|^2\big).
\end{align*}
Setting $b = \max\left( \sqrt{\left(\tfrac{1}{\eta_\omega} + L\right)^2 + 2L^2},\ b_\beta\sqrt{2} \right)$ and taking square roots proves Eq.\eqref{eq:H2}.

\end{proof}

We now combine the joint sufficient decrease and gradient bound established in Lemma~\ref{lem:H1} with the boundedness assumption to derive the convergence properties of the FALM-PINN algorithm.

\begin{theorem}[Global convergence to critical points]
\label{thm:classical}
The iterate sequence $\{z^k\}_{k \ge 0}$ generated by Algorithm~\ref{alg:ifef} Phase~2 satisfies:
\begin{enumerate}
\item[\rm(i)] $\{\Phi(z^k)\}_{k \ge 0}$ is non-increasing and converges to some limit $\Phi^\star \ge 0$;
\item[\rm(ii)] $\sum_{k=0}^{\infty} \|z^{k+1} - z^k\|^2 \le \tfrac{1}{c}\bigl(\Phi(z^0) - \Phi^\star\bigr) < \infty$, and consequently $\|z^{k+1} - z^k\| \to 0$ as $k \to \infty$;
\item[\rm(iii)] $\|\nabla \Phi(z^k)\| \to 0$ as $k \to \infty$;
\item[\rm(iv)] $\{z^k\}$ admits at least one accumulation point, and every accumulation point $z^\star$ is a critical point of $\Phi$, i.e., $\nabla \Phi(z^\star) = 0$.
\end{enumerate}
\end{theorem}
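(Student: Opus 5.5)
The argument is the standard one for descent methods satisfying a sufficient-decrease condition and a relative-error gradient bound. Everything follows from Lemma~\ref{lem:H1} together with Assumption~\ref{asm:bounded}, and we take the four items in order.

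For (i), we apply \eqref{eq:H1} directly: $\Phi(z^{k+1})\le\Phi(z^k)-c\|z^{k+1}-z^k\|^2\le\Phi(z^k)$, so the sequence $\{\Phi(z^k)\}$ is non-increasing. Since $\hat{\mathcal{L}}_\lambda$ is a sum of squared residuals, $\Phi\ge 0$. A non-increasing sequence bounded below converges, so its limit $\Phi^\star\ge 0$ exists.

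For (ii), we rearrange \eqref{eq:H1} as $c\|z^{k+1}-z^k\|^2\le\Phi(z^k)-\Phi(z^{k+1})$ and sum from $k=0$ to $K-1$. The right-hand side telescopes to $\Phi(z^0)-\Phi(z^K)\le\Phi(z^0)-\Phi^\star$, using monotonicity from (i). Letting $K\to\infty$ gives the stated bound with $c>0$ from Lemma~\ref{lem:H1}. A convergent series of nonnegative terms has terms tending to zero, so $\|z^{k+1}-z^k\|\to 0$.

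For (iii), we use \eqref{eq:H2} with the index shifted: $\|\nabla\Phi(z^{k})\|\le b\|z^{k}-z^{k-1}\|$ for $k\ge 1$, and the right-hand side tends to zero by (ii).

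For (iv), Assumption~\ref{asm:bounded} places all iterates in the compact set $\mathcal{B}$, so Bolzano--Weierstrass gives a convergent subsequence $z^{k_i}\to z^\star\in\mathcal{B}$. Since $\Phi$ is $L$-smooth on $\mathcal{B}$, $\nabla\Phi$ is continuous there, and therefore $\nabla\Phi(z^\star)=\lim_i\nabla\Phi(z^{k_i})=0$ by (iii). The same argument applies to every accumulation point. We can also note that $\Phi(z^\star)=\Phi^\star$ by continuity, so all accumulation points lie on the same level set.

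The only point needing care is that the constants $c$ and $b$ in Lemma~\ref{lem:H1} must be uniform in $k$. This depends on the damping $\gamma>L/2$ being held fixed or bounded, and on the constant $\kappa_J$ in Proposition~\ref{prop:lm-descent} being uniform. We take these as given from the earlier results. Beyond that, the steps are routine.
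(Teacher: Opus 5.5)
Your proof is correct and follows the paper's argument essentially step for step: monotonicity plus $\Phi\ge 0$ for (i), telescoping the sufficient-decrease inequality \eqref{eq:H1} for (ii), the relative-error bound \eqref{eq:H2} for (iii), and compactness with continuity of $\nabla\Phi$ for (iv). Your closing caveat that $c$ and $b$ must be uniform in $k$ is a fair point the paper leaves implicit: it needs the damping to satisfy $\gamma > L/2$ uniformly and stay bounded, as used in the proof of Proposition~\ref{prop:lm-descent}. The extra observation $\Phi(z^\star)=\Phi^\star$ is correct and harmless.
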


\begin{proof}
The joint sufficient decrease in Lemma~\ref{lem:H1} gives $\Phi(z^{k+1}) \le \Phi(z^k) - c\|z^{k+1} - z^k\|^2 \le \Phi(z^k)$ for every $k \ge 0$, so the sequence $\{\Phi(z^k)\}$ is non-increasing. Since $\Phi(z)  \ge 0$, the sequence is also bounded below by zero. The monotone convergence theorem then yields a limit $\Phi^\star = \lim_{k \to \infty} \Phi(z^k) \in [0, \Phi(z^0)]$.

Rearranging the sufficient decrease inequality gives $c\|z^{k+1} - z^k\|^2 \le \Phi(z^k) - \Phi(z^{k+1})$, and summing over $k = 0, 1, \ldots, N$:
\[
c \sum_{k=0}^{N} \|z^{k+1} - z^k\|^2 \le \Phi(z^0) - \Phi(z^{N+1}) \le \Phi(z^0) - \Phi^\star.
\]
As $N\to \infty$, the right-hand side is a finite constant. This implies that the infinite series is bounded, and hence we have $\|z^{k+1} - z^k\| \to 0$.

The joint gradient bound in Lemma~\ref{lem:H1} gives $\|\nabla \Phi(z^{k+1})\| \le b \|z^{k+1} - z^k\|$. Then the right-hand side converges to zero by (ii) as $k\to \infty$, thus $\|\nabla \Phi(z^k)\| \to 0$.

For the last claim, since the sequence $\{z^k\} \subset \mathcal{B}$ with $\mathcal{B}$ compact from Assumption~\ref{asm:bounded}, there exists a convergent subsequence $z^{k_j} \to z^\star \in \mathcal{B}$. The joint $L$-smoothness in Assumption~\ref{asm:bounded} ensures that $\nabla \Phi$ is Lipschitz and therefore continuous, hence $\nabla \Phi(z^{k_j}) \to \nabla \Phi(z^\star)$ as $j \to \infty$. By (iii), $\|\nabla \Phi(z^{k_j})\| \to 0$, and combining these two limits yields $\nabla \Phi(z^\star) = 0$.
\end{proof}

\section{Numerical experiments}
\label{sec:experiments}
In this section, we conduct experiments to evaluate the proposed FALM-PINN framework against representative and state-of-the-art baselines across a range of PDE benchmarks, together with a plain PINN (vanilla PINN) as a reference. Each baseline adopts a distinct strategy for mitigating spectral bias:

\begin{itemize}
    \item \textbf{Residual-Based Attention (RBA)}~\cite{anagnostopoulos2024residual}: 
    RBA is a gradient-free, point-wise reweighting strategy by calculating the moving average of normalized residuals, guiding the network to adaptively focus on high-error regions.

    \item \textbf{Physics-Informed Kolmogorov--Arnold Network 
    (PIKAN)}~\cite{wang2025kolmogorov}: PIKAN leverages the Kolmogorov--Arnold Network architecture with learnable B-spline functions to replace traditional MLPs, effectively mitigating spectral bias on high-frequency and multi-scale PDEs.

    \item \textbf{CompleX-PINN}~\cite{si2025complexphysicsinformedneuralnetwork}:
    CompleX-PINN employs a learnable Cauchy activation function within a single hidden layer, reducing parameter complexity compared to conventional 
    deep MLPs while targeting stiff and high-frequency PDEs.

    \item \textbf{Sinusoidal Representation Networks 
    (SIREN)}~\cite{sitzmann2020implicit}: SIREN replaces the standard smooth activation functions with periodic sinusoidal activations $\phi_i(x) = \sin(\omega_0W_ix+b_i)$, equipping the network with an inherent inductive bias toward high-frequency functions. This enables the network to represent fine-scale solution features that are typically difficult to learn under spectral bias.
\end{itemize}

\paragraph{Training and testing points} The testing dataset consists of uniformly 
distributed grid points: for 2D spatial or 1D spatio-temporal PDEs, 
we use a uniform $300 \times 300$ grid of $90{,}000$ points, while 
for 2D spatio-temporal PDEs, we use a uniform $100 \times 100 \times 100$ 
grid of $100{,}000$ points. Training samples are drawn independently 
via Latin Hypercube Sampling (LHS) from the relevant domains, where 
$N_f$, $N_b$, and $N_i$ denote the number of physics collocation, 
boundary, and initial condition points, respectively. The corresponding 
loss weights for these three terms are set to $0.01$, $1$, and 
$1$ uniformly across all experiments. The testing and training sets are not necessarily disjoint, as both are independently sampled from the same domain.

Each method is trained and evaluated over five independent trials with different random seeds.
Unless specified, methods that employ a standard MLP use a fully connected network with $\tanh$ activations; methods with specialized architectures (PIKAN, compleX-PINN, SIREN) follow their respective configurations.
All methods are optimized using Adam, and we follow the hyperparameter settings or adopt the 
tuning guidelines recommended for all baselines in their original publications.
Model accuracy is assessed using the relative $L^2$ error, defined as:
\begin{equation}
\label{eq:rel_l2}
   \text{Relative } L^2 \text{ error} = \frac{\sqrt{\sum\nolimits_{k=1}^{N} 
    \bigl|\hat{u}(\mathbf{x}_k) - u(\mathbf{x}_k)\bigr|^2}}
    {\sqrt{\sum\nolimits_{k=1}^{N} 
    \bigl|u(\mathbf{x}_k)\bigr|^2}},
\end{equation}
where $u$ denotes the exact PDE solution, $\hat{u}$ is the predicted 
output of the model, and $N$ is the number of points in the 
test set. We report the mean $\pm$ standard deviation across 
all five independent trials in the numerical tables, and present the heatmap 
results corresponding to the best-performing trial for each method. 
All experiments are conducted on a single NVIDIA RTX 4090 GPU.

\subsection{2D Klein--Gordon equation}
The Klein--Gordon equation serves as a representative test of long-time approximation in nonlinear wave propagation. We consider the following initial-boundary value problem on the spatio-temporal domain $\Omega = T\times \Omega_s $ with $T = [0, 10]$ and $\Omega_s=[0,1]^2$:
\begin{equation}
\begin{aligned}
    &u_{tt} - \Delta u + u^2 = f, \quad (t,x,y) \in \Omega,\\
    &u(0,x,y) = x + y, \quad (x,y) \in \Omega_s,\\
    &u_t(0,x,y) = xy, \quad (x,y) \in \Omega_s,\\
    &u(t,x,y) = g(t,x,y), \quad (x,y) \in \partial\Omega_s,\ t \in T,
\end{aligned}
\end{equation}
where the source term $f$ and boundary condition $g$ are obtained from the following analytical solution:
\begin{equation}
    u(t, x, y) = (x + y)\cos(t) + xy\sin(t).
\end{equation}

All MLP-based methods use a fully connected network with $4$ hidden layers of width $80$, and are trained for $50{,}000$ iterations with $N_f = 1{,}000$ collocation points, $N_b = 300$ boundary points, and $N_i = 100$ initial points. The baselines use Adam with an initial learning rate $5\times10^{-3}$, 
decayed by a factor of $0.8$ every $1{,}000$ iterations to a 
minimum of $1\times10^{-5}$.

Following Algorithm~\ref{alg:ifef}, training is split into two 
phases. In the warm-up phase ($N_{\mathrm{warm}} = 5{,}000$ 
iterations), both $\omega$ and $\beta$ are jointly optimized by Adam 
with an initial learning rate of $5\times10^{-3}$, decayed by a 
factor of $0.8$ every $1{,}000$ iterations. The Fourier feature mapping uses $D = 800$ features with 
bandwidth $\sigma = 1$.
In the subsequent alternating optimization phase ($N_{\mathrm{alter}} = 45{,}000$ 
iterations), the upper-level parameters~$\omega$ are updated by 
Adam with a reduced learning rate of $1\times10^{-4}$, following the same 
decay schedule to a floor of $1\times10^{-5}$. Each lower-level solve performs $J = 2$ damped subproblems with a fixed damping parameter $\gamma = 1\times10^{-6}$.

The method-specific hyperparameters for each baseline are as follows.
The RBA approach~\cite{anagnostopoulos2024residual} applies residual-based attention 
reweighting with parameters $\eta_{\mathrm{RBA}}=0.001$ and 
$\gamma_{\mathrm{RBA}} = 0.999$.
CompleX-PINN~\cite{si2025complexphysicsinformedneuralnetwork} employs a single Cauchy layer of width $200$.
PIKAN~\cite{wang2025kolmogorov} uses a 
Kolmogorov--Arnold Network of hidden structure $[3, 8, 8, 8, 1]$, 
grid size $10$, and spline order $3$.
SIREN~\cite{sitzmann2020implicit} employs sinusoidal activations following the original implementation, with $\omega_0 = 30$.

As illustrated in Fig.~\ref{fig:kg2d_convergence}, FALM-PINN exhibits a distinct two-stage behavior: during the initial warm-up phase, its convergence rate remains comparable to that of the baselines. After switching to the alternating optimization phase, the error decreases by nearly two orders of magnitude while maintaining stable convergence throughout training. This sharp improvement indicates that the warm-up stage provides a suitable feature basis and coefficient initialization, while the lower-level subproblem updates in Eq.~\eqref{equ:lm_subproblem} rapidly refine the projection coefficients to produce a more accurate approximation. This demonstrates the advantage of decoupling basis learning from coefficient optimization for nonlinear PDEs. The final average relative $L^2$ errors, reported in Table~\ref{tab:kg2d}, show that FALM-PINN achieves errors two orders of magnitude lower than all baselines. Meanwhile, its training time and peak memory remain within the range spanned by the baselines, indicating that this substantial accuracy gain is not achieved at the expense of excessive computational cost.

 \begin{table}[htbp]
\centering
\caption{2D Klein--Gordon equation: relative $L^2$ errors (mean $\pm$ std over five independent runs), GPU time per $1000$ iterations, and peak GPU memory for all methods. The lowest value in each column is shown in bold.}
\label{tab:kg2d}
\setlength{\tabcolsep}{6pt}
\renewcommand{\arraystretch}{1.15}
\begin{tabular}{lccc}
\toprule
Method & Relative $L^2$ error & Time (s) & Memory (GB) \\
\midrule
Vanilla PINN  & $8.11\times10^{-3}\pm3.98\times10^{-4}$ & $\mathbf{4.06}$  & $\mathbf{0.07}$\\
RBA           & $2.96\times10^{-3}\pm1.05\times10^{-3}$ & $4.07$  & $0.64$\\
compleX-PINN  & $4.03\times10^{-3}\pm4.94\times10^{-4}$ & $5.05$  & $3.77$ \\
PIKAN         & $1.14\times10^{-3}\pm1.68\times10^{-4}$ & $36.03$ & $1.95$ \\
SIREN         & $4.02\times10^{-3}\pm7.79\times10^{-4}$ & $5.81$  & $0.95$\\
\midrule
\textbf{FALM-PINN} & $\mathbf{3.50\times10^{-5}\pm4.88\times10^{-6}}$ & $7.94$ & $0.25$\\
\bottomrule
\end{tabular}
\end{table}

\begin{figure}[htbp]
    \centering
    \includegraphics[width=0.99\linewidth]{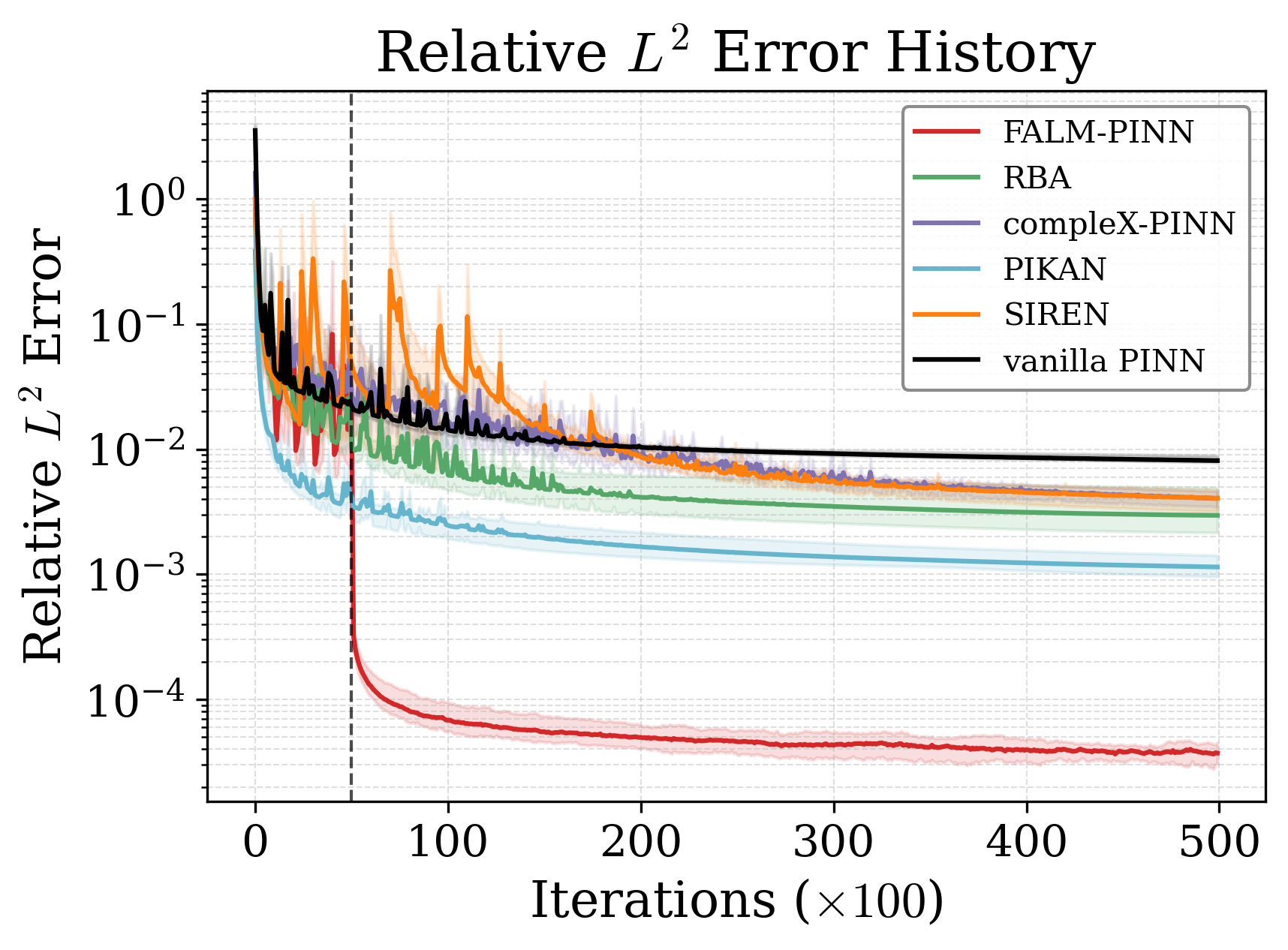}
    \caption{Convergence trajectories on the 2D Klein--Gordon equation. Relative $L^2$ error against training iterations; curves denote the mean across five independent trials, and shaded bands indicate the min--max range. The vertical dashed line marks the transition from the warm-up phase ($N_{\mathrm{warm}}=5{,}000$) to the alternating optimization phase.}
    \label{fig:kg2d_convergence}
\end{figure}
To further assess the approximation, Fig.~\ref{fig:kg2d_exact} and~\ref{fig:kg2d_heatmap} display the exact solution and point-wise absolute error maps for all methods at three time slices $t \in \{2.5, 5.0, 7.5\}$. FALM-PINN yields consistently low errors throughout the spatial domain and over long-time evolution, with the maximum point-wise absolute error remaining below $3\times10^{-5}$. These results validate the ability of FALM-PINN to accurately resolve nonlinear PDEs over long time horizons with a small training set.

\begin{figure}[htbp]
    \centering
    \includegraphics[width=0.99\linewidth]{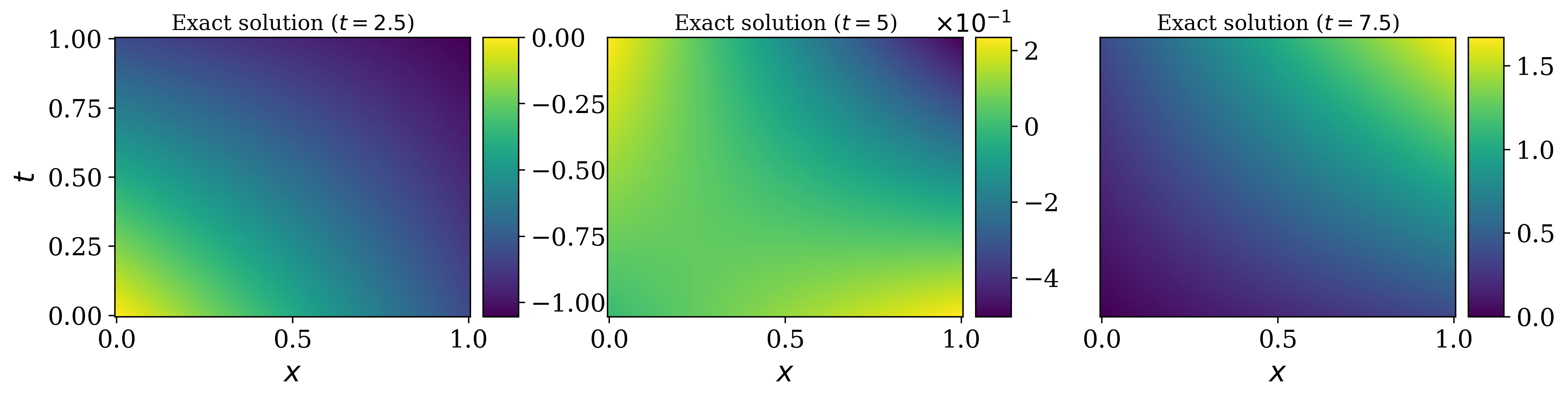}
    \caption{Exact solution of the 2D Klein--Gordon equation at three time slices $t \in \{2.5, 5.0, 7.5\}$.}
    \label{fig:kg2d_exact}
\end{figure}

\begin{figure}[p]
    \centering
    \includegraphics[width=0.99\linewidth, height=0.95\textheight, keepaspectratio]{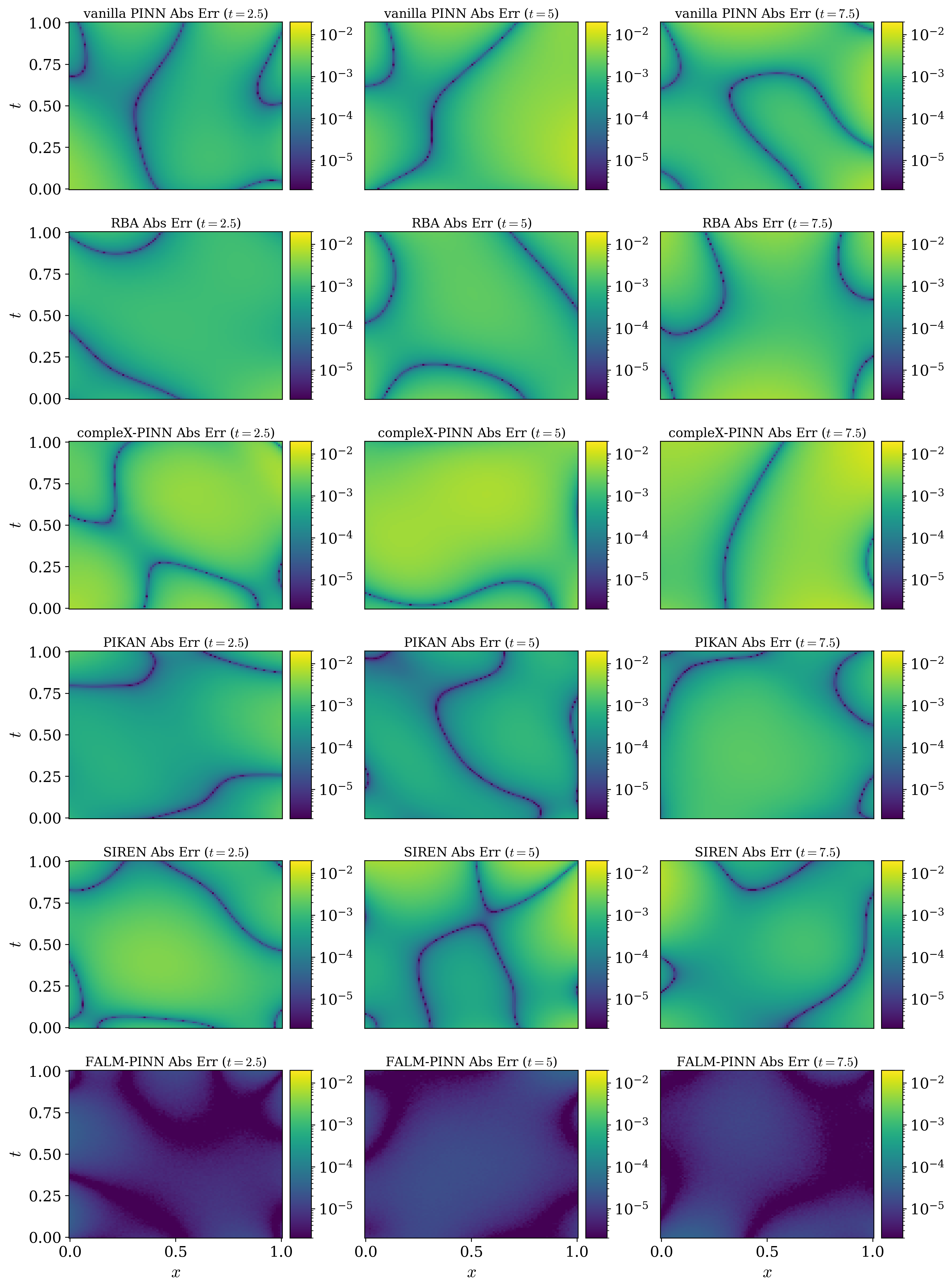}
    \caption{Point-wise absolute error maps in log scale on the 2D Klein--Gordon equation at time slices $t \in \{2.5, 5.0, 7.5\}$ (columns) for vanilla PINN, RBA, compleX-PINN, PIKAN, SIREN, and FALM-PINN respectively.}
    \label{fig:kg2d_heatmap}
\end{figure}

\subsection{1D Korteweg--de Vries equation}
\label{sec:kdv1d}
The Korteweg--de Vries (KdV) equation couples nonlinear convection with third-order dispersion, producing dispersive wave dynamics that pose a challenge for PINNs. Following~\cite{penwarden2023unified}, we consider the equation over the short time horizon on the spatio-temporal domain $\Omega = T\times \Omega_s $ with $\Omega_s = [-1, 1]$ and $T = [0, 1]$. 
\begin{equation}
\begin{aligned}
    &u_t + u u_x + 0.0025\, u_{xxx} = 0, \quad (t, x) \in \Omega,\\
    &u(0, x) = \cos(\pi x), \quad x \in \Omega_s,\\
    &u(t, -1) = u(t, 1),\ \ u_x(t, -1) = u_x(t, 1),\ \ u_{xx}(t, -1) = u_{xx}(t, 1), \quad t \in T.
\end{aligned}
\end{equation}
The reference solution is shown in Fig.~\ref{fig:kdv1d_reference}.

\begin{figure}
    \centering
    \includegraphics[width=0.5\linewidth]{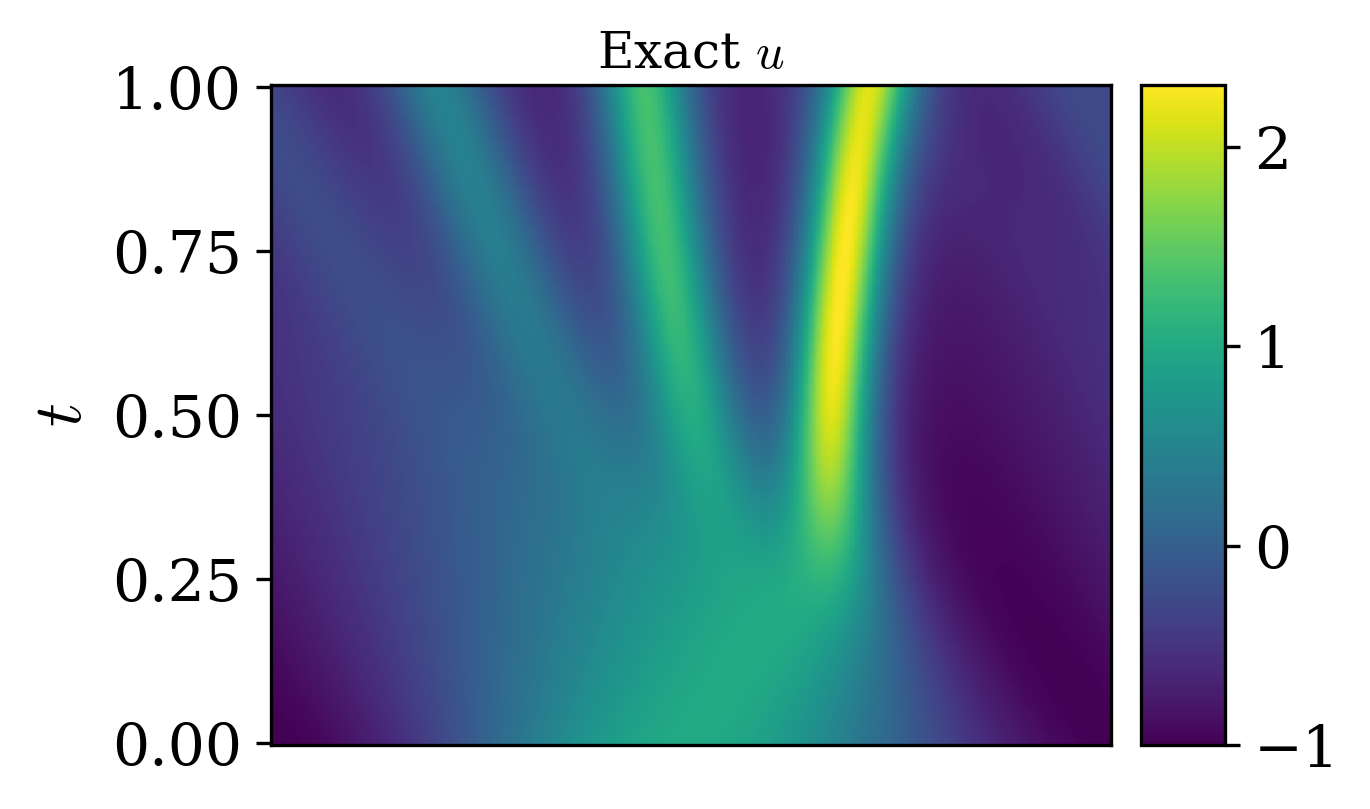}
    \caption{Reference solution of the 1D Korteweg--de Vries equation.}
    \label{fig:kdv1d_reference}
\end{figure}

All MLP-based methods share the same architecture of $6$ hidden layers of width $64$. We use $N_f = 10{,}000$ collocation points, $N_i = N_b = 200$ initial and boundary points. The baselines are optimized by Adam with an initial learning rate of $2 \times 10^{-3}$, exponentially decayed by a factor of $0.93$ every $2{,}000$ iterations down to a floor of $1 \times 10^{-5}$, and trained for $60{,}000$ iterations.

During the warm-up phase of FALM-PINN ($N_{\mathrm{warm}} = 5{,}000$), all parameters are jointly trained by Adam at a fixed learning rate of $2 \times 10^{-3}$. The Fourier feature mapping uses $D = 800$ features and bandwidth $\sigma = 1$. The subsequent alternating optimization phase ($N_{\mathrm{alter}} = 55{,}000$) updates the upper-level parameters $\omega$, starting from a learning rate of $1 \times 10^{-4}$ and following the same exponential decay scheduler as the baselines. Each lower-level solve performs $J = 2$ LM iterations with the damping initialized at $\gamma = 1\times 10^{-5}$ and decayed by a factor of $0.5$ every $2{,}000$ iterations, until below $1\times10^{-6}$.

All baseline configurations are as follows. RBA~\cite{anagnostopoulos2024residual} uses $\eta_{\mathrm{RBA}} = 0.001$ and $\gamma_{\mathrm{RBA}} = 0.999$. CompleX-PINN~\cite{si2025complexphysicsinformedneuralnetwork} adopts a single Cauchy layer of width $200$. PIKAN~\cite{wang2025kolmogorov} uses a Kolmogorov--Arnold network of structure $[2, 8, 8, 8, 1]$ with grid size $10$ and spline order $3$. SIREN~\cite{sitzmann2020implicit} follows its default
setting with $\omega_0=30$.

Table~\ref{tab:kdv1d} and Fig.~\ref{fig:kdv1d_convergence} summarize the numerical results for all methods. During the joint warm-up phase, FALM-PINN exhibits a convergence trend comparable to that of the baseline PINNs. After switching to the alternating phase, its relative $L^2$ error drops sharply and subsequently converges to
$4.42\times10^{-4}$. This behavior indicates that the learned Fourier-enhanced basis provides an effective
representation, while the lower-level LM updates rapidly refine the projection coefficients. The results support the effectiveness of separating basis learning from coefficient fitting for nonlinear dispersive dynamics.

\begin{table}[htbp]
\centering
\caption{Relative $L^2$ errors on the 1D Korteweg--de Vries equation
(mean $\pm$ standard deviation over five independent runs) for all methods. The lowest
error is shown in bold.}
\label{tab:kdv1d}
\setlength{\tabcolsep}{6pt}
\renewcommand{\arraystretch}{1.15}
\begin{tabular}{lc}
\toprule
Method & Relative $L^2$ error  \\
\midrule
Vanilla PINN  & $4.61\times10^{-2} \pm 4.03\times10^{-3}$ \\
RBA           & $7.38\times10^{-3} \pm 1.32\times10^{-3}$ \\
compleX-PINN  & $9.63\times10^{-3} \pm 1.54\times10^{-3}$ \\
PIKAN         & $2.52\times10^{-2} \pm 7.31\times10^{-3}$ \\
SIREN         & $2.92\times10^{-2} \pm 6.18\times10^{-3}$ \\
\midrule
\textbf{FALM-PINN} & $\mathbf{4.42\times10^{-4} \pm 1.81\times10^{-4}}$ \\
\bottomrule
\end{tabular}
\end{table}

\begin{figure}
    \centering
    \includegraphics[width=0.99\linewidth]{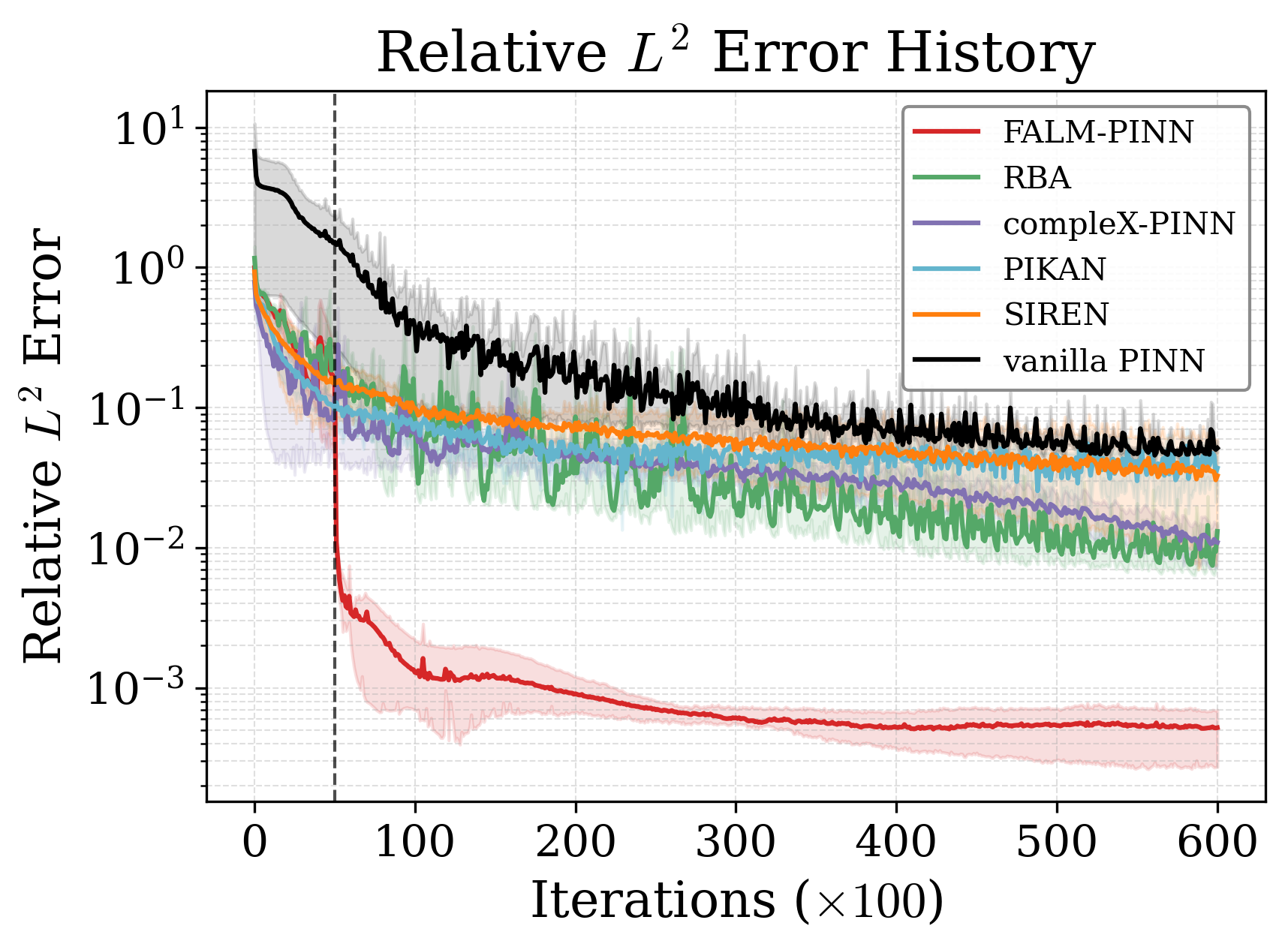}
    \caption{Relative $L^2$ error trajectories on the 1D Korteweg--de Vries equation. Curves denote the mean across five independent trials, and shaded bands indicate the min--max range. The vertical dashed line marks the transition from the warm-up phase ($N_{\mathrm{warm}}=5{,}000$) to the alternating optimization phase.}
    \label{fig:kdv1d_convergence}
\end{figure}

The predicted solutions and corresponding point-wise absolute errors are shown in Fig.~\ref{fig:kdv1d_heatmap}. FALM-PINN maintains uniformly lower errors across the spatio-temporal domain and more accurately resolves the narrow oscillatory wave patterns. These results demonstrate that the proposed method is effective in capturing the nonlinear dispersion structures.

\begin{figure}[htbp]
    \centering
    \includegraphics[width=0.99\linewidth, height=0.95\textheight, keepaspectratio]{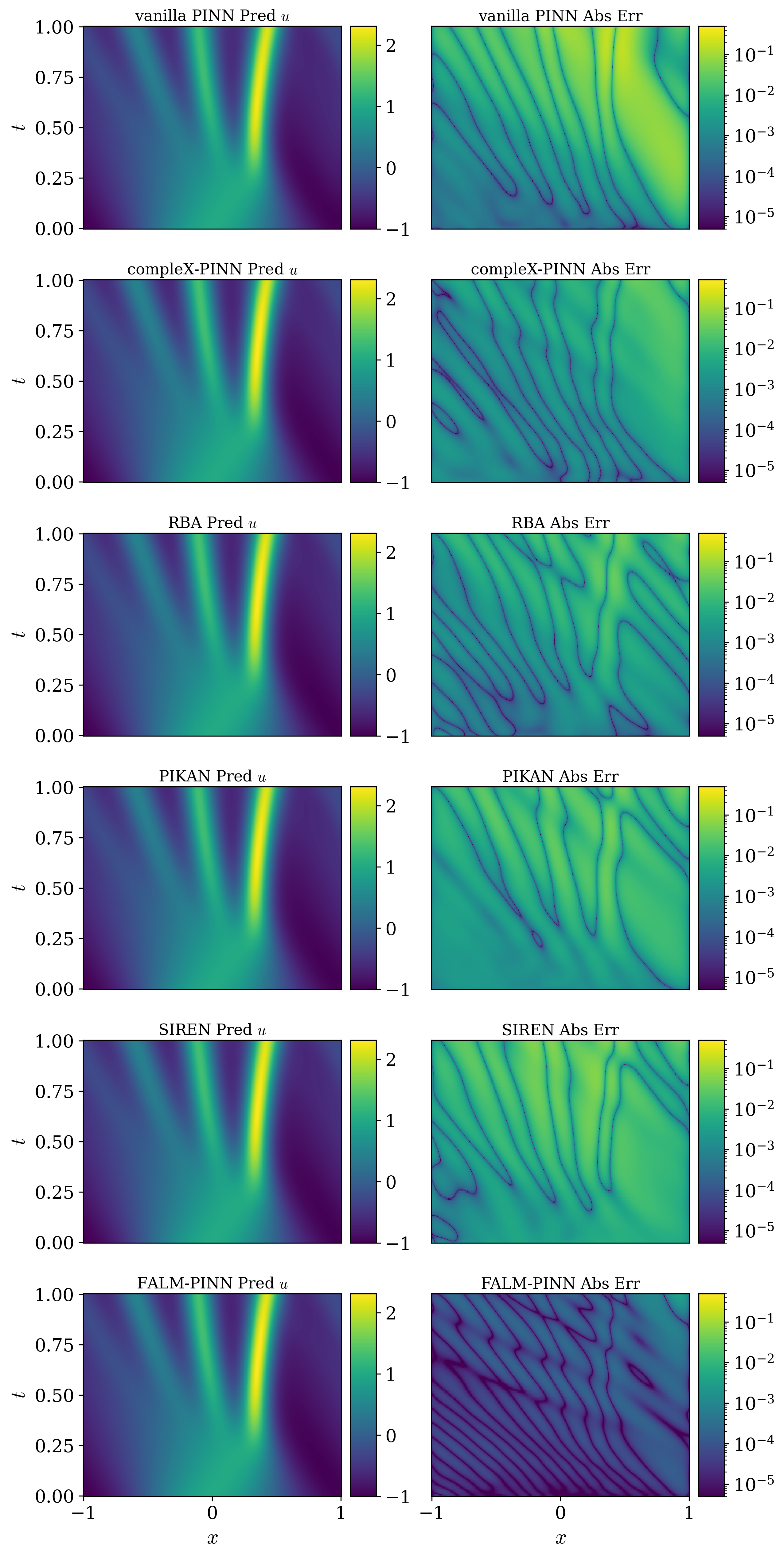}
    \caption{Predictions (left column) and absolute error maps in log scale (right column) on the 1D Korteweg--de Vries equation for vanilla PINN, RBA, compleX-PINN, PIKAN, SIREN, and FALM-PINN respectively.}
    \label{fig:kdv1d_heatmap}
\end{figure}

\subsection{1D Heat equation with high-Frequency solution}
High-frequency solutions are common failure modes for PINNs due to spectral bias. To evaluate the ability of FALM-PINN to capture oscillatory modes, we adopt the high-frequency heat equation benchmark from~\cite{wang2025kolmogorov}. The equation is defined on the spatio-temporal domain 
$\Omega = T\times\Omega_s$ with $T = [0,1]$ and $\Omega_s = [0,1] $:
\begin{equation}
\label{equ:heat_pde}
\begin{aligned}
    &u_t = \frac{1}{(F\pi)^2}u_{xx}, 
      \quad  (t,x) \in \Omega,\\
    &u(t,0) = u(t,1) = 0, 
      \quad t\in T, \\
      &u(0,x) = \sin(F\pi x), 
      \quad x\in \Omega_s,
\end{aligned}
\end{equation}
where $F$ denotes the spatial frequency. The analytical solution is $u(t,x) = e^{-t}\sin(F\pi x)$, which combines a slowly decaying temporal component with a highly oscillatory spatial mode. We set $F = 100$, which requires the network to recover high-frequency structures and is known to be challenging for standard PINNs. The exact solution is shown in Fig.~\ref{fig:heat1d_exact}.

\begin{figure}[htbp]
    \centering
    \includegraphics[width=0.5\linewidth]{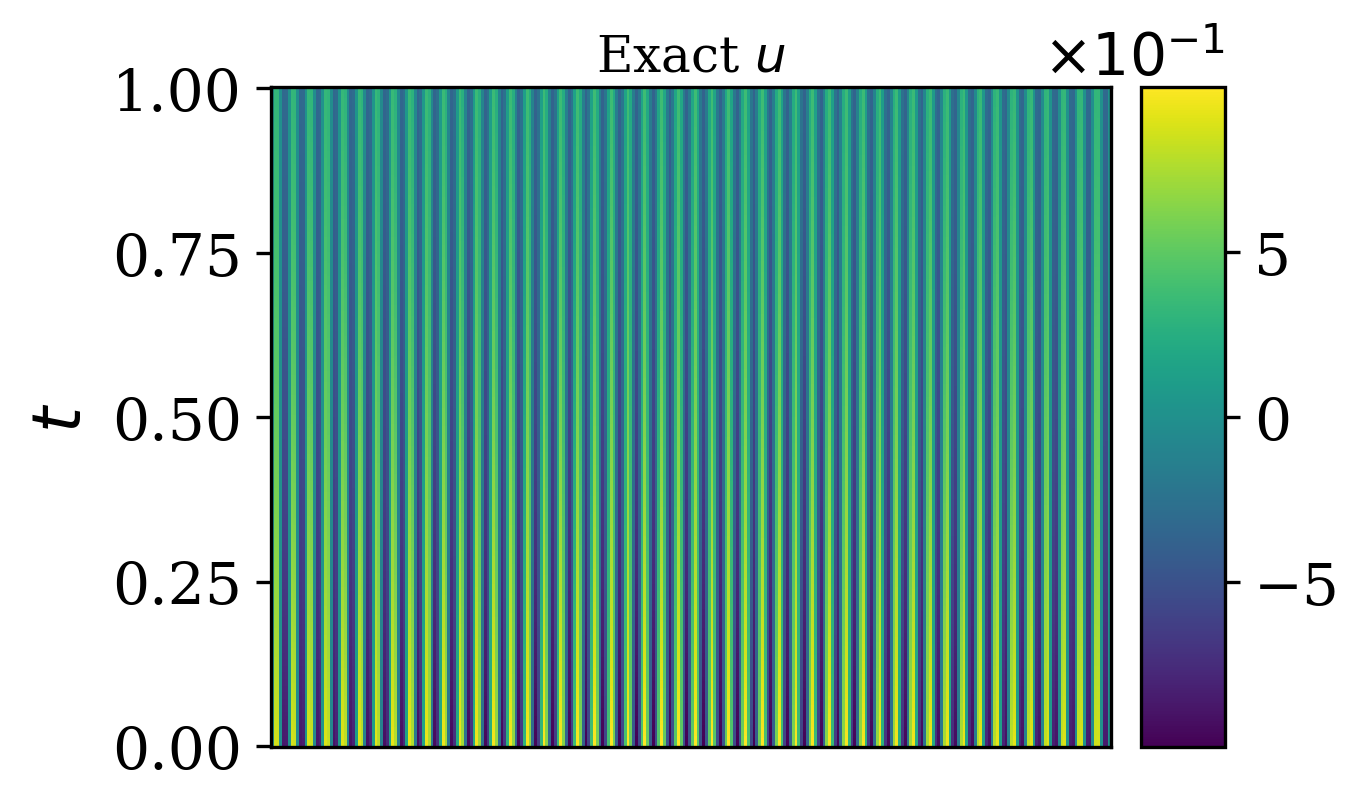}
    \caption{Exact solution of the 1D Heat equation.}
    \label{fig:heat1d_exact}
\end{figure}

All MLP-based methods use a fully connected network with $5$ hidden layers of width $80$. Training proceeds for $50{,}000$ iterations under Adam, with an initial learning rate of $1.5\times10^{-3}$ exponentially decayed by a factor of $0.8$ every $2{,}000$ iterations down to $1\times10^{-5}$. The training set consists of $N_f = 15{,}000$ collocation points, $N_b = 200$ boundary points, and $N_i = 200$ initial points.

Since the heat equation is linear, the lower-level problem of FALM-PINN is convex in $\beta$ and admits a closed-form one-step update, so no warm-up phase is required
($N_{\mathrm{warm}} = 0$). The Fourier feature mapping uses $D = 800$ features with bandwidth $\sigma = 10$, and the damping is set to $\gamma = 1\times10^{-7}$.
 
For the baselines, we follow the high-frequency tuning recommendations reported in their original papers.
RBA retains $\eta_{\mathrm{RBA}} = 0.001$ and 
$\gamma_{\mathrm{RBA}} = 0.999$. CompleX-PINN widens its single Cauchy layer to $500$, while PIKAN employs a $[2, 5, 5, 1]$ Kolmogorov–Arnold network with grid size $150$ and spline order~$3$.
SIREN~\cite{sitzmann2020implicit} employs sinusoidal activations with $\omega_0 = 100$, increased from the default value of $30$ to better capture the high-frequency components present in the target solution.

\begin{figure}[htbp]
    \centering
    \includegraphics[width=\linewidth]{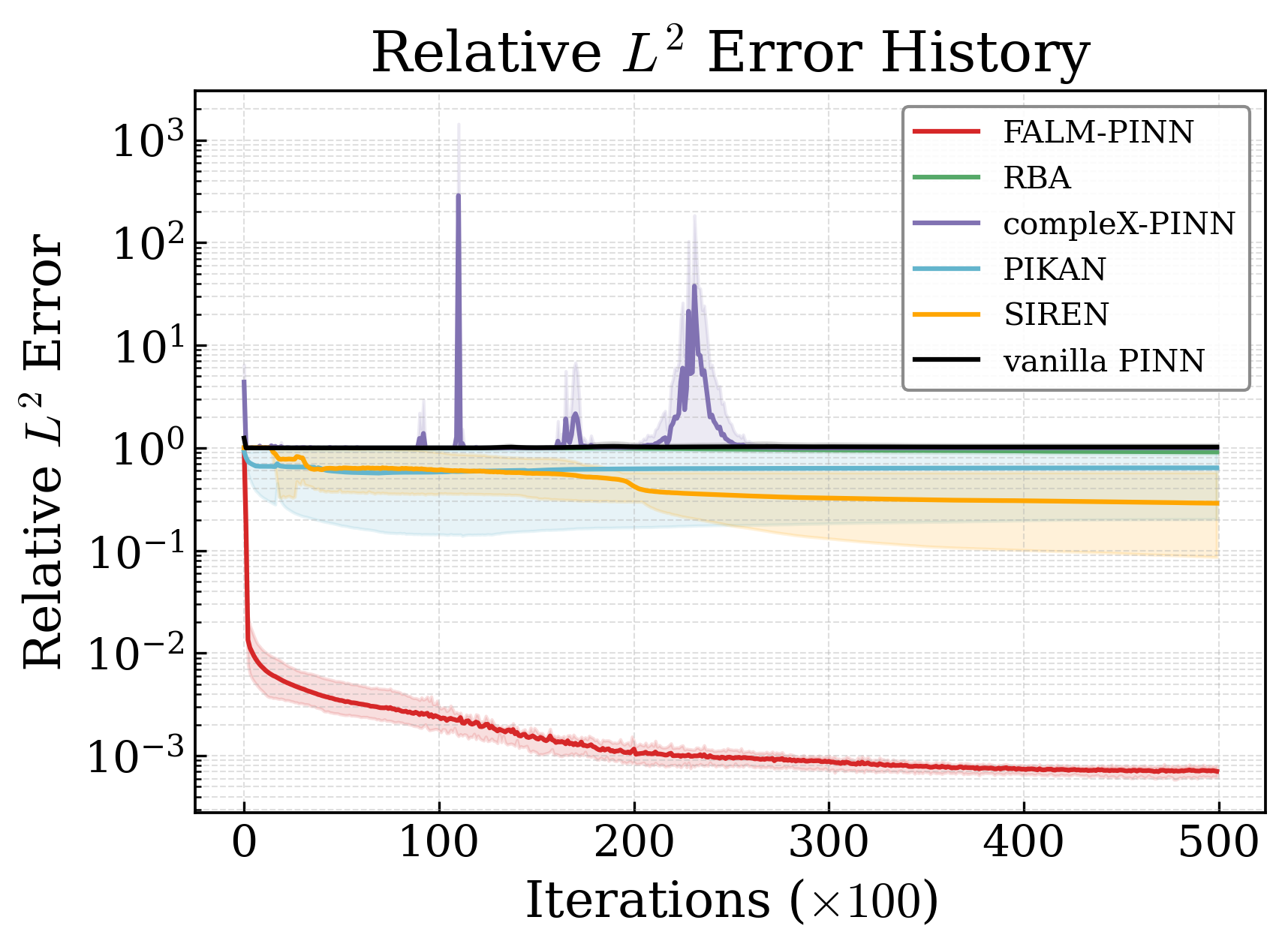}
    \caption{Convergence trajectories on the 1D heat equation. Relative $L^2$ error against training iterations; curves denote the mean across five independent trials, and shaded bands indicate the min--max range.}
    \label{fig:heat1d_convergence}
\end{figure}
The training histories for all methods are shown in Fig.~\ref{fig:heat1d_convergence}. None of the baselines resolves the high-frequency mode within the $50{,}000$ iterations. Vanilla PINN, RBA and compleX-PINN fail in this approximation. PIKAN and SIREN attain lower errors but still plateau at a high floor of about $10^{-1}$. In contrast, FALM-PINN converges stably to an error of $6.8\times10^{-4}$, more than two orders of magnitude below the best baseline.

\begin{figure}[p]
    \centering
    \includegraphics[width=0.99\linewidth, height=0.95\textheight, keepaspectratio]{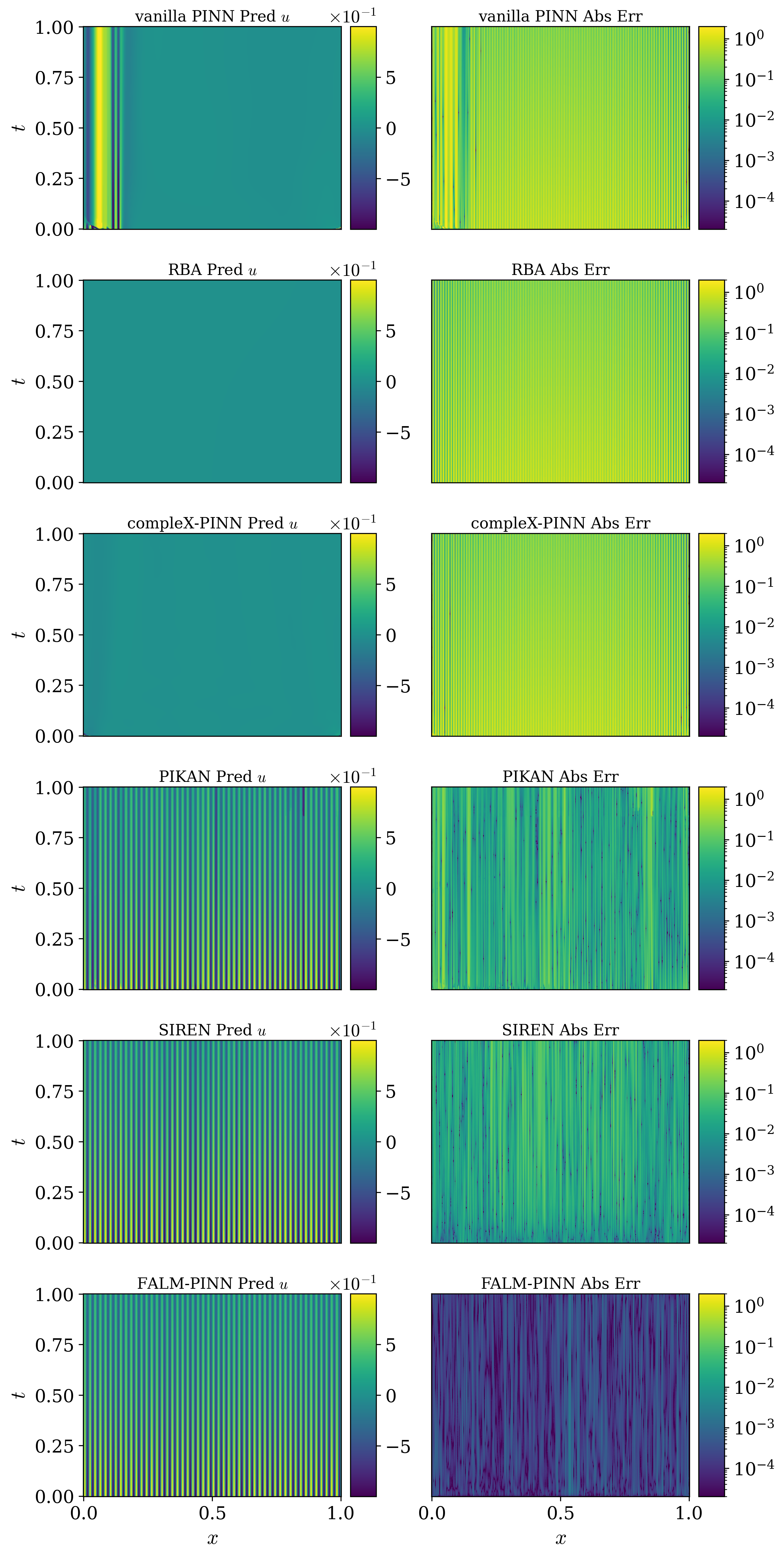}
    \caption{Predicted solutions (left column) and point-wise absolute errors in log scale (right column) on the 1D heat equation for vanilla PINN, RBA, compleX-PINN, PIKAN, SIREN, and FALM-PINN, respectively.}
    \label{fig:heat1D_pointwise}
\end{figure}
As demonstrated in Fig.~\ref{fig:heat1D_pointwise}, vanilla PINN, RBA, and compleX-PINN collapse to trivial, near-zero predictions that miss the oscillation entirely. PIKAN and SIREN show intermediate performance, capturing the general oscillatory pattern but with substantial point-wise errors across the domain. FALM-PINN is the only method accurately recovering the high-frequency mode with uniformly small error. This consistent superiority confirms that FALM-PINN can effectively mitigate spectral bias.

To further assess the role of the Fourier-enhanced basis in overcoming spectral bias, we conduct an ablation on the bandwidth hyperparameter $\sigma$, which governs the frequency distribution of the induced kernel following Lemma~\ref{lem:kernel_convergence}. We vary the spatial frequency $F \in \{10, 100, 150, 200\}$ against the bandwidth $\sigma \in \{1, 10, 25, 50\}$. To ensure that the high-frequency modes at $F = 200$ are sufficiently sampled to satisfy the Nyquist-Shannon criterion~\citep{shannon2006communication, wu2025iterative}, we double the number of boundary and initial points to $N_b = 400$ and $N_i = 400$. All remaining parameters are kept identical to the main experiment. 

The results of the ablation study are summarized in Table~\ref{tab:heat1d_ablation_sigma}; and Fig.~\ref{fig:heat1d_ablation_slices} further visualizes the prediction slices at the terminal time $t=1$ across different spatial frequencies. Both demonstrate that the bandwidth required for accurate approximation increases with the spatial frequency $F$. For the low-frequency case with $F=10$, a narrow bandwidth $\sigma = 1$ already suffices to span the spectral domain of the solution, and all tested bandwidths accurately reproduce the exact solution. However, as $F$ grows, only a sufficiently large $\sigma$ introduces enough high-frequency components into the Fourier-enhanced features to capture the rapid oscillatory structure. In the extreme regime $F = 200$, $\sigma = 1$ and $\sigma = 10$ fail to recover meaningful predictions; $\sigma = 25$ partially recovers the oscillation pattern but still exhibits noticeable approximation error; only $\sigma = 50$ preserves an accurate approximation, achieving a relative $L^2$ error of $3.3 \times 10^{-3}$.

This trend is consistent with Lemma~\ref{lem:kernel_convergence}, which indicates that $\sigma$ should scale with the dominant frequency of the target solution so that the Fourier-enhanced features span the relevant spectral band. Notably, even in the extremely high-frequency regime with $F = 200$, a properly chosen $\sigma$ enables FALM-PINN to attain an accurate approximation, underscoring the role of the Fourier-enhanced basis in mitigating spectral bias.

\begin{table}[htbp]
\centering
\caption{Ablation of the Fourier-feature bandwidth $\sigma$ across spatial frequencies $F$ on the 1D heat equation. Each entry reports the
relative $L^2$ error (mean $\pm$ standard deviation); the lowest error in each row is shown in bold. A dash '$-$' denotes a configuration that failed to produce a meaningful prediction.}
\label{tab:heat1d_ablation_sigma}
\setlength{\tabcolsep}{10pt}
\renewcommand{\arraystretch}{1.15}
\resizebox{\linewidth}{!}{%
\begin{tabular}{c cccc}
\toprule
 & \multicolumn{4}{c}{Bandwidth $\sigma$} \\
\cmidrule(lr){2-5}
Frequency $F$ & $1$ & $10$ & $25$ & $50$ \\
\midrule
$10$  & $ 8.0\times 10^{-5} \pm 9.9\times 10^{-6}$ & $\mathbf{  3.6\times 10^{-5} \pm 7.5\times 10^{-7} }$ & $4.7\times 10^{-5} \pm 2.3\times 10^{-5}  $ & $ 5.0\times 10^{-5} \pm 5.0\times 10^{-6} $ \\
$100$ & $ - $ & $\mathbf{ 6.3\times 10^{-4}\pm9.5\times 10^{-5} }$ & $ 1.3\times 10^{-3}\pm 3.5\times 10^{-4}$ & $ 1.0\times 10^{-3}\pm1.3\times 10^{-5} $ \\
$150$ & $ - $ & $ 1.9\times 10^{-1}\pm1.3\times 10^{-1} $ & $ 2.5\times 10^{-3}\pm1.1\times 10^{-3} $ & $\mathbf{ 1.8\times 10^{-3}\pm3.4\times 10^{-4} }$ \\
$200$ & $ - $ & $ - $ & $ 5.9\times 10^{-1}\pm 8.1\times 10^{-3}$ & $\mathbf{ 3.3\times 10^{-3}\pm2.4\times 10^{-4}}$ \\
\bottomrule
\end{tabular}
}%
\end{table}

\begin{figure}[htbp]
    \centering
    \includegraphics[width=0.99\linewidth]{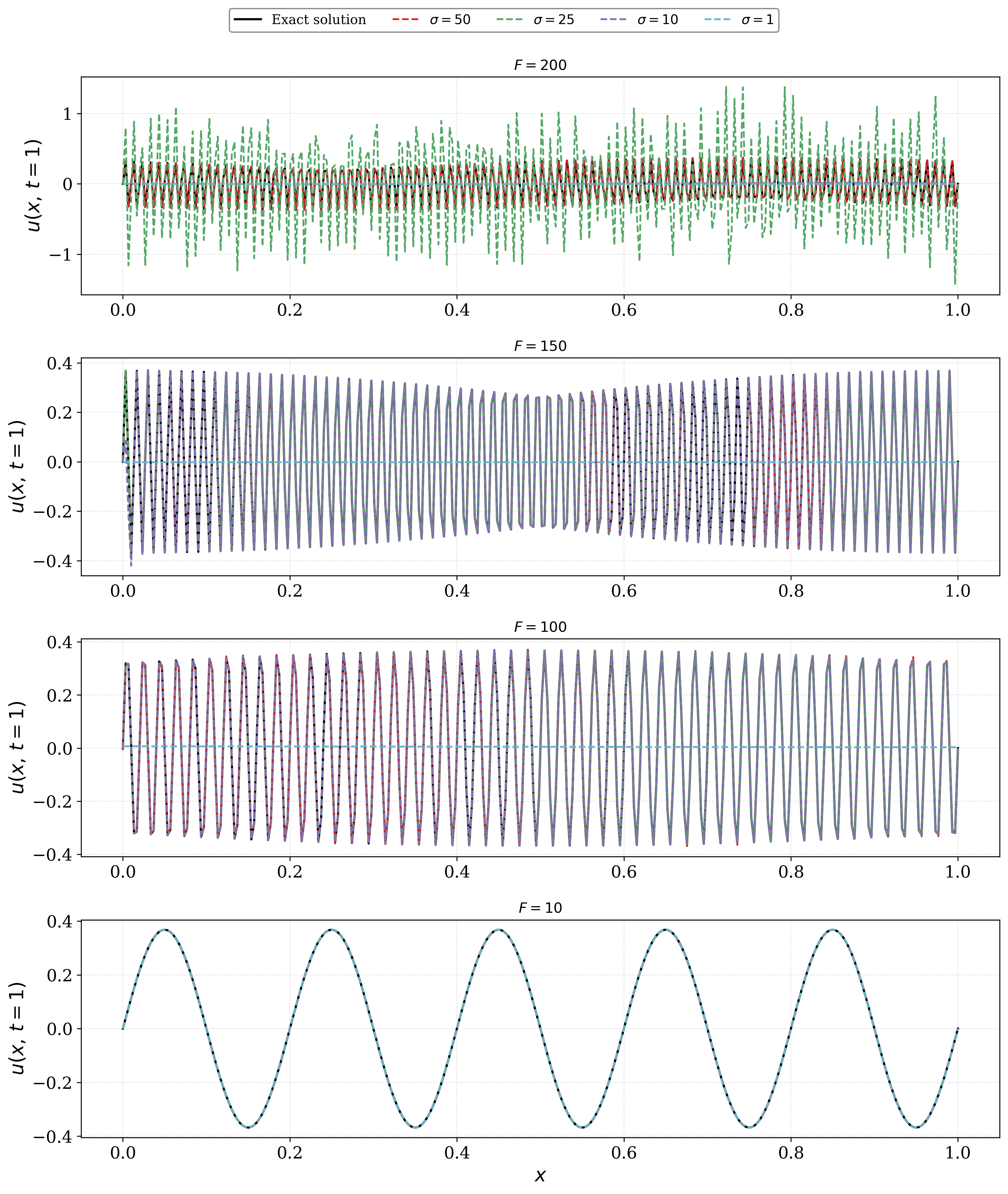}
    \caption{Prediction slices $u(x, t = 1)$ of FALM-PINN on the 1D heat equation for Fourier-feature bandwidths $\sigma \in \{1, 10, 25, 50\}$, at spatial frequencies $F \in \{200, 150, 100, 10\}$ (top to bottom). The black curve is the exact solution.}
    \label{fig:heat1d_ablation_slices}
\end{figure}

\subsection{Lid-driven cavity flow}
\label{sec:ns2d_cavity}
The lid-driven cavity flow is a popular benchmark for steady incompressible Navier--Stokes equations with nonlinear convection and coupled velocity--pressure fields. Following the setup in~\cite{hao2024pinnacle}, we consider the system on the spatial domain $\Omega_s=[0,1]^2$:
\begin{equation}
\begin{aligned}
    (\mathbf{u} \cdot \nabla)\mathbf{u} + \nabla p - \frac{1}{\mathrm{Re}}\Delta \mathbf{u} &= 0, & x &\in \Omega_s,\\
    \nabla \cdot \mathbf{u} &= 0, & x &\in \Omega_s,
\end{aligned}
\end{equation}
where $\mathbf{u} = (u_1, u_2)$ denotes the velocity field, $p$ denotes the pressure, and $\mathrm{Re} = 100$ is the Reynolds number. 
Denoting the top boundary by $\Gamma_1$ and the union of the 
left, right, and bottom boundaries by $\Gamma_2$, the flow is 
subject to
\begin{equation}
\begin{aligned}
    \mathbf{u}(x) &= \bigl(4x(1-x), 0\bigr), & x &\in \Gamma_1,\\
    \mathbf{u}(x) &= (0, 0), & x &\in \Gamma_2,\\
    p(0, 0) &= 0. 
\end{aligned}
\end{equation}

All MLP-based methods use a fully connected network with $3$ hidden layers, each with $100$ neurons.
We use $N_f = 8{,}196$ collocation points and
$N_b = 256$ boundary points. Training is performed using Adam for $30{,}000$ iterations, starting from an initial learning rate of $1 \times 10^{-3}$, 
decayed by a factor of $0.99$ every $100$ iterations.

The warm-up phase of FALM-PINN lasts for $N_{\mathrm{warm}} = 5{,}000$ iterations via Adam at a fixed learning rate of $1 \times 10^{-3}$. The Fourier feature mapping uses $D = 800$ features 
with bandwidth $\sigma = 1$. During the alternating optimization phase 
($N_{\mathrm{alter}} = 25{,}000$), the upper-level parameters $\omega$ continue to be optimized using Adam with an initial learning rate of $1 \times 10^{-4}$ decayed by a factor of $0.99$ every $100$ iterations down to a floor of 
$1 \times 10^{-5}$. Each lower-level solve 
performs $J = 2$ LM iterations. To accommodate the strong 
nonlinearity brought by the convection term, the LM damping is 
initialized at $\gamma_0 = 1 \times 10^{-5}$ and decayed by a 
factor of $0.5$ every $2{,}000$ iterations, until reaching $1\times10^{-6}$.

The remaining baseline configurations are as follows. 
RBA applies residual-based 
attention reweighting with $\eta_{\mathrm{RBA}} = 0.001$ and $\gamma_{\mathrm{RBA}} = 0.999$. 
CompleX-PINN adopts a single Cauchy layer of width $500$. 
PIKAN uses a Kolmogorov--Arnold 
network of structure $[2, 5, 5, 5, 3]$ with grid size $20$ and 
spline order $3$. SIREN employs sinusoidal activations with $\omega_0 = 30$ as prescribed in the original paper.

The relative $L^2$ error histories for all methods are presented in Fig.~\ref{fig:ns2d_convergence}. During the warm-up phase, the error reduction rate of FALM-PINN remains on the same scale as the baselines. Upon transition to the alternating optimization phase, the error decreases by nearly one order of magnitude and continues to decay smoothly, which is consistent with the two-stage behavior observed in the 2D Klein--Gordon experiment. Table~\ref{tab:ns_err} further reports the component-wise relative $L^2$ errors for $u_1$, $u_2$, and $p$. FALM-PINN achieves the lowest error in all three components and reduces the total error by about $45\%$ relative to SIREN, the most accurate baseline in this experiment.

\begin{table}[ht]
\centering
\caption{Relative $L^2$ error (mean $\pm$ standard 
deviation) for each solution component ($u_1$, $u_2$, $p$) on the 2D lid-driven cavity flow. The lowest error in each column is shown in bold.}
\label{tab:ns_err}
\setlength{\tabcolsep}{10pt}
\renewcommand{\arraystretch}{1.15}
\resizebox{\linewidth}{!}{%
\begin{tabular}{l cccc}
\toprule
Method & $u_1$ & $u_2$ & $p$ & Total\\
\midrule
Vanilla PINN      & $ 2.54\times10^{-2}\pm5.63\times10^{-3} $ & $ 4.81\times10^{-2}\pm1.21\times10^{-2} $ & $ 8.02\times10^{-2}\pm2.43\times10^{-2} $ &  $ 4.48\times10^{-2}\pm7.75\times10^{-3} $\\
RBA           & $ 1.72\times10^{-2}\pm1.53\times10^{-3} $ &$ 2.55\times10^{-2}\pm1.83\times10^{-3} $ & $ 3.69\times10^{-2}\pm4.93\times10^{-3} $ & $ 2.23\times10^{-2}\pm1.89\times10^{-3} $ \\
compleX-PINN  & $ 1.28\times10^{-2}\pm6.94\times10^{-4} $ & $ 2.22\times10^{-2}\pm1.71\times10^{-3} $  & $ 3.71\times10^{-2}\pm2.44\times10^{-3} $  & $ 1.96\times10^{-2}\pm1.16\times10^{-3} $ \\
PIKAN         & $ 4.99\times10^{-3}\pm8.37\times10^{-4} $&$ 7.72\times10^{-3}\pm1.08\times10^{-3} $ & $ 1.57\times10^{-2}\pm2.53\times10^{-3} $ & $ 7.82\times10^{-3}\pm1.43\times10^{-3} $ \\
SIREN         & $ 3.11\times10^{-3}\pm5.49\times10^{-4} $ & $ 4.28\times10^{-3}\pm1.33\times10^{-3} $ & $ 1.31\times10^{-2}\pm1.30\times10^{-3} $ &  $5.07\times10^{-3}\pm4.59\times10^{-4}$\\
\midrule
\textbf{FALM-PINN} & $ \mathbf{1.11\times10^{-3}\pm3.38\times10^{-4} }$ & $ \mathbf{2.06\times10^{-3}\pm1.87\times10^{-4}}$ &$ \mathbf{8.04\times10^{-3}\pm6.41\times10^{-4}}$ & $ \mathbf{2.78\times10^{-3}\pm1.29\times10^{-4}}$\\
\bottomrule
\end{tabular}
}%
\end{table}

\begin{figure}[htbp]
    \centering
    \includegraphics[width=0.99\linewidth]{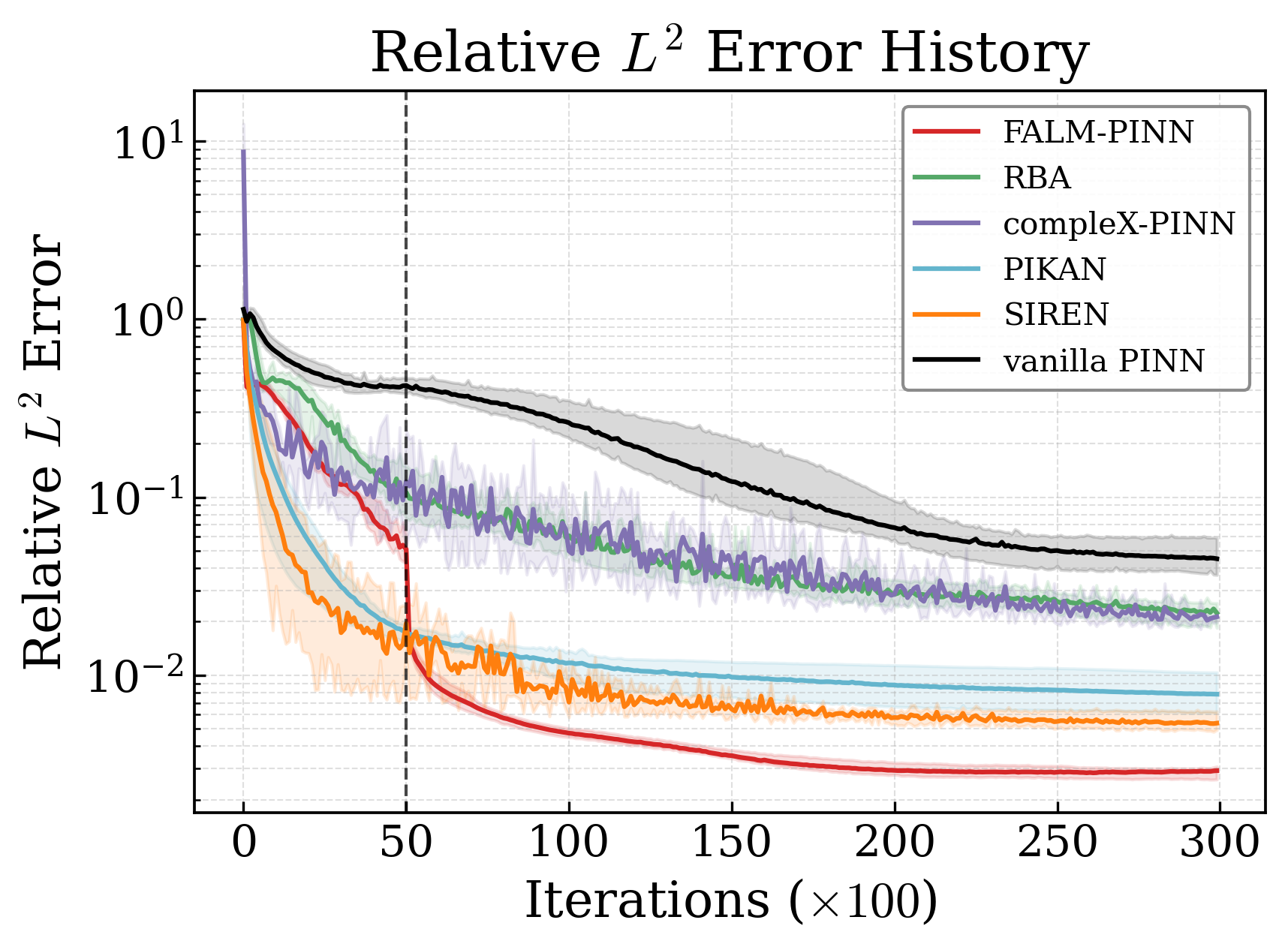}
    \caption{Relative $L^2$ error trajectories on the 2D lid-driven cavity flow. Curves denote the mean across five independent trials; shaded bands indicate the min--max range. The vertical dashed line marks the transition from the warm-up phase ($N_{\mathrm{warm}}=5{,}000$) to the alternating optimization phase.}
    \label{fig:ns2d_convergence}
\end{figure}
We further examine the point-wise absolute errors of the predicted velocity components $u_1$, $u_2$, and pressure $p$ in Fig.~\ref{fig:ns2d_fields}, computed against the reference solution shown in Fig.~\ref{fig:ns2d}. FALM-PINN achieves smaller errors across all three fields, indicating that its improvement is reflected not only in the relative $L^2$ error but also in the spatial error distribution. These results show that FALM-PINN accurately reproduces the coupled flow structure and remains effective for approximating nonlinear multi-component PDE systems.

\begin{figure}[htbp]
    \centering
    \includegraphics[width=0.99\linewidth, height=0.85\textheight, keepaspectratio]{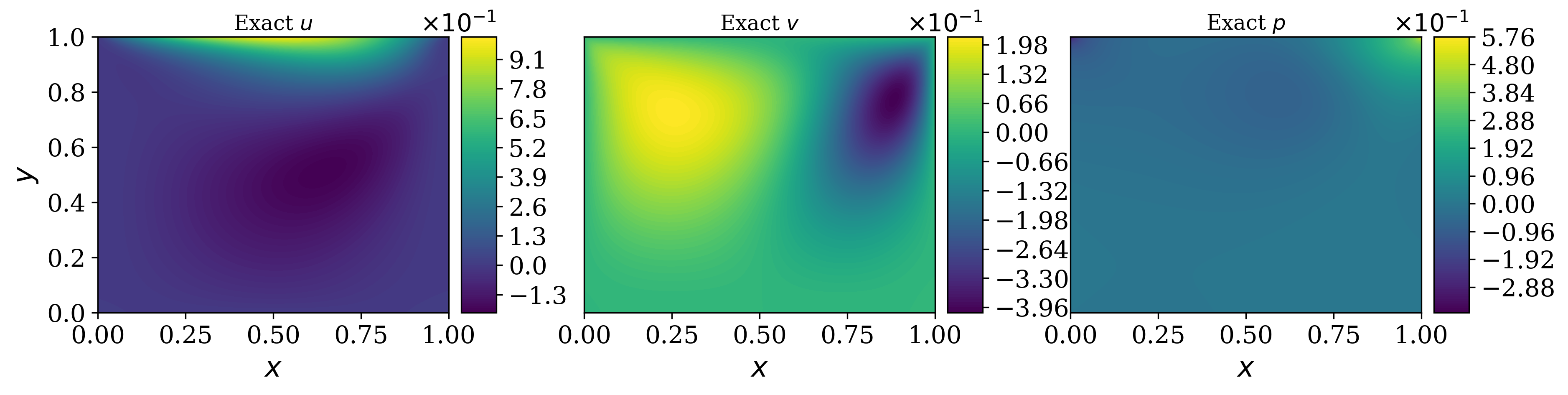}
    \caption{Reference solution of the 2D lid-driven cavity flow.}
    \label{fig:ns2d}
\end{figure}

\begin{figure}[p]
    \centering
    \includegraphics[width=0.99\linewidth, height=0.95\textheight, keepaspectratio]{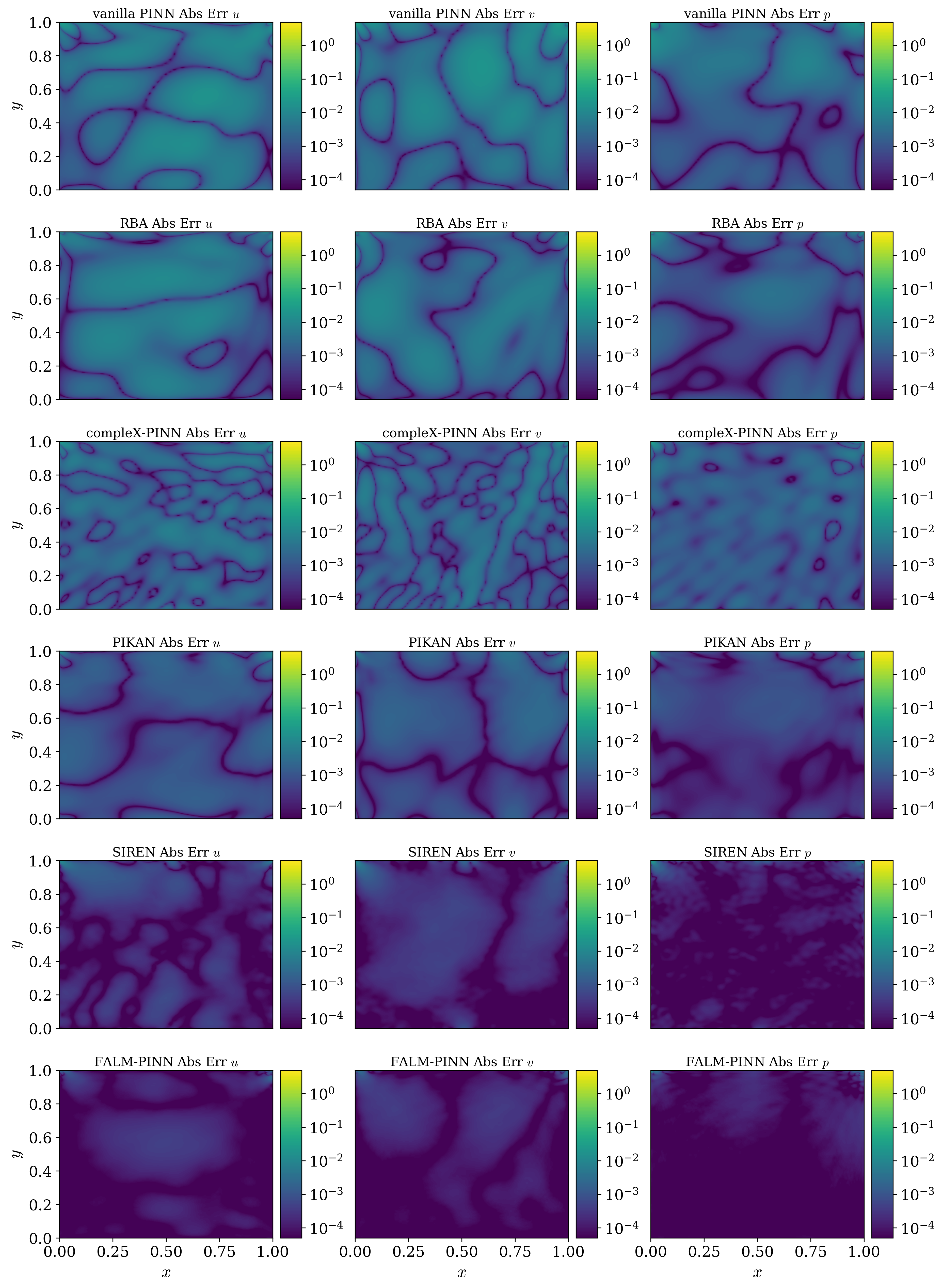}
    \caption{Point-wise absolute error maps (log scale) on the 2D lid-driven cavity flow. Columns: horizontal velocity $u_1$, vertical velocity $u_2$, and pressure $p$. Rows: vanilla PINN, RBA, compleX-PINN, PIKAN, SIREN, and FALM-PINN.}
    \label{fig:ns2d_fields}
\end{figure}

\subsection{1D viscous Burgers equation}
\label{sec:burgers1d}
The viscous Burgers equation is a nonlinear convection--diffusion benchmark that develops shock-like steep-gradient structures under small viscosity. Following the formulation in~\cite{raissi2019physics}, we consider the equation on the spatio-temporal domain 
$\Omega = T\times \Omega_s $ with $\Omega_s = [-1, 1]$ and 
$T = [0, 1]$:
\begin{equation}
\begin{aligned}
    &u_t + u u_x - \frac{0.01}{\pi} u_{xx} = 0,  \quad (t,x) \in \Omega,\\
    &u(0, x) = -\sin(\pi x),\quad  x \in \Omega_s,\\
    &u(t, -1) = u(t, 1) = 0,\quad  t \in T.
\end{aligned}
\end{equation}
All MLP-based methods share the same network architecture of $8$ hidden layers of width $20$. We use $N_f = 10{,}000$ collocation points and $N_i + N_b = 100$ combined initial and boundary points. The baselines are optimized by Adam with an initial learning rate of $5 \times 10^{-3}$, exponentially decayed by a factor of $0.7$ every $1{,}000$ iterations down to a floor of $1 \times 10^{-5}$, and trained for $40{,}000$ iterations for all methods to fully converge.

During the warm-up phase of FALM-PINN ($N_{\mathrm{warm}} = 10{,}000$), all parameters are jointly trained by Adam at a fixed learning rate of $1 \times 10^{-3}$. The Fourier feature mapping uses $D = 800$ features and bandwidth $\sigma = 1$. The subsequent alternating optimization phase ($N_{\mathrm{alter}} = 30{,}000$) then updates the upper-level parameters $\omega$, starting from a learning rate of $1 \times 10^{-4}$ and following the same exponential decay scheduler as the baselines. Each lower-level solve performs $J = 1$ LM iteration with the damping initialized at $\gamma = 1\times10^{-5}$ and decayed by a factor of $0.5$ for every $1000$ iterations, until below $1\times10^{-6}$.

For the baselines, RBA~\cite{anagnostopoulos2024residual} uses residual-based 
attention reweighting with $\eta_{\mathrm{RBA}} = 0.001$ and 
$\gamma_{\mathrm{RBA}} = 0.999$. 
CompleX-PINN~\cite{si2025complexphysicsinformedneuralnetwork} 
adopts a single Cauchy layer of width $200$. 
PIKAN~\cite{wang2025kolmogorov} uses a Kolmogorov--Arnold network 
of structure $[2, 5, 5, 5, 1]$ with grid size $15$ and spline 
order $3$. SIREN~\cite{sitzmann2020implicit} is used with its default configuration with $\omega_0 = 30$.

Table~\ref{tab:burgers1d} and Fig.~\ref{fig:burgers1d_convergence} summarize the final results and the convergence behavior of all methods. During the warm-up phase, FALM-PINN follows a decay trend comparable to the baselines. After switching to the alternating optimization phase, however, the error undergoes a sharp decline and reaches a final relative $L^2$ error of $5.10\times10^{-5}$, while all baseline methods remain above $1\times10^{-3}$. This two-stage pattern echoes the behavior observed in the previous experiments and further supports the effectiveness of the framework. However, the training time and memory cost of FALM-PINN are higher than those of the baselines on this problem. This is because FALM-PINN relies on the residual matrix, whose dimension grows with the number of collocation points. Consequently, its computational and memory costs increase more rapidly than the baselines as the dataset size grows.

\begin{table}[htbp]
\centering
\caption{1D viscous Burgers equation: relative $L^2$ errors (mean $\pm$ std over five independent runs), GPU time per $1000$ iterations, and peak GPU memory for all methods. The lowest value in each column is shown in bold.}
\label{tab:burgers1d}
\setlength{\tabcolsep}{6pt}
\renewcommand{\arraystretch}{1.15}
\begin{tabular}{lccc}
\toprule
Method & Relative $L^2$ error & Time (s) & Memory (GB) \\
\midrule
Vanilla PINN  & $2.16\times10^{-2}\pm5.99\times10^{-3}$ & $\mathbf{3.29}$& $\mathbf{0.08}$\\
RBA           & $3.03\times10^{-3}\pm9.16\times10^{-4}$ & $3.95$ & $0.09$ \\
compleX-PINN  & $3.53\times10^{-2}\pm4.47\times10^{-3}$ & $4.03$ & $0.44$\\
PIKAN         & $2.13\times10^{-2}\pm7.69\times10^{-3}$ & $15.59$ & $0.43$\\
SIREN         & $1.53\times10^{-2}\pm5.65\times10^{-3}$ & $4.12$ &  $0.10$\\
\midrule
\textbf{FALM-PINN} & $\mathbf{5.10\times10^{-5}\pm1.22\times10^{-5}}$ &$14.35$ & $1.25$ \\
\bottomrule
\end{tabular}
\end{table}

\begin{figure}
    \centering
    \includegraphics[width=0.99\linewidth]{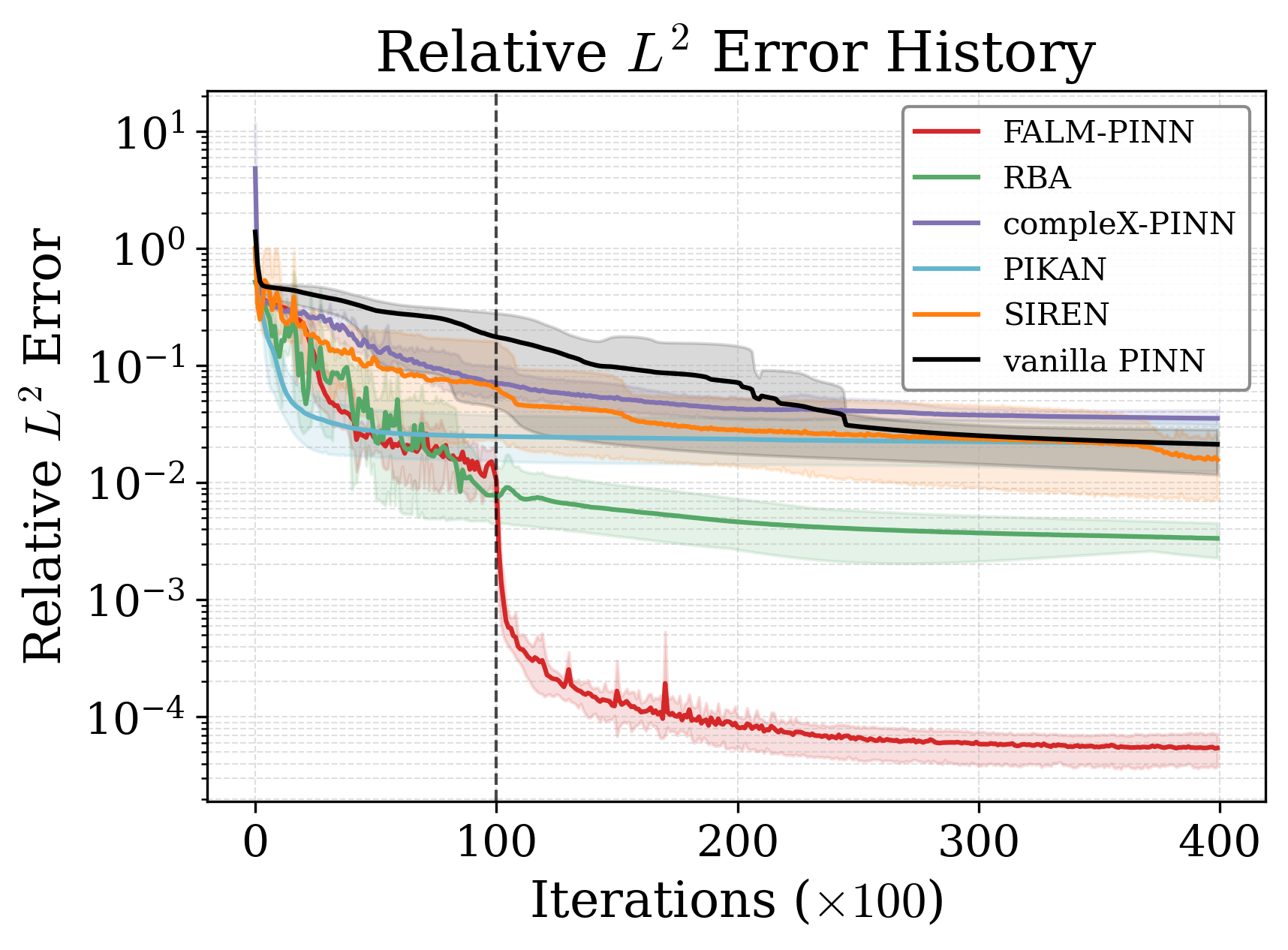}
    \caption{Relative $L^2$ error trajectories on the 1D viscous Burgers equation. Curves denote the mean across five independent trials, and shaded bands indicate the min--max range.}
    \label{fig:burgers1d_convergence}
\end{figure}

The reference solution and the corresponding point-wise absolute errors are presented in Figs.~\ref{fig:burgers1d} and~\ref{fig:burgers1d_heatmap}. FALM-PINN maintains lower errors across the domain, including the steep-gradient region that develops near $x=0$. These results indicate that the improvement is not limited to the global relative $L^2$ error but also appears in the localized region where the nonlinear convection term makes the solution difficult to approximate.

\begin{figure}[htbp]
    \centering
    \includegraphics[width=0.5\linewidth]{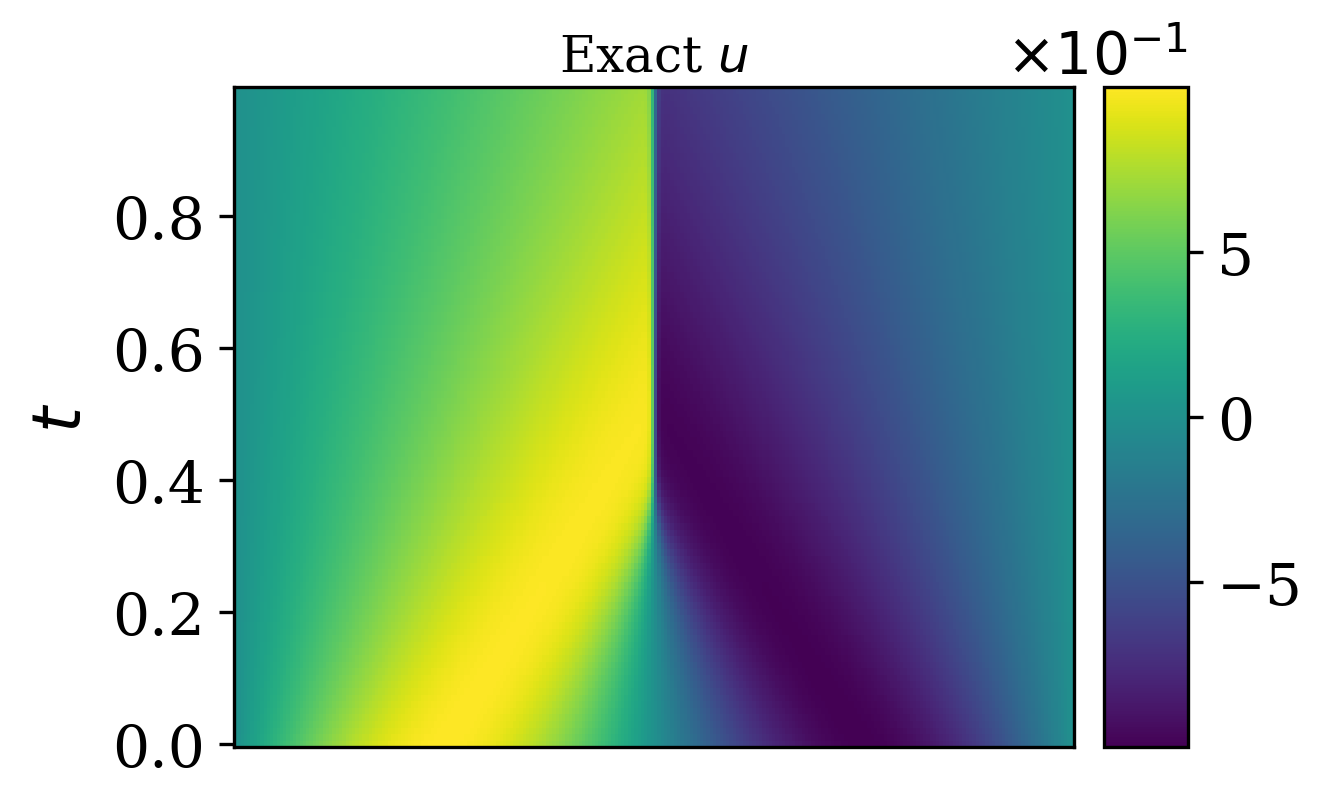}
    \caption{Reference solution of the 1D viscous Burgers equation.}
    \label{fig:burgers1d}
\end{figure}

\begin{figure}[htbp]
    \centering
    \includegraphics[width=0.99\linewidth, height=0.95\textheight, keepaspectratio]{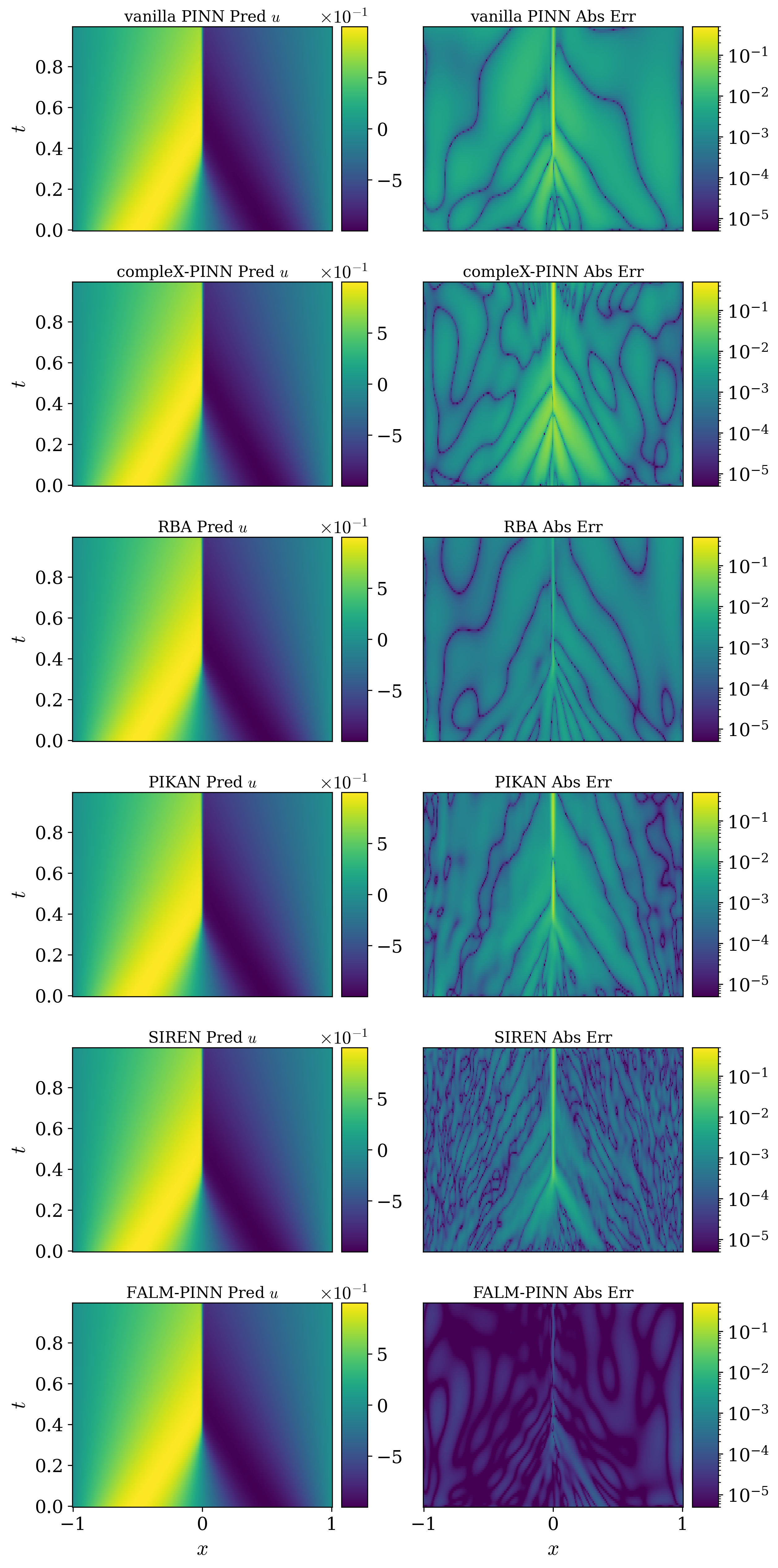}
    \caption{Predictions (left column) and error maps in log scale (right column) on the 1D viscous Burgers equation for vanilla PINN, RBA, compleX-PINN, PIKAN, SIREN and FALM-PINN respectively.}
    \label{fig:burgers1d_heatmap}
\end{figure}

To better understand the behavior of all predictions near the shock, we present temporal prediction slices for all methods at fixed spatial locations $x \in \{-0.1, 0, 0.1\}$ against the reference solution in Fig.~\ref{fig:burgers1d_slices}. FALM-PINN closely tracks the reference at all three locations throughout the temporal domain. In contrast, all baselines reproduce only the overall profile and exhibit visible deviations in the critical region near $t = 0.5$. These slices further validate the robustness of FALM-PINN in resolving nonlinear PDEs with shock structure.

\begin{figure}[htbp]
    \centering
   \includegraphics[width=0.99\linewidth]{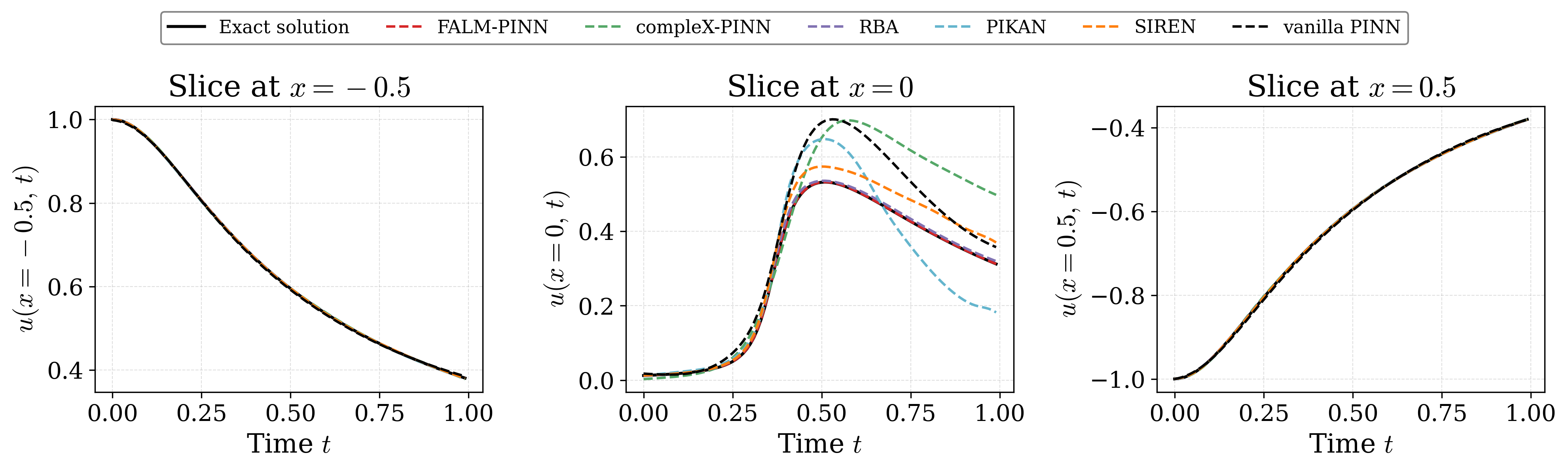}
   \caption{Prediction slices over time at fixed spatial positions $x \in \{-0.1, 0, 0.1\}$ on the 1D viscous Burgers equation, comparing all methods against the exact solution.}
    \label{fig:burgers1d_slices}
\end{figure}

\subsection{Comparison with IFeF-PINN}
\label{sec:compare_ifef}
Recall that IFeF-PINN is designed primarily for scalar, linear PDEs, where the lower-level problem is convex and admits a closed-form solution~\cite{wu2025iterative}. For nonlinear PDEs, this problem becomes nonconvex and no longer has a closed-form solution; IFeF-PINN therefore updates the coefficients to a local minimizer every $N_{\text{lower}}$ epochs using L-BFGS. As a generic optimizer, it captures the overall curvature of the loss but ignores the least-squares structure of the residual. To assess the effectiveness of the lower-level LM solver of FALM-PINN, we compare the two methods on the three nonlinear benchmarks (2D Klein–Gordon, 1D Korteweg--de Vries, and 1D viscous Burgers), keeping the network architecture, the Fourier-feature mapping, the training set, and all training hyperparameters identical.

Table~\ref{tab:compare_ifef} reports the final relative $L^2$ errors. FALM-PINN is one to two orders of magnitude more accurate than IFeF-PINN on every nonlinear benchmark. The L-BFGS lower-level of IFeF-PINN does reduce the loss, but it converges slowly and stalls at a substantially higher error, because it does not exploit the least-squares geometry that the LM update captures through the Gauss--Newton Hessian and adaptive damping. These results show that, for the nonlinear decoupled problem, the structure-exploiting LM solve is markedly more accurate than a generic gradient-based optimizer, even though both share the same Fourier-enhanced basis.

\begin{table}[htbp]
  \centering
  \caption{Comparison of IFeF-PINN and FALM-PINN on three scalar nonlinear PDE benchmarks. All settings other than the lower-level solver are kept identical. Entries report the final relative $L^2$ error (mean $\pm$ std over five independent runs).}
  \label{tab:compare_ifef}
  \setlength{\tabcolsep}{5pt}
  \renewcommand{\arraystretch}{1.1}
  \resizebox{\linewidth}{!}{%
  \begin{tabular}{lcc}
  \toprule
  Benchmark & IFeF-PINN & FALM-PINN \\
  \midrule
  2D Klein--Gordon    & $1.28\times10^{-3} \pm 1.87\times10^{-4}$ & $\mathbf{3.50\times10^{-5} \pm 4.88\times10^{-6}}$ \\
  1D Korteweg--de Vries   & $1.32\times10^{-2} \pm 4.28\times10^{-3}$ & $\mathbf{4.42\times10^{-4} \pm 1.81\times10^{-4}}$ \\
  1D viscous Burgers  & $2.46\times10^{-3} \pm 6.71\times10^{-4}$ & $\mathbf{5.10\times10^{-5} \pm 1.22\times10^{-5}}$ \\
  \bottomrule
  \end{tabular}%
  } %
\end{table}

\section{Conclusion and future work}
\label{sec:conclusion}
In this work, we proposed FALM-PINN, an alternating training framework
that decouples Fourier-enhanced basis learning from projection-coefficient
fitting. The upper-level problem learns an adaptive latent basis, while the
lower-level problem exploits the nonlinear least-squares structure of the
physics residual through damped Levenberg--Marquardt updates. We establish global convergence for this formulation, overcoming the limitation of conventional PINNs. Empirically, FALM-PINN significantly improves accuracy compared to state-of-the-art methods across a variety of high-frequency, nonlinear, and coupled benchmarks, achieving substantially lower
relative $L^2$ errors than the considered baselines.

A promising direction for future work is to develop adaptive strategies for selecting the Fourier bandwidth, thereby constructing a more compact basis without sacrificing approximation accuracy.
Combining FALM-PINN with adaptive resampling methods may further reduce the computational cost and memory consumption of the method. These
developments would improve the scalability of the framework to more complex
and high-dimensional PDEs. In parallel, integrating FALM-PINN with advanced
neural architectures or operator-learning frameworks offers further
opportunities to enhance its expressiveness and generalization capability.
Such extensions may lead to more accurate and reliable physics-informed
solvers and help narrow the performance gap between learning-based approaches
and classical numerical methods.

\appendix
\section{Worked example: scalar conservation laws}
\label{app:example}
To illustrate the assembly of the residual vector and its Jacobian in the
lower-level solver, we instantiate the framework of
Section~\ref{sec:lower_level} on the scalar conservation law:
\begin{equation}
\label{equ:conservation_law}
u_t + \bigl[P(u)\bigr]_x = 0,
\quad (t,x)\in(0,T]\times(0,1),
\end{equation}
subject to Dirichlet boundary conditions $u(t,0)=u(t,1) = 0$ and initial condition $u(0,x)=u_0(x)$, where $P:\mathbb{R}\to\mathbb{R}$ denotes a smooth nonlinear flux function. This scalar form covers a wide range of one-dimensional nonlinear conservation laws. Representative examples include $P(u)=\tfrac12 u^2$, which leads to the inviscid Burgers' equation, and $P(u) = u e^{-u}$, commonly used in traffic flow models.

For the scalar conservation law~\eqref{equ:conservation_law} with $m=M=1$, the coefficient vector reduces to $\beta\in\mathbb{R}^{2D}$
and the approximate solution at the $j$-th step is $u_j(x) = \psi_D(x)^\top\beta^{(j)}$.
By the chain rule $[P(u)]_x = P'(u)u_x$, the nonlinear PDE operator
at a collocation point $x_f^i$ gives:
\begin{equation}
\begin{aligned}
\label{equ:example_pde_eval}
\mathscr{N}[u_{\omega,\beta}](x_f^i)
&= (\partial_t\psi_D^i)^\top\beta^{(j)} + 
      P'(u_j^i)
      (\partial_x\psi_D^i)^\top\beta^{(j)},\\
\bm{M}_f(\beta) &= \big[ \mathscr{N}[u_{\omega,\beta}](x_f^1), \dots, \mathscr{N}[u_{\omega,\beta}](x_f^{N_f}) \big]^\top,
\end{aligned}
\end{equation}
where $\bm{M}_f(\beta)\in \mathbb{R}^{N_f}$ and we abbreviate $\psi_D^i = \psi_D(x_f^i)$, $u_j^i =
{\psi_D^i}^\top\beta^{(j)}$, and $\partial_t\psi_D^i$,
$\partial_x\psi_D^i$ denotes the partial
derivatives of the feature map. Differentiating $\mathscr{N}[u_{\omega,\beta}](x_f^i)$ with respect to $\beta$ yields the Jacobian matrix $\nabla_\beta \bm{M}_f(\beta^{(j)})\in \mathbb{R}^{N_f \times 2D}$:
\begin{equation}
\label{equ:example_jacobian}
\nabla_\beta \bm{M}_f(\beta^{(j)})
= \begin{bmatrix}
(\partial_t\psi_D^1)^\top
+ P'(u_j^1)(\partial_x\psi_D^1)^\top
+ P''(u_j^1)(\partial_x\psi_D^1)^\top\beta^{(j)}(\psi_D^1)^\top \\
\vdots \\
(\partial_t\psi_D^{N_f})^\top
+ P'(u_j^{N_f})(\partial_x\psi_D^{N_f})^\top
+ P''(u_j^{N_f})(\partial_x\psi_D^{N_f})^\top\beta^{(j)} (\psi_D^{N_f})^\top 
\end{bmatrix}.
\end{equation}
For the inviscid Burgers' flux $P(u)=\tfrac{1}{2}u^2$, we have $P'(u_j^i) = u_j^i$ and $P''(u_j^i) = 1$. Consequently, each row of the physics Jacobian $\nabla_\beta\bm{M}_f(\beta^{(j)})$ simplifies to:
\begin{equation}
\label{equ:burgers_jacobian_row}
    (\partial_t\psi_D^i)^\top + u_j^i(\partial_x\psi_D^i)^\top + \bigl((\partial_x\psi_D^i)^\top\beta^{(j)}\bigr)(\psi_D^i)^\top \in\mathbb{R}^{1\times 2D}.
\end{equation}
For the traffic flow model $P(u) = u e^{-u}$, we have $P'(u_j^i) = (1 - u_j^i)e^{-u_j^i}$ and $P''(u_j^i) = (u_j^i - 2)e^{-u_j^i}$. Accordingly, each row of $\nabla_\beta\bm{M}_f(\beta^{(j)})$ becomes:
\begin{equation}
\label{equ:traffic_jacobian_row}
(\partial_t\psi_D^i)^\top
+ (1 - u_j^i)e^{-u_j^i}(\partial_x\psi_D^i)^\top
+ (u_j^i - 2)e^{-u_j^i}\bigl((\partial_x\psi_D^i)^\top\beta^{(j)}\bigr)(\psi_D^i)^\top
\in\mathbb{R}^{1\times 2D}.
\end{equation}
For a linear flux $P(u)=au$, $P'(u_j^i)=a$ and $P''(u_j^i)=0$, so each row
becomes $(\partial_t\psi_D^i)^\top + a(\partial_x\psi_D^i)^\top$,
which is independent of~$\beta^{(j)}$.

With the Jacobian assembled, the update $\beta^{(j+1)} = \beta^{(j)} + \Delta\beta^{(j)}$ is obtained by solving the normal equation~\eqref{equ:lm_normal_eq}, where the Gram matrix expands as:
\begin{equation}
\label{equ:example_gram}
\mathbf{J}_j^\top\mathbf{J}_j
= \frac{1}{N_b}\bm{M}_b^\top\bm{M}_b
  + \frac{\lambda}{N_f}
    \bigl(\nabla_\beta\bm{M}_f(\beta^{(j)})\bigr)^\top
    \nabla_\beta\bm{M}_f(\beta^{(j)})
  \in\mathbb{R}^{2D\times 2D}.
\end{equation}
The boundary Gram matrix $\bm{M}_b^\top\bm{M}_b$ is constant, while the physics Gram matrix is recomputed at each iteration.

\bibliographystyle{elsarticle-num} 
\bibliography{reference}

\begin{thebibliography}{10}
\expandafter\ifx\csname url\endcsname\relax
  \def\url#1{\texttt{#1}}\fi
\expandafter\ifx\csname urlprefix\endcsname\relax\def\urlprefix{URL }\fi
\expandafter\ifx\csname href\endcsname\relax
  \def\href#1#2{#2} \def\path#1{#1}\fi

\bibitem{leveque2007finite}
R.~J. LeVeque, Finite difference methods for ordinary and partial differential equations: steady-state and time-dependent problems, SIAM, 2007.

\bibitem{zienkiewicz1977finite}
O.~C. Zienkiewicz, R.~L. Taylor, P.~Nithiarasu, J.~Zhu, The finite element method, Vol.~3, Elsevier, 1977.

\bibitem{ciarlet2002finite}
P.~G. Ciarlet, The finite element method for elliptic problems, SIAM, 2002.

\bibitem{brenner2008mathematical}
S.~C. Brenner, L.~R. Scott, The mathematical theory of finite element methods, Springer, 2008.

\bibitem{boyd2001chebyshev}
J.~P. Boyd, Chebyshev and Fourier spectral methods, Courier Corporation, 2001.

\bibitem{canuto2006spectral}
C.~Canuto, M.~Y. Hussaini, A.~Quarteroni, T.~A. Zang, Spectral methods, Vol. 285, Springer, 2006.

\bibitem{shen2011spectral}
J.~Shen, T.~Tang, L.-L. Wang, Spectral methods: algorithms, analysis and applications, Vol.~41, Springer Science \& Business Media, 2011.

\bibitem{weinan2003heterognous}
E.~Weinan, B.~Engquist, The heterognous multiscale methods, Communications in Mathematical Sciences 1~(1) (2003) 87--132.

\bibitem{abdulle2012heterogeneous}
A.~Abdulle, E.~Weinan, B.~Engquist, E.~Vanden-Eijnden, The heterogeneous multiscale method, Acta Numerica 21 (2012) 1--87.

\bibitem{efendiev2009multiscale}
Y.~Efendiev, T.~Y. Hou, Multiscale finite element methods: theory and applications, Springer Science \& Business Media, 2009.

\bibitem{iserles2005efficient}
A.~Iserles, S.~P. N{\o}rsett, Efficient quadrature of highly oscillatory integrals using derivatives, Proceedings of the Royal Society A: Mathematical, Physical and Engineering Sciences 461~(2057) (2005) 1383--1399.

\bibitem{raissi2019physics}
M.~Raissi, P.~Perdikaris, G.~E. Karniadakis, Physics-informed neural networks: A deep learning framework for solving forward and inverse problems involving nonlinear partial differential equations, Journal of Computational Physics 378 (2019) 686--707.

\bibitem{karniadakis2021physics}
G.~E. Karniadakis, I.~G. Kevrekidis, L.~Lu, P.~Perdikaris, S.~Wang, L.~Yang, Physics-informed machine learning, Nature Reviews Physics (2021).

\bibitem{hu2024tackling}
Z.~Hu, K.~Shukla, G.~E. Karniadakis, K.~Kawaguchi, Tackling the curse of dimensionality with physics-informed neural networks, Neural Networks 176 (2024) 106369.

\bibitem{costabal2024delta}
F.~S. Costabal, S.~Pezzuto, P.~Perdikaris, $\delta$-{PINNs}: Physics-informed neural networks on complex geometries, Engineering Applications of Artificial Intelligence 127 (2024) 107324.

\bibitem{raissi2020hidden}
M.~Raissi, A.~Yazdani, G.~E. Karniadakis, Hidden fluid mechanics: Learning velocity and pressure fields from flow visualizations, Science 367~(6481) (2020) 1026--1030.

\bibitem{jin2021nsfnets}
X.~Jin, S.~Cai, H.~Li, G.~E. Karniadakis, {NSFnets} ({Navier-Stokes} flow nets): Physics-informed neural networks for the incompressible {Navier-Stokes} equations, Journal of Computational Physics 426 (2021) 109951.

\bibitem{chen2020physics}
Y.~Chen, L.~Lu, G.~E. Karniadakis, L.~Dal~Negro, Physics-informed neural networks for inverse problems in nano-optics and metamaterials, Optics express 28~(8) (2020) 11618--11633.

\bibitem{yang2021b}
L.~Yang, X.~Meng, G.~E. Karniadakis, B-pinns: Bayesian physics-informed neural networks for forward and inverse pde problems with noisy data, Journal of Computational Physics 425 (2021) 109913.

\bibitem{bastek2025physics}
J.-H. Bastek, W.~Sun, D.~Kochmann, Physics-informed diffusion models, in: International Conference on Learning Representations, Vol. 2025, 2025, pp. 3360--3385.

\bibitem{wang2025source}
Z.~Wang, A.~Harting, M.~Barreau, M.~M. Zavlanos, K.~H. Johansson, Source-guided flow matching, arXiv preprint arXiv:2508.14807 (2025).

\bibitem{shu2023physics}
D.~Shu, Z.~Li, A.~B. Farimani, A physics-informed diffusion model for high-fidelity flow field reconstruction, Journal of Computational Physics 478 (2023) 111972.

\bibitem{li2021fourier}
Z.~Li, N.~B. Kovachki, K.~Azizzadenesheli, B.~Liu, K.~Bhattacharya, A.~Stuart, A.~Anandkumar, Fourier neural operator for parametric partial differential equations, in: International Conference on Learning Representations, 2021.

\bibitem{lu2021learning}
L.~Lu, P.~Jin, G.~Pang, Z.~Zhang, G.~E. Karniadakis, Learning nonlinear operators via {DeepONet} based on the universal approximation theorem of operators, Nature Machine Intelligence 3~(3) (2021) 218--229.

\bibitem{rahaman2019spectral}
N.~Rahaman, A.~Baratin, D.~Arpit, F.~Draxler, M.~Lin, F.~Hamprecht, Y.~Bengio, A.~Courville, On the spectral bias of neural networks, in: International Conference on Machine Learning, PMLR, 2019, pp. 5301--5310.

\bibitem{xu2025understanding}
Z.-Q.~J. Xu, L.~Zhang, W.~Cai, On understanding and overcoming spectral biases of deep neural network learning methods for solving {PDEs}, Journal of Computational Physics (2025) 113905.

\bibitem{krishnapriyan2021characterizing}
A.~Krishnapriyan, A.~Gholami, S.~Zhe, R.~Kirby, M.~W. Mahoney, Characterizing possible failure modes in physics-informed neural networks, Advances in Neural Information Processing Systems 34 (2021) 26548--26560.

\bibitem{wang2024respecting}
S.~Wang, S.~Sankaran, P.~Perdikaris, Respecting causality for training physics-informed neural networks, Computer Methods in Applied Mechanics and Engineering 421 (2024) 116813.

\bibitem{wang2022and}
S.~Wang, X.~Yu, P.~Perdikaris, When and why {PINNs} fail to train: A neural tangent kernel perspective, Journal of Computational Physics 449 (2022) 110768.

\bibitem{song2024loss}
Y.~Song, H.~Wang, H.~Yang, M.~L. Taccari, X.~Chen, Loss-attentional physics-informed neural networks, Journal of Computational Physics 501 (2024) 112781.

\bibitem{anagnostopoulos2024residual}
S.~J. Anagnostopoulos, J.~D. Toscano, N.~Stergiopulos, G.~E. Karniadakis, Residual-based attention in physics-informed neural networks, Computer Methods in Applied Mechanics and Engineering 421 (2024) 116805.

\bibitem{si2026convolution}
C.~Si, M.~Yan, Convolution-weighting method for the physics-informed neural network: A primal-dual optimization perspective, Journal of Computational Physics 555 (2026) 113911.

\bibitem{zhao2026casual}
C.~Zhao, X.~Xie, W.~Chen, Casual attention: Adaptive enforcement of causality in physics-informed neural networks, Journal of Computational Physics (2026) 115071.

\bibitem{wu2023comprehensive}
C.~Wu, M.~Zhu, Q.~Tan, Y.~Kartha, L.~Lu, A comprehensive study of non-adaptive and residual-based adaptive sampling for physics-informed neural networks, Computer Methods in Applied Mechanics and Engineering 403 (2023) 115671.

\bibitem{gao2023active}
W.~Gao, C.~Wang, Active learning based sampling for high-dimensional nonlinear partial differential equations, Journal of Computational Physics 475 (2023) 111848.

\bibitem{lau2024pinnacle}
G.~K.~R. Lau, A.~Hemachandra, S.-K. Ng, B.~K.~H. Low, {PINNACLE}: {PINN} adaptive collocation and experimental points selection, in: The Twelfth International Conference on Learning Representations, 2024.

\bibitem{tancik2020fourier}
M.~Tancik, P.~Srinivasan, B.~Mildenhall, S.~Fridovich-Keil, N.~Raghavan, U.~Singhal, R.~Ramamoorthi, J.~Barron, R.~Ng, Fourier features let networks learn high frequency functions in low dimensional domains, Advances in Neural Information Processing Systems 33 (2020) 7537--7547.

\bibitem{wang2021eigenvector}
S.~Wang, H.~Wang, P.~Perdikaris, On the eigenvector bias of {Fourier} feature networks: From regression to solving multi-scale {PDEs} with physics-informed neural networks, Computer Methods in Applied Mechanics and Engineering 384 (2021) 113938.

\bibitem{sitzmann2020implicit}
V.~Sitzmann, J.~Martel, A.~Bergman, D.~Lindell, G.~Wetzstein, Implicit neural representations with periodic activation functions, Advances in neural information processing systems 33 (2020) 7462--7473.

\bibitem{si2025complexphysicsinformedneuralnetwork}
C.~Si, M.~Yan, X.~Li, Z.~Xia, \href{https://arxiv.org/abs/2502.04917}{Complex physics-informed neural network} (2025).
\newblock \href {http://arxiv.org/abs/2502.04917} {\path{arXiv:2502.04917}}.
\newline\urlprefix\url{https://arxiv.org/abs/2502.04917}

\bibitem{zeng2026nurbs}
X.~Zeng, Y.~Zhu, A nurbs-based parameterization physics-informed neural network with an adaptive architecture for solving pdes, Journal of Computational Physics 562 (2026) 114980.

\bibitem{ZhaoEtAl24}
Z.~Zhao, X.~Ding, B.~A. Prakash, {PINNsFormer}: A transformer-based framework for physics-informed neural networks, in: The Twelfth International Conference on Learning Representations, 2024.

\bibitem{wang2025kolmogorov}
Y.~Wang, J.~Sun, J.~Bai, C.~Anitescu, M.~S. Eshaghi, X.~Zhuang, T.~Rabczuk, Y.~Liu, Kolmogorov--arnold-informed neural network: A physics-informed deep learning framework for solving forward and inverse problems based on kolmogorov--arnold networks, Computer Methods in Applied Mechanics and Engineering 433 (2025) 117518.

\bibitem{jagtap2020extended}
A.~D. Jagtap, G.~E. Karniadakis, Extended physics-informed neural networks (xpinns): A generalized space-time domain decomposition based deep learning framework for nonlinear partial differential equations, Communications in Computational Physics 28~(5) (2020).

\bibitem{wang2024piratenets}
S.~Wang, B.~Li, Y.~Chen, P.~Perdikaris, Piratenets: Physics-informed deep learning with residual adaptive networks, Journal of Machine Learning Research 25~(402) (2024) 1--51.

\bibitem{mustajab2024physics}
A.~H. Mustajab, H.~Lyu, Z.~Rizvi, F.~Wuttke, Physics-informed neural networks for high-frequency and multi-scale problems using transfer learning, Applied Sciences 14~(8) (2024) 3204.

\bibitem{wang2024multi}
Y.~Wang, C.-Y. Lai, Multi-stage neural networks: Function approximator of machine precision, Journal of Computational Physics 504 (2024) 112865.

\bibitem{cyr2020robust}
E.~C. Cyr, M.~A. Gulian, R.~G. Patel, M.~Perego, N.~A. Trask, Robust training and initialization of deep neural networks: An adaptive basis viewpoint, in: Mathematical and Scientific Machine Learning, PMLR, 2020, pp. 512--536.

\bibitem{huang2006extreme}
G.-B. Huang, Q.-Y. Zhu, C.-K. Siew, Extreme learning machine: theory and applications, Neurocomputing 70~(1-3) (2006) 489--501.

\bibitem{dwivedi2020physics}
V.~Dwivedi, B.~Srinivasan, Physics informed extreme learning machine (pielm)--a rapid method for the numerical solution of partial differential equations, Neurocomputing 391 (2020) 96--118.

\bibitem{dong2021local}
S.~Dong, Z.~Li, Local extreme learning machines and domain decomposition for solving linear and nonlinear partial differential equations, Computer Methods in Applied Mechanics and Engineering 387 (2021) 114129.

\bibitem{wu2025iterative}
Y.~Wu, M.~Aguiar, K.~H. Johansson, M.~Barreau, Iterative training of physics-informed neural networks with fourier-enhanced features, arXiv preprint arXiv:2510.19399 (2025).

\bibitem{levenberg1944method}
K.~Levenberg, A method for the solution of certain non-linear problems in least squares, Quarterly of applied mathematics 2~(2) (1944) 164--168.

\bibitem{marquardt1963algorithm}
D.~W. Marquardt, An algorithm for least-squares estimation of nonlinear parameters, Journal of the society for Industrial and Applied Mathematics 11~(2) (1963) 431--441.

\bibitem{wilson2016deep}
A.~G. Wilson, Z.~Hu, R.~Salakhutdinov, E.~P. Xing, Deep kernel learning, in: Artificial Intelligence and Statistics, PMLR, 2016, pp. 370--378.

\bibitem{rahimi2007random}
A.~Rahimi, B.~Recht, Random features for large-scale kernel machines, Advances in Neural Information Processing Systems 20 (2007).

\bibitem{kingma2014adam}
D.~P. Kingma, J.~Ba, Adam: A method for stochastic optimization, arXiv preprint arXiv:1412.6980 (2014).

\bibitem{coleman2013calculus}
R.~Coleman, Calculus on normed vector spaces, Choice Reviews Online (2013).

\bibitem{penwarden2023unified}
M.~Penwarden, A.~D. Jagtap, S.~Zhe, G.~E. Karniadakis, R.~M. Kirby, A unified scalable framework for causal sweeping strategies for physics-informed neural networks ({PINN}s) and their temporal decompositions, Journal of Computational Physics 493 (2023) 112464.
\newblock \href {https://doi.org/10.1016/j.jcp.2023.112464} {\path{doi:10.1016/j.jcp.2023.112464}}.

\bibitem{shannon2006communication}
C.~E. Shannon, Communication in the presence of noise, Proceedings of the IRE 37~(1) (2006) 10--21.

\bibitem{hao2024pinnacle}
Z.~Hao, J.~Yao, C.~Su, H.~Su, Z.~Wang, F.~Lu, Z.~Xia, Y.~Zhang, S.~Liu, L.~Lu, et~al., Pinnacle: A comprehensive benchmark of physics-informed neural networks for solving pdes, Advances in Neural Information Processing Systems 37 (2024) 76721--76774.

\end{thebibliography}

\end{document}